\documentclass{article}

 \usepackage[preprint]{neurips_2026}

\usepackage[utf8]{inputenc} 
\usepackage[T1]{fontenc}    
\usepackage{hyperref}       
\usepackage{url}            
\usepackage{booktabs}       
\usepackage{amsfonts}       
\usepackage{amssymb}
\usepackage{amsmath}
\usepackage{nicefrac}       
\usepackage{microtype}      
\usepackage{xcolor}         
\usepackage{graphicx}
\usepackage{subcaption}
\usepackage{multirow}
\usepackage{wrapfig}
\usepackage{amsthm}

\newtheorem{theorem}{Theorem}
\newtheorem{corollary}{Corollary}
\newtheorem{proposition}{Proposition}
\newtheorem{definition}{Definition}
\newtheorem{lemma}{Lemma}

\newcommand{\vz}{\mathbf{z}}
\newcommand{\vx}{\mathbf{x}}

\newcommand{\real}{\mathbb{R}}
\newcommand{\E}{\mathbb{E}}
\newcommand{\KL}{\mathrm{KL}}
\newcommand{\cL}{\mathcal{L}}

\title{MOSAIC: Module Discovery via Sparse Additive\\Identifiable Causal Learning for Scientific Time Series}

\author{
  Shicheng Fan$^{1}$\quad
  Nour Elhendawy$^{2}$\quad
  Jianle Sun$^{3}$\quad
  Ke Fang$^{1}$\\[0.3em]
  \textbf{Kun Zhang$^{3,4}$\quad
  Yihang Wang$^{2}$\quad
  Lu Cheng$^{1}$} \\[0.5em]
  $^{1}$University of Illinois Chicago\quad
  $^{2}$Case Western Reserve University\\[0.2em]
  $^{3}$Carnegie Mellon University\quad
  $^{4}$MBZUAI \\[0.3em]
  \texttt{\{sfan25, kfang10, lucheng\}@uic.edu}\quad
  \texttt{\{nae47, yihang.wang4\}@case.edu}\\[0.2em]
  \texttt{\{jianles, kunz1\}@cmu.edu}
}

\begin{document}

\maketitle

\footnotetext[1]{Open source code: \url{https://github.com/shichengf/mosaic}}

\begin{abstract}

Causal representation learning (CRL) seeks to recover latent variables with identifiability guarantees, typically up to permutation and component-wise reparameterization under appropriate assumptions. However, identifiability does not imply interpretability: latent semantics are typically assigned post hoc by alignment with known ground-truth factors. This limitation is particularly acute in scientific time series, where underlying mechanisms are unknown and discovering interpretable structure is a primary goal. In contrast, scientific observations (such as residue-pair distances, climate indices, or process sensors) are inherently semantic, as they correspond to named physical quantities. This raises a key question: \textit{can the interpretability of observations be transferred to the identifiable latent space?}
We propose \textbf{MOSAIC} (\textbf{Mo}dule discovery via \textbf{S}parse \textbf{A}dditive \textbf{I}dentifiable \textbf{C}ausal learning), a sparse temporal VAE that integrates temporal CRL identifiability with support recovery over observed variables. MOSAIC identifies latent variables via regime-conditioned temporal variation, and recovers for each latent a sparse set of associated observations through an additive decoder, yielding module-level interpretability. We show that ANOVA main-effect supports are identifiable under general smooth mixing functions, and provide finite-sample recovery guarantees for a tractable sparse-additive variant. Empirically, MOSAIC recovers domain-consistent variable groups across RNA molecular dynamics, solar wind, ENSO climate, the Tennessee Eastman process, and a synthetic tokamak benchmark, enabling interpretable discovery of latent mechanisms in scientific time series.
\end{abstract}

\section{Introduction}
\label{sec:intro}
Causal representation learning (CRL)~\citep{scholkopf2021toward, zhang2024causal} aims to recover latent variables uniquely up to trivial transformations, a property known as identifiability. In particular, temporal CRL under regime-conditioned dynamics~\citep{hyvarinen2016unsupervised, yao2022temporally} achieves identifiability up to permutation and component-wise transformations. However, identifiability alone does not guarantee scientific interpretability. In standard CRL settings, the meaning of learned latent variables is typically assigned post hoc by aligning them with known ground-truth factors, rather than being directly inferred by the model.

Time series in scientific domains present a complementary yet contrasting setting. Their observed variables are often named physical quantities (e.g., residue-pair distances in protein systems) and therefore inherently carry domain semantics. In contrast, the latents governing these observations correspond to underlying physical mechanisms that are typically unknown, and uncovering them is essential to scientific analysis. Existing scientific representation learning approaches (e.g.,\citep{Wang_2023}) leverage the semantic structure of observed variables, but do not provide identifiability guarantees for the uncovered latent mechanisms. As a result, scientific data offer interpretable observations but lack identifiable latents, whereas CRL provides identifiable latents without inherent scientific interpretation.

This suggests introducing CRL into scientific time series, but direct application of existing temporal CRL methods remains insufficient. Approaches such as TDRL~\citep{yao2022temporally, yao2021learning, song2024causal, fan2026trace} identify latent variables, but rely on dense neural decoders that do not reveal which named observations each latent controls. As a result, the semantics of scientific observations are not transferred to the latent space. 
To address this gap, we propose \textbf{MOSAIC} (\textbf{Mo}dule discovery via \textbf{S}parse \textbf{A}dditive \textbf{I}dentifiable \textbf{C}ausal learning), a sparse temporal VAE that integrates temporal CRL with support recovery over named observations. MOSAIC first identifies latent variables via regime-conditioned temporal variation, and then recovers, for each latent, a sparse set of associated observations through an additive decoder. This transforms otherwise anonymous but identifiable latents into physically interpretable latent mechanisms.

The key idea is that MOSAIC uses an additive decoder that structurally separates each latent's contribution to the observations: each latent controls its own small network mapping to the observation space, so the set of observations it influences is well-defined and readable. We show that even when the true mixing function is non-additive, this additive fit recovers the correct per-latent influence pattern (Proposition~\ref{prop:no_fp}), so the recovered supports are not artifacts of the modeling restriction. To scale to high-dimensional scientific data, MOSAIC also introduces a parallel transition prior removing the sequential bottleneck in prior temporal CRL~\citep{yao2022temporally,song2024causal,li2024identification}.

\vspace{-3pt}
\begin{figure}[t]
\centering
\includegraphics[width=1\textwidth]{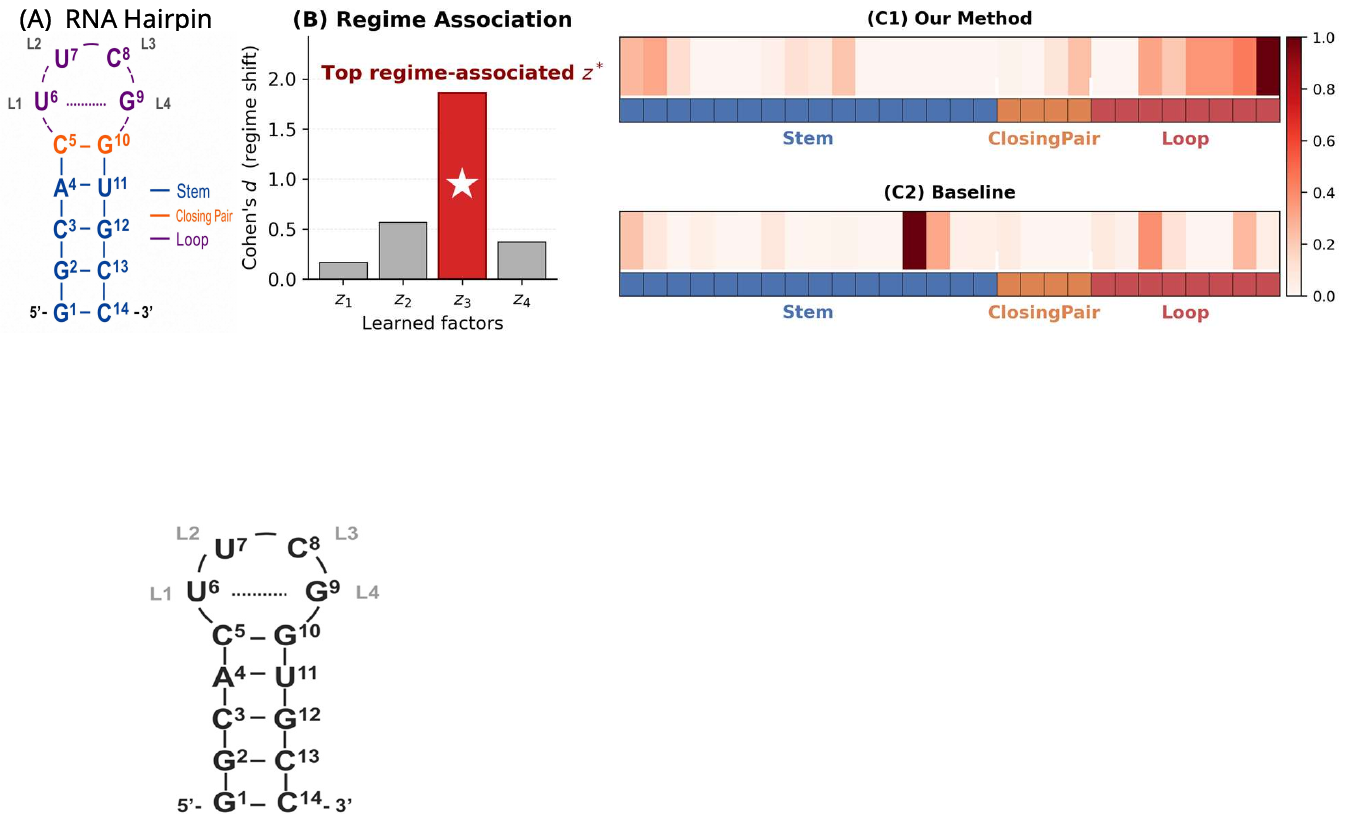}
\caption{\textbf{RNA hairpin example.} \textbf{(A)} The cUUCGg RNA hairpin; colors mark three structural regions (Sterm, Closing Pair, Loop). \textbf{(B)} Regime-association score (Cohen's $d$) over four learned latents; $z_3$ is selected. \textbf{(C1, C2)} Influence column of $z_3$ under MOSAIC (C1) and TDRL~\citep{yao2022temporally} (C2); colored stripes match the regions in (A).}
\label{fig:rna}
\vspace{-15pt}
\end{figure}

Figure~\ref{fig:rna} illustrates this on an RNA molecule that folds into a hairpin shape. Its 14 residues group into three structural regions visible in Figure~\ref{fig:rna}A: Stem (paired base), Closing Pair (hinge), and Loop (flexible tip). Each of the $D{=}28$ observed variables measures the distance behavior of one residue, so variable names carry physical meaning. MOSAIC ranks four learned latents by regime-association score (Figure~\ref{fig:rna}B) and selects $z_3$, which is anonymous until we inspect its influence column. Under MOSAIC (Figure~\ref{fig:rna}C1), influence concentrates on Loop residues, interpreting $z_3$ as tracking loop flexibility. Under TDRL (Figure~\ref{fig:rna}C2), influence spreads across all groups, leaving $z_3$ uninterpretable. The key point is general: an identifiable latent must be anchored to named observed variables to yield a scientific interpretation.

Our contributions are fourfold. (1)~\textbf{Problem formulation.} We formalize scientific time-series interpretability as recovering identifiable latent variables, their observed-variable supports, and the physical interpretation of regime-associated factors. (2)~\textbf{Theory.} We prove population recovery of ANOVA main-effect supports, module and regime-association identifiability at $\rho=0$, and a finite-sample bound for a group-lasso variant; for entropy regularization we give concentration and top-$k$ statements. (3)~\textbf{Architecture.} We introduce a sparse temporal VAE with two-stage curriculum and parallel transition prior. (4)~\textbf{Experiments.} MOSAIC recovers localized modules on a controlled synthetic benchmark, a UUCG-loop correlate in RNA dynamics (Figure~\ref{fig:rna}), and regime-associated factors on OMNI solar wind, ENSO climate, TEP, and a tokamak-inspired benchmark.

\section{Related Work}
\label{sec:related}
\textbf{Temporal causal representation learning.} Nonlinear ICA is unidentifiable without auxiliary information~\citep{hyvarinen1999nonlinear, locatello2019challenging}, and a now-mature literature restores identifiability through temporal structure~\citep{hyvarinen2016unsupervised, khemakhem2020variational, yao2022temporally, yao2021learning, brehmer2022weakly}, unknown nonstationarity~\citep{song2023temporally, song2024causal}, instantaneous dependence~\citep{li2024identification}, continuous mechanism evolution~\citep{fan2026trace}, or mechanism and structural sparsity in the latent space~\citep{lachapelle2022disentanglement, lachapelle2024nonparametric, zheng2022identifiability, zheng2023generalizing}. All of these methods identify latent variables while keeping the latent-to-observed mapping flat and unstructured; none produces a module partition over named observed variables. Our experimental CRL baselines (TDRL~\citep{yao2022temporally}, iVAE~\citep{khemakhem2020variational}, SlowVAE \citep{klindt2020towards}, $\beta$-VAE \citep{higgins2017betavae}, and CtrlNS~\citep{song2024causal}) span the most relevant settings: given clean regimes (TDRL, iVAE), weak temporal supervision (SlowVAE, $\beta$-VAE), and automatic regime discovery (CtrlNS); all exhibit the same support-recovery limitation in our experiments.

\textbf{Causal discovery for scientific time series.} Observed-space methods (Granger causality~\citep{granger1969investigating}, transfer entropy~\citep{schreiber2000measuring}, convergent cross mapping~\citep{sugihara2012detecting}, PCMCI~\citep{runge2019pcmci}; survey in~\citep{gong2024causal}) recover pairwise dependencies between named variables, and LPCMCI~\citep{gerhardus2020high} detects latent confounders without naming them; recent work recovers latent causal graphs from spatiotemporal data~\citep{wang2025spacy} but does not solve the latent-to-observed support problem. These methods are complementary to MOSAIC: pairwise causal-discovery tools can operate within or between the modules MOSAIC identifies at reduced dimensionality.

\textbf{Scientific representation learning and sparse statistical methods.} VAMPnets~\citep{mardt2018vampnets}, SPIB~\citep{wang2021state}, and TICA~\citep{perez2013tica} produce useful low-dimensional summaries of scientific time series, but without identifiability, without explicit latent-to-observed support, and without a regime-association interpretation protocol. SPIB is our experimental baseline because it explicitly incorporates regime labels into its learned representation; VAMPnets and TICA target slow-mode kinetics rather than regime contrast. Sparse PCA~\citep{zou2006sparse} and ICA~\citep{hyvarinen2000independent} yield readable sparse loadings under near-linear mixing, but a single linear projection cannot fit both saturating and expanding nonlinear responses from the same factor, and they lack temporal or regime-conditioned structure.
\section{Problem Formulation}
\label{sec:problem}


We work in a setting motivated by time series in scientific domains, where each observed variable is a scientific quantity carrying domain semantics. At each time step, we observe $\vx_t \in \real^{D}$, where every $x_i$ corresponds to one such named variable. This semantic structure on $\vx_t$ enables interpretation of recovered latents through their observed-variable supports.

Let $\vx_t$ be generated by $n$ latent factors $\vz_t \in \real^n$ through a smooth, injective mixing function $g(\cdot)$,
\begin{equation}
    \vx_t = g(\vz_t) + \boldsymbol{\epsilon}_t, \qquad g: \real^n \to \real^{D}, \qquad \boldsymbol{\epsilon}_t \sim \mathcal{N}(\mathbf{0}, \sigma_\epsilon^2 \mathbf{I}),
    \label{eq:mixing}
\end{equation}
with no additivity restriction on $g(\cdot)$. The latent modules follow a regime-dependent finite-lag transition model
\begin{equation}
    z_{t,j} \mid \vz_{t-1:t-L}, c_t \sim p_j(z_{t,j} \mid \vz_{t-1:t-L}, c_t),
    \label{eq:transition}
\end{equation}
where $c_t \in \{0, 1\}$ indicates the regime (e.g., folded vs.\ unfolded in RNA dynamics). Modules whose transition dynamics differ across regimes are \emph{regime-varying}; those with invariant dynamics are \emph{regime-invariant}.

\textbf{Main-effect supports as identifiable targets.} Because $g(\cdot)$ need not be additive, a single variable $x_i$ may depend on a latent factor $z_j$ in a way that is irreducibly entangled with other factors. To define a recoverable target, we work under a product reference measure $\mu(\vz)=\prod_{j=1}^{n}\mu_j(z_j)$ over normalized latent factors, the standard setting for Hoeffding ANOVA~\citep{hoeffding1992class}. Under this reference measure, any square-integrable $g$ admits the decomposition
\begin{equation}
    g(\vz) = g_0 + \underbrace{\sum_{j=1}^{n} g_j(z_j)}_{\text{main effects}} + \underbrace{\textstyle\sum_{j<k} g_{jk}(z_j, z_k) + \cdots}_{r(\vz)\text{: high-order interactions}},
    \label{eq:anova}
\end{equation}
where $g_0 = \E_{\mu}[g(\vz)]$, each main effect $g_j: \real \to \real^D$ captures the marginal contribution of $z_j$ (with per-coordinate components $g_j^{(i)}: \real \to \real$ for $i \in [D]$), and $r(\vz)$ collects all higher-order interactions. The expansion is $L^2(\mu)$-orthogonal and uniquely determined by the centering constraints $\E_{\mu_j}[g_j(z_j)]=0$. MOSAIC's decoder targets the main effects $\{g_j\}$, a modeling choice analogous to Sparse PCA targeting the best linear projection. Section~\ref{sec:identifiability} shows the population-optimal additive fit recovers each $g_j$ exactly.

For analysis, we quantify how much of the signal lives in interactions through the \emph{interaction ratio}
\begin{equation}
\rho \;=\; \E_{\mu}\|r(\vz)\|^2 \big/ \E_{\mu}\|g(\vz) - g_0\|^2 \;\in\; [0, 1],
\label{eq:rho}
\end{equation}
with $\rho = 0$ recovering exactly additive mixing. This quantity appears in the finite-sample bound of Section~\ref{sec:identifiability}. Unless otherwise stated, population risks and variances in Section~\ref{sec:identifiability} are evaluated under this product reference measure and the independent observation-noise law.

\begin{definition}[Main-Effect Support Structure]
\label{def:support}
The main-effect support of the $j$-th component is $\mathcal{S}_j = \{i \in [D] : g_j^{(i)} \text{ is non-constant}\}$. When the $\{\mathcal{S}_j\}$ are mutually disjoint, they define a \emph{module partition} of the observed variables, assigning each named $x_i$ to a unique latent factor $z_j$. At $\rho = 0$ these coincide with the full supports of $g$; when $\rho > 0$, variables whose dependence on $z_j$ lives entirely in higher-order terms lie outside $\mathcal{S}_j$.
\end{definition}

\textbf{Recovery goals.} From $\{(\vx_t, c_t)\}$ we aim to recover (i) the latent factors $\vz_t$ up to permutation and component-wise invertible transformation, (ii) the main-effect support $\mathcal{S}_j$ of each recovered latent factor, and (iii) the identity of the latent factor most associated with the regime contrast.

\section{Identifiability Theory}
\label{sec:identifiability}

This section establishes that the recovery goals of Section~\ref{sec:problem} are achievable in the appropriate sense: main-effect supports are identifiable for any smooth mixing function (Proposition~\ref{prop:no_fp}), and at $\rho=0$ the entire module partition together with the regime-associated identity is exactly recoverable (Theorem~\ref{thm:module} and Corollary~\ref{cor:driver}). For finite samples, we give a $O(N^{-4/5})$ estimation rate for the main effects (Proposition~\ref{prop:finite_sample_main_effect}) and a per-channel sample-complexity bound for support recovery (Theorem~\ref{thm:finite}).

\textbf{Terminology.} We use \emph{regime-discriminative} and \emph{regime-associated} interchangeably for the learned factor selected by the regime-discrepancy score, and \emph{regime-varying} for the ground-truth factor whose dynamics differ across regimes.

\textbf{Assumptions.}
(A1) Non-degenerate main effects, $|\mathcal{S}_j|\geq 1$ for all $j$. 
(A1$'$) Disjoint supports, $\mathcal{S}_j \cap \mathcal{S}_k = \emptyset$ for $j \neq k$, used when claiming a module partition or sparse-additive support recovery. 
(A2) Temporal sufficient variability~\citep{yao2022temporally}: the lag-vector cross-derivative matrix has full row rank, providing the auxiliary variation that identifies the latent factors up to permutation and component-wise reparametrization.

Proofs and the shared-support assignment rule (covering $|\mathcal{J}_i| \geq 2$) are in Appendix~\ref{app:theory}.

\subsection{Population-Level Identifiability}

\begin{proposition}[Main-Effect Identifiability Without False Positives]
\label{prop:no_fp}
For any smooth and injective $g$, the centered additive fit $\hat g(\vz) = \sum_j \hat f_j(z_j) + \hat{\mathbf{b}}$ minimizing the $L^2(\mu)$ population risk $\E_{\mu}\|\vx - \hat g(\vz)\|^2$ subject to the centering constraints $\E_{\mu_j}[\hat f_j(z_j)] = 0$ for all $j$, satisfies $\hat f_j = g_j$ almost surely for all $j$. Hence $\hat{\mathcal{S}}_j = \mathcal{S}_j \subseteq \mathcal{S}_j^{\mathrm{full}}$ is independent of $\rho$, where $\mathcal{S}_j^{\mathrm{full}}$ denotes the full support of $g$ in $z_j$.
\end{proposition}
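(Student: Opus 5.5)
The plan is to view the constrained population problem as an orthogonal projection in $L^2(\mu)$ and then invoke the Hoeffding decomposition~\eqref{eq:anova}.

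\textbf{Step 1: remove the noise.} Write $\vx = g(\vz) + \boldsymbol{\epsilon}$, with $\boldsymbol{\epsilon}$ independent of $\vz$ and mean zero. Expanding the risk, the cross term vanishes, giving
\[
\E\|\vx - \hat g(\vz)\|^2 = \E_\mu\|g(\vz) - \hat g(\vz)\|^2 + D\sigma_\epsilon^2 .
\]
So it suffices to minimize $\E_\mu\|g - \hat g\|^2$ over the class $\mathcal{A} = \{\mathbf{b} + \sum_j f_j(z_j) : \E_{\mu_j} f_j = 0,\ f_j \in L^2(\mu_j)\}$. The problem also decouples over the output coordinates $i\in[D]$, so I will argue coordinatewise for scalar functions.

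\textbf{Step 2: orthogonality.} Under the product measure $\mu$, the subspaces $\mathcal{H}_0$ (constants) and $\mathcal{H}_j$ (centered functions of $z_j$ alone) are mutually orthogonal. For $j\neq k$, $\E_\mu[f_j(z_j)f_k(z_k)] = \E f_j\,\E f_k = 0$ by independence. Next, each interaction term $g_{jk\ldots}$ in~\eqref{eq:anova} is orthogonal to every $\mathcal{H}_j$ and to constants. This is exactly the Hoeffding property: each interaction term has zero conditional mean when integrated over any one of its own arguments. Take $h(z_j)$ centered. If $j$ is among the interaction's arguments, integrate over some other argument $k$ first. If $j$ is not among them, integrate over any argument. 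In either case the inner product is zero. Hence $r \perp \mathcal{A}$, while $g_0 + \sum_j g_j \in \mathcal{A}$.

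\textbf{Step 3: Pythagoras and uniqueness.} For any $\hat g = \hat{\mathbf{b}} + \sum_j \hat f_j \in \mathcal{A}$,
\[
\E_\mu\|g-\hat g\|^2 = \|g_0-\hat{\mathbf{b}}\|^2 + \sum_j \E_{\mu_j}\|g_j-\hat f_j\|^2 + \E_\mu\|r\|^2 ,
\]
using orthogonality of the pieces (the first two groups of terms also split by Step 2). The minimum equals $\E_\mu\|r\|^2$ and is attained iff $\hat{\mathbf{b}}=g_0$ and $\hat f_j=g_j$ in $L^2(\mu_j)$, i.e.\ $\mu_j$-a.s. This uniqueness is where the centering constraint matters: without it, constants could be shuffled between the $f_j$ and $\mathbf{b}$.

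\textbf{Step 4: supports.} Step 3 gives $\hat{\mathcal S}_j=\mathcal S_j$, reading non-constancy modulo null sets; this matches Definition~\ref{def:support}, and smooth representatives can be chosen. The argument never used $\rho$, so the support is independent of it.

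For the inclusion $\mathcal S_j\subseteq \mathcal S_j^{\mathrm{full}}$, argue the contrapositive. Suppose $g^{(i)}$ does not depend on $z_j$. Then
\[
g_j^{(i)}(z_j) = \E_\mu[g^{(i)}\mid z_j] - g_0^{(i)}
\]
is constant, so $i\notin\mathcal S_j$. This gives the ``no false positives'' claim. Smoothness and injectivity of $g$ serve only to guarantee square integrability (assuming bounded or suitably integrable $\mu$) and are not otherwise needed.

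The main obstacle is Step 2: verifying cleanly, from the centering conventions alone, that every higher-order Hoeffding term is orthogonal to all univariate centered functions. I would state the standard conditional-mean-zero property of the Hoeffding components (citing \citep{hoeffding1992class}) and derive the orthogonality by Fubini.
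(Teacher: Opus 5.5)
Your proposal is correct and follows essentially the same route as the paper: Hoeffding $L^2(\mu)$-orthogonality splits the population risk into noise, $\sum_j \E_{\mu_j}\|g_j-\hat f_j\|^2$, and the irreducible $\E_\mu\|r\|^2$, which is then minimized termwise. Your version is slightly more complete. You keep the intercept term $\|g_0-\hat{\mathbf b}\|^2$, which the paper's displayed decomposition drops, and you actually prove $\mathcal S_j\subseteq\mathcal S_j^{\mathrm{full}}$ via $g_j^{(i)}=\E_\mu[g^{(i)}\mid z_j]-g_0^{(i)}$, where the paper only asserts it. One small correction: injectivity has nothing to do with square integrability, since $g\in L^2(\mu)$ is simply assumed in the setup.
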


\textbf{Remark.}
Proposition~\ref{prop:no_fp} is stated under the product reference measure $\mu = \prod_j \mu_j$. In practice, Stage~2 optimizes under the empirical encoder distribution, which approximates a product measure to the extent that Stage~1 identifiability succeeds. The gap between the two measures is absorbed by the interaction term $r(\vz)$ and does not alter the main-effect targets $\{g_j\}$; see Appendix~\ref{app:product_measure_gap} for details.

\begin{theorem}[Module Structure Identifiability at $\rho = 0$]
\label{thm:module}
If two models $(g, p)$ and $(\tilde g, \tilde p)$ both satisfy A1 and A2 with $\rho = 0$ and induce the same observational distribution $p(x_{1:T}, c_{1:T})$, then there exists a permutation $\pi$ such that $\tilde{\mathcal{S}}_k = \mathcal{S}_{\pi(k)}$ for all $k$. If additionally A1$'$ holds, then the $\{\mathcal{S}_k\}$ form a module partition and this partition is identifiable.
\end{theorem}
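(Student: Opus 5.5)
The plan is to reduce module identifiability to latent identifiability under A2 and then transport supports through the resulting component-wise reparametrization.

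\textbf{Step 1: identify the latents.} Since both models induce the same $p(x_{1:T},c_{1:T})$ and satisfy A2, the temporal identifiability result of \citet{yao2022temporally} applies. Gaussian noise is removed by deconvolution, so the noiseless pushforwards coincide. The result then gives a permutation $\pi$ and invertible scalar maps $h_k$ with $\tilde z_k = h_k(z_{\pi(k)})$, and $\tilde g = g\circ h^{-1}$ on the support of the latent law, where $h^{-1}$ is the component-wise inverse composed with $\pi^{-1}$.

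\textbf{Step 2: use $\rho=0$ and compare supports.} With $\rho=0$, write $g(\vz)=g_0+\sum_j g_j(z_j)$. Then
\[
\tilde g(\tilde \vz)=g_0+\sum_k g_{\pi(k)}\bigl(h_k^{-1}(\tilde z_k)\bigr).
\]
This is again an additive function of the $\tilde z_k$, each summand depending on a single coordinate. Coordinate $i$ of the $k$-th summand is non-constant exactly when $g^{(i)}_{\pi(k)}$ is non-constant, because $h_k^{-1}$ is a bijection onto the relevant range.

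By uniqueness of the additive (Hoeffding) decomposition up to additive constants, $\tilde g_k$ equals $g_{\pi(k)}\circ h_k^{-1}$ minus its $\tilde\mu_k$-mean. Constants do not affect whether a component is non-constant. Hence $\tilde{\mathcal S}_k=\mathcal S_{\pi(k)}$.

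Under A1$'$ the $\mathcal S_j$ are disjoint. A1 guarantees that each is nonempty, so the collection is a module partition of the non-degenerate coordinates. Since the index sets are matched by a bijection $\pi$, the unordered partition $\{\mathcal S_k\}$ is the same for both models, which proves identifiability.

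\textbf{Main obstacle.} The delicate step is the uniqueness claim in Step 2. Two issues arise.

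First, the ANOVA decomposition of $\tilde g$ is defined under the product reference measure $\tilde\mu$, which may differ from the pushforward of $\mu$ under $h$. Support, however, is measure-free in the additive case. If $\tilde g(\tilde\vz)=\tilde g_0+\sum_k \phi_k(\tilde z_k)$ holds on a product domain, then $\phi_k$ is determined up to a constant by fixing the other coordinates: $\phi_k(a)-\phi_k(b)=\tilde g(\dots,a,\dots)-\tilde g(\dots,b,\dots)$. So non-constancy of each coordinate does not depend on the choice of centering measure, and a pure exactly-additive function remains exactly additive in the new coordinates, so $\tilde\rho=0$ automatically.

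Second, I would check that the latent domains are product sets, or at least connected rectangles, so that this "fix the other coordinates" argument is valid. If they are not, I would state this as a regularity condition implicit in taking $\mu=\prod_j\mu_j$.
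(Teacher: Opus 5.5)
Your proposal is correct and matches the paper's proof step for step: latent identifiability from A2 via \citet{yao2022temporally}, then uniqueness of the additive decomposition up to constants after the component-wise reparametrization, then transport of non-constancy through the invertible scalar maps. Your measure-free finite-difference remark and the product-domain caveat tighten a point the paper handles by working in the original $\vz$-coordinates under $\mu$ and absorbing the component means into the intercept.
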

At $\rho=0$, Theorem~\ref{thm:module} sharpens Proposition~\ref{prop:no_fp}: the entire module partition is identifiable, not just individual supports; when supports overlap ($|\mathcal{J}_i| \geq 2$), strict partition breaks down but support recovery remains sound:

\begin{proposition}[Shared-Support Assignment Under Overlap]
\label{prop:partial_overlap}
Let $g$ satisfy $\rho = 0$ and define $\mathcal{J}_i=\{m\in[n]: i\in\mathcal{S}_m\}$. If $|\mathcal{J}_i|\geq2$, then the population additive fit satisfies $\hat f_m^{(i)} = g_m^{(i)}$ for every $m\in\mathcal{J}_i$, and any hard-assignment rule $\arg\max_m A_{im}$ assigns $i$ to a factor in $\mathcal{J}_i$ without false positives.
\end{proposition}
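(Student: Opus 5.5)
The plan is to reduce the claim to Proposition~\ref{prop:no_fp}, which already handles arbitrary smooth $g$. Under $\rho=0$ we have $g(\vz)=g_0+\sum_m g_m(z_m)$ exactly. The population additive fit is the $L^2(\mu)$ projection of $g$ onto centered additive functions. By Proposition~\ref{prop:no_fp}, this fit returns $\hat f_m = g_m$ for every $m$, and in particular coordinatewise $\hat f_m^{(i)} = g_m^{(i)}$ for all $m$ and $i$.

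One detail needs checking: the claim should hold coordinatewise even when several main effects share coordinate $i$. Because the risk $\E_\mu\|\vx-\hat g(\vz)\|^2$ splits as a sum over coordinates $i$, the problem decouples into $D$ scalar problems. For a fixed coordinate, the Hoeffding subspaces $\{h(z_m):\E_{\mu_m}h=0\}$ are mutually orthogonal under the product measure, since $\E_\mu[h(z_m)h'(z_k)]=\E h\,\E h'=0$ for $m\neq k$. The scalar projection therefore separates into independent projections onto each subspace. This gives $\hat f_m^{(i)}=\E_\mu[g^{(i)}\mid z_m]-g_0^{(i)}=g_m^{(i)}$ for every $m$, so overlap $|\mathcal{J}_i|\ge 2$ causes no cross-talk between factors. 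The noise $\boldsymbol\epsilon_t$ is independent with mean zero, so it only adds a constant $\sigma_\epsilon^2 D$ to the risk.

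For the assignment claim, I will take $A_{im}$ to be the influence score used by MOSAIC, for instance $A_{im}=\mathrm{Var}_{\mu_m}(\hat f_m^{(i)}(z_m))$ or a normalized version of it. The only property needed is that $A_{im}=0$ exactly when $\hat f_m^{(i)}$ is $\mu_m$-a.s.\ constant, and $A_{im}>0$ otherwise.

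By the previous step, $A_{im}>0$ if and only if $g_m^{(i)}$ is non-constant, that is, if and only if $m\in\mathcal{J}_i$. Since $|\mathcal{J}_i|\ge 2>0$, the maximum of $A_{i\cdot}$ is strictly positive, so every maximizer lies in $\mathcal{J}_i$. Hence any tie-breaking rule returns some factor in $\mathcal{J}_i$, and no $m\notin\mathcal{J}_i$ is selected.

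The main obstacle is pinning down the influence score $A_{im}$. If it is defined instead through Jacobian norms such as $\E|\partial \hat f_m^{(i)}/\partial z_m|$, I would argue the same equivalence from smoothness: a smooth function with zero derivative a.e.\ on a connected support is constant.
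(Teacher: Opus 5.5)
Your proposal is correct and follows essentially the same route as the paper: per-channel $L^2(\mu)$-orthogonality of the Hoeffding subspaces reduces the fit for channel $i$ to independent projections giving $\hat f_m^{(i)} = g_m^{(i)}$, and zero scores off $\mathcal{J}_i$ then keep the argmax inside $\mathcal{J}_i$. The one difference is that the paper assumes the in-support maximum is strictly positive, because MOSAIC's default finite contrast $|f_m(+1)_i - f_m(-1)_i|$ can vanish for symmetric responses; you get this for free by choosing a faithful score such as the variance score, which makes the assignment claim unconditional for that score.
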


\begin{definition}[Regime-Discriminative Latent]
\label{def:driver}
Let $\Delta_j=\mathrm{KL}\bigl(p(z_j \mid c{=}0) \,\|\, p(z_j \mid c{=}1)\bigr)$. The set of regime-discriminative latents is $\arg\max_j \Delta_j$; when this maximizer is unique, we denote it $j^*$.
\end{definition}

\begin{corollary}[Regime-Association Identifiability]
\label{cor:driver}
Under the conditions of Theorem~\ref{thm:module} with a unique maximizer, $\tilde j^* = \pi^{-1}(j^*)$ by KL invariance under invertible maps. Without uniqueness, the entire maximizer set is mapped by $\pi^{-1}$.
\end{corollary}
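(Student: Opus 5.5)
The plan is to push everything through the component-wise identifiability that Theorem~\ref{thm:module} already delivers, and then use the invariance of KL divergence under invertible maps.

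\textbf{Step 1: the latent map.} Under A1, A2 and equality of the observational laws, A2 gives the standard temporal CRL conclusion invoked in Theorem~\ref{thm:module}. There is a permutation $\pi$ and invertible scalar maps $h_k$ with $\tilde z_k = h_k(z_{\pi(k)})$. Because the observational law $p(x_{1:T},c_{1:T})$ is shared, this map holds jointly with the regime label $c$. Consequently, for each $c\in\{0,1\}$, the conditional law $\tilde p(\tilde z_k\mid c)$ is the pushforward of $p(z_{\pi(k)}\mid c)$ under the same $h_k$. Note that $h_k$ does not depend on $c$.

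\textbf{Step 2: KL invariance.} Since $h_k$ is a measurable bijection, and a diffeomorphism under the smoothness assumptions, the data-processing inequality applied to both $h_k$ and $h_k^{-1}$ gives equality:
\[
\tilde\Delta_k=\mathrm{KL}\bigl(h_{k\#}p(z_{\pi(k)}\mid c{=}0)\,\|\,h_{k\#}p(z_{\pi(k)}\mid c{=}1)\bigr)=\Delta_{\pi(k)} .
\]
Equivalently, a change of variables in the density ratio gives the same result, because the Jacobian factors cancel. Thus the vector $(\tilde\Delta_k)_k$ is just a reindexing of $(\Delta_j)_j$ by $\pi$.

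\textbf{Step 3: maximizers.} Any reindexing carries argmax sets to argmax sets: $\arg\max_k\tilde\Delta_k=\pi^{-1}(\arg\max_j\Delta_j)$. This gives the non-unique statement. When the maximizer is unique, it yields $\tilde j^*=\pi^{-1}(j^*)$. Infinite values of KL are handled by the same equality, read in the extended reals.

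\textbf{Main obstacle.} The delicate point is Step 1. We must justify that the component-wise maps $h_k$ are the same in both regimes, i.e.\ that identifiability holds for the joint law of $(z,c)$ and not only regime by regime. I would obtain this from the fact that the decoder $g$ is regime-independent and the observations are shared. Conditioning the observational law on $c$ then gives $\tilde g^{-1}\circ g$ as a single map valid for both regimes, and A2 forces this map to be a permuted component-wise transformation. A secondary technicality is whether the observation noise $\boldsymbol{\epsilon}_t$ obstructs the argument. I would remove it by standard deconvolution: since the Gaussian characteristic function is nonvanishing, the noiseless law of $g(\vz)$ given $c$ is identified.
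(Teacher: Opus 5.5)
Your proposal is correct and follows the paper's proof. You take the permuted component-wise map $\tilde z_k = \phi_{\pi(k)}(z_{\pi(k)})$ from Step~1 of Theorem~\ref{thm:module}, use invariance of KL under invertible maps to get $\tilde\Delta_k = \Delta_{\pi(k)}$, and conclude that the argmax set is carried by $\pi^{-1}$. Your extra care about the map being regime-independent (it is the single map $\tilde g^{-1}\circ g$) and about Gaussian deconvolution makes explicit what the paper leaves inside its citation of the temporal identifiability result.
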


The theory uses KL for invariance under component-wise invertible reparameterizations; in experiments, we use Cohen's $d$ as a stable finite-sample proxy for mean shifts, with this substitution validated empirically in Appendix~\ref{app:cohen_kl}.

\subsection{Finite-Sample Behavior}
Estimation efficiency is an active concern in CRL. Our population-level results above hold for any smooth estimator, while finite-sample guarantees are estimator-specific: for the classical group-lasso sparse additive estimator, we record a per-channel support-recovery bound in Appendix~\ref{app:finite_sample} (Theorem~\ref{thm:finite}), where $D$ channels share the same $N$ design points so the sample complexity is $D$-free. MOSAIC instead uses entropy regularization, whose finite-sample behavior we characterize empirically. We sweep $N$ relative to $D$ and interaction strength $\rho$ on the synthetic benchmark (Appendices~\ref{app:synthetic:nd},~\ref{app:synthetic:interactions}) and calibrate $\hat\rho$ on real data (Appendix~\ref{app:rho_calibration}); Section~\ref{sec:cross_domain} discusses how physical correlations among observed variables reduce effective dimensionality below nominal $D$.

\vspace{-8pt}
\begin{figure}[t]
\centering
\includegraphics[width=.85\textwidth]{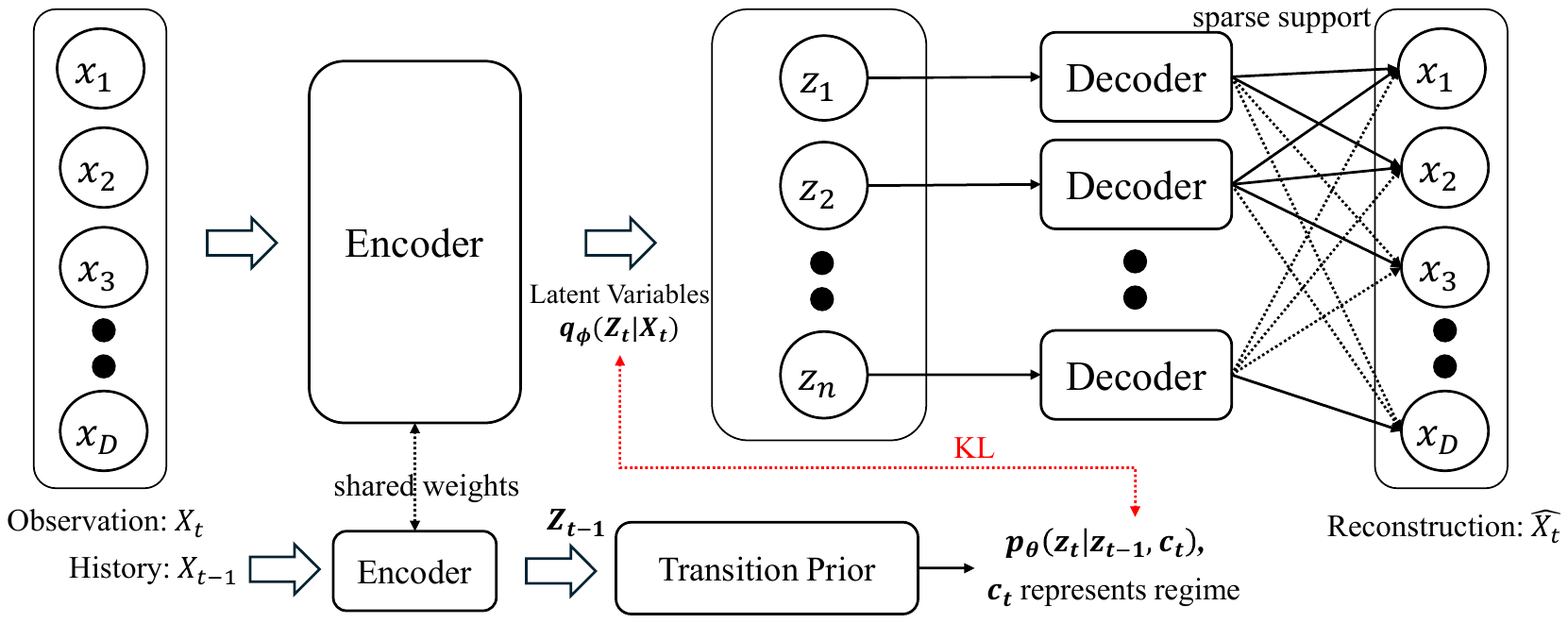}
\caption{MOSAIC architecture. The encoder produces $\vz_t$, which feeds per-factor additive decoders whose aggregated influence profiles form the support matrix $\mathbf{A}$. A regime-conditioned transition prior supplies the temporal identifiability signal via the posterior-prior KL. Stage~1 trains with a dense decoder; Stage~2 freezes the encoder and replaces it with the additive sparse head.}
\label{fig:architecture}
\vspace{-8pt}
\end{figure}

\section{Method}
\label{sec:method}
MOSAIC realizes the theoretical ordering of Section~\ref{sec:identifiability} as a two-stage training curriculum (Figure~\ref{fig:architecture}): Stage~1 identifies latent factors with a temporal prior, and Stage~2 freezes the encoder and recovers each latent's main-effect support through an additive sparse decoder. 

\textbf{Stage~1: Identifiable latent learning.}
A VAE~\citep{khemakhem2020variational} with encoder $q_\phi(\vz_t \mid \vx_t)$ and a dense MLP decoder is trained jointly with a regime-conditioned temporal prior factorized across dimensions,
\begin{equation}
    p(\vz_t \mid \vz_{t-L:t-1}, c_t) = \prod_{j=1}^{n} p_j(z_{t,j} \mid \vz_{t-L:t-1}, c_t),
    \label{eq:prior_factorized}
\end{equation}
where each $p_j$ is a Laplace density on the output of a per-dimension transition MLP receiving past $L$ lag vectors and current factor $z_{t,j}$. The factorized form supplies auxiliary variation for temporal identifiability~\citep{hyvarinen2016unsupervised, yao2022temporally}. The Stage~1 loss combines reconstruction, a standard-Gaussian KL, and the temporal-prior KL; the full form is in Appendix~\ref{app:architecture}.

\textbf{Stage~2: Sparse support recovery.}
With the encoder and prior frozen, the dense decoder is replaced by an additive head $\hat{\vx} = \sum_j f_j(z_j) + \mathbf{b}$, where each $f_j$ is a small MLP estimating the ANOVA main effect $g_j$. After standardizing each latent factor, we form the \emph{functional influence matrix} $\mathbf{A} \in \real^{D \times n}$ via the bias-invariant finite contrast $A_{ij} = |f_j(+1)_i - f_j(-1)_i|$, which cancels the shared bias $\mathbf{b}$ and gives equivalent supports for non-symmetric main effects; symmetric responses (e.g., $z_j^2$) require a variance- or range-based score (Appendix~\ref{app:theory}). To enforce module structure, we apply an entropy penalty on each active factor's normalized influence column:
\begin{equation}
    \cL_{\text{sparse}} = \frac{1}{|\mathcal{J}_{\text{alive}}|} \sum_{j \in \mathcal{J}_{\text{alive}}} H\!\left(A_{:,j} \big/ \textstyle\sum_{i} A_{ij}\right),
    \label{eq:entropy_sparsity}
\end{equation}
where $H(\cdot)$ is the Shannon entropy and $\mathcal{J}_{\text{alive}}$ masks near-dead factors. Low-entropy columns realize the supports $\{\mathcal{S}_j\}$ of Theorem~\ref{thm:module}: each $z_j$ concentrates its influence on a small subset of observed variables. The Stage~2 objective adds $\lambda(t) \cdot \cL_{\text{sparse}}$ to the frozen-encoder ELBO (schedule in Appendix~\ref{app:architecture}). Entropy is preferred over column-level magnitude penalties for module-partition alignment; Appendix~\ref{app:entropy_vs_grouplasso} provides the mechanistic argument, an empirical comparison, and a discussion of how this column-level penalty relates to the basis-function group-lasso analyzed in Theorem~\ref{thm:finite}. An alternative approach would directly sparsify a dense decoder's Jacobian $\partial \hat{\vx}/\partial \vz$; we discuss why the additive architecture is preferable in Appendix~\ref{app:jacobian_sparsity}.

\textbf{Inference: regime-association interpretation.}
After training, MOSAIC ranks the learned latent factors by the regime KL divergence (Definition~\ref{def:driver}); under the mean-shift regime contrasts in our experiments, Cohen's $d$ provides a stable finite-sample proxy~\citep{cohen1988statistical}. The selected $z_{j^*}$ is interpreted through its influence column $A_{:,j^*}$, whose largest entries name the observed variables that give $z_{j^*}$ its physical implication.

\textbf{Parallel transition prior.}
The Stage~1 temporal prior requires component-wise Jacobians for the change-of-variables term. Reference implementations~\citep{yao2021learning, yao2022temporally, brehmer2022weakly, song2024causal} loop over $n$ dimensions sequentially, which dominates wall-clock cost at scientific scale (synthetic training at $\hat n=8$ takes ${\sim}65$ GPU-hours per seed). We replace the sequential loop with a single batched computation that yields the same log-det-Jacobian, giving ${\geq}277\times$ wall-clock speedup on NVIDIA A40 (Table~\ref{tab:transition_bench}; numerical equivalence verified in Appendix~\ref{app:architecture}).

\section{Experiments}
\label{sec:experiments}

We organize experiments around the \textbf{two abstract capabilities} of MOSAIC: identifying latent factors and recovering their observation-level supports. 

\textbf{Setup.} We evaluate on six datasets across various domains with sample-to-dimension ratios spanning $13$ to $14{,}000$: synthetic ($D=30$, $N=420$K, full ground truth), RNA molecular dynamics (MD, $D=28$, $N=156$K), tokamak-inspired control ($D=12$, $N=4.6$K), OMNI solar wind~\citep{king2005omni} ($D=21$), climate ENSO~\citep{huang2017extended} ($D=47$), and TEP~\citep{downs1993plant} ($D=52$). The synthetic benchmark uses monotonic invertible nonlinear mixing with five function families (Appendix~\ref{app:synthetic}). We run 5 seeds on synthetic and RNA, 3 on cross-domain.

\textbf{Baselines.} We compare against three families (citations in Section~\ref{sec:related}): temporal CRL methods (TDRL, iVAE, SlowVAE, $\beta$-VAE, CtrlNS), linear sparse methods (Sparse PCA, ICA, PCA), and the regime-conditioned information bottleneck SPIB. For all methods, regime-association ranking uses a shared Cohen's $d$ pipeline; localization is read from additive decoder probes (MOSAIC), dense-decoder Jacobians (CRL methods), or linear loadings (Sparse PCA/ICA/PCA).

\textbf{Metrics.} Three metrics target different aspects of recovery: \textbf{MCC}, the mean correlation coefficient between learned and ground-truth latents under Hungarian matching, measures latent identifiability; \textbf{Z@top3} checks whether the three most regime-discriminative learned latents each correspond to a true regime-varying factor; and \textbf{$X_Z$@top3} measures support precision of those three latents against ground-truth observation supports, gated at top-3 mass $\geq 0.50$ to penalize diffuse decoders whose argmax happens to overlap by chance. Full definitions are in Appendix~\ref{app:hyperparams}.

\vspace{-8pt}
\begin{figure}[t]
\centering
\includegraphics[width=1\textwidth]{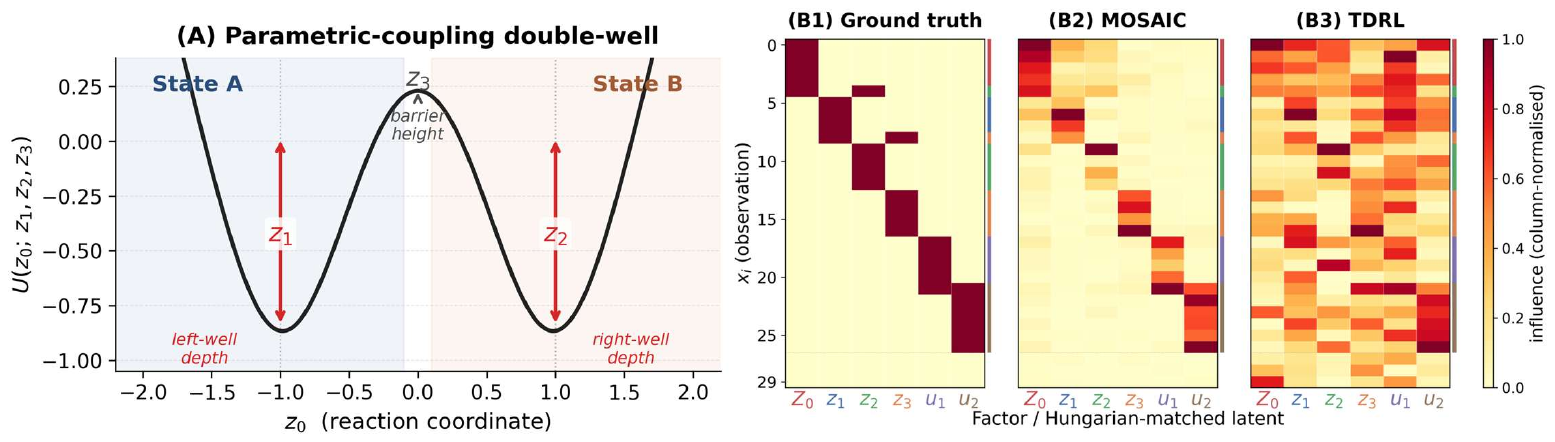}
\caption{Physics-inspired double-well benchmark. \textbf{(A)} A reaction coordinate $z_0$ moves between two metastable basins (State A / State B); $z_1, z_2, z_3$ modulate left-well depth, right-well depth, and barrier height. \textbf{(B1--B3)} Column-normalized influence matrices over six ground-truth factors ($z_0,z_1,z_2$ regime-varying; $z_3,u_1,u_2$ regime-invariant). Learned latents in B2 and B3 are Hungarian-matched to the ground truth in B1. MOSAIC (B2) recovers the diagonal module structure; TDRL (B3) does not.}
\label{fig:synthetic}
\vspace{-8pt}
\end{figure}

\subsection{Physics-Inspired Synthetic Energy-Landscape Benchmark}
\label{sec:synthetic}

We validate both capabilities on a controlled double-well benchmark with known ground-truth factors and observation-level supports. A reaction coordinate $z_0$ moves between two basins (local energy minima) while $z_1, z_2, z_3$ modulate left-well depth, right-well depth, and barrier height; State A/B occupancy defines the regime label. The benchmark generates $D=30$ observations from $n_{\text{true}}=6$ latents under monotonic nonlinear mixing, with shared-channel overlaps that violate the strict disjoint-support assumption A1$'$ to probe the more realistic partial-overlap regime of Proposition~\ref{prop:partial_overlap} (Figure~\ref{fig:synthetic}; full specification in Appendix~\ref{app:synthetic}).

MOSAIC is the only method that satisfies both abstract capabilities (Table~\ref{tab:synthetic}): it identifies the latent factors with the best MCC ($0.912$), correctly identifies the regime-associated factors in all 5 seeds, and recovers their observation-level supports with the best $X_Z$@top3 ($0.978$). The baselines fall into three families that each cover only one capability. CRL methods (TDRL, iVAE, SlowVAE, $\beta$-VAE) use dense decoders, so although they recover useful latent factors (TDRL MCC $0.850$), their supports are diffuse and gated $X_Z$@top3 is zero. Linear sparse methods (Sparse PCA, ICA, PCA) produce readable loadings (un-gated $X_Z$@top3 around $0.6$--$0.7$) but cap MCC below $0.78$ for lack of nonlinear capacity. SPIB's information-bottleneck objective does not align with driver recovery: Z@top3 and $X_Z$@top3 are uniformly zero across 36 runs (Appendix~\ref{app:spib_sweep}). CtrlNS fails to converge. Post-hoc top-3 thresholding of TDRL's dense influence map only reaches $X_Z$@top3 $=0.444$, confirming that sparsity alone cannot recover module structure without joint training. This $X_Z$@top3 of $0.978$ is achieved on a benchmark that deliberately violates A1$'$ (Proposition~\ref{prop:partial_overlap}).

\vspace{-8pt}
\begin{table}[t]
  \caption{Synthetic benchmark with monotonic nonlinear mixing. $X_Z$@top3 uses the $0.50$ concentration gate; stochastic methods report mean $\pm$ std over 5 seeds. iVAE uses the true regime label $c_t$. CtrlNS failed to converge and is omitted; linear baselines report un-gated $X_Z$@top3. Detailed hyperparameters are in Appendix~\ref{app:hyperparams}; SPIB diagnostics are in Appendix~\ref{app:spib_sweep}.}
  \label{tab:synthetic}
  \centering
  \small
  \begin{tabular}{lccc}
    \toprule
    Method & MCC $\uparrow$ & Z@top3 perfect seeds $\uparrow$ & $X_Z$@top3 (gate 0.50) $\uparrow$ \\
    \midrule
    \textbf{MOSAIC (ours)} & $\mathbf{0.912 \pm 0.010}$ & $\mathbf{5/5}$ & $\mathbf{0.978 \pm 0.044}$ \\
    iVAE ($u = c_t$) & $0.821 \pm 0.007$ & 3/5 & $0.556 \pm 0.157$ \\
    TDRL & $0.850 \pm 0.026$  & 4/5 & $0.000 \pm 0.000$ \\
    TDRL + post-hoc top-3 & $0.850 \pm 0.026$  & 4/5 & $0.444 \pm 0.211$ \\
    SlowVAE & $0.696 \pm 0.046$ & 2/5 & $0.000 \pm 0.000$ \\
    $\beta$-VAE ($\beta{=}4$, $\gamma{=}0$) & $0.760 \pm 0.060$ & 0/5 & $0.000 \pm 0.000$ \\
    Sparse PCA ($\alpha{=}10$) & 0.775 & --- & 0.610 (no gate) \\
    ICA & 0.765 & --- & 0.610 (no gate) \\
    PCA & 0.684 & --- & 0.670 (no gate) \\
    SPIB & $0.304 \pm 0.019$ & 0/5 & $0.000 \pm 0.000$ \\
    \bottomrule
  \end{tabular}
  \vspace{-8pt}
\end{table}

\subsection{RNA Molecular Dynamics: Interpreting a Regime-Associated Latent}
\label{sec:rna}

We now apply MOSAIC to real scientific data where recovered supports can be read as substantive scientific claims. The mechanistic question is: \textit{which learned latent factor is most associated with the C5-G10 closing-pair regime shift, and which physical residues give that factor its meaning?}

\textbf{Setup.} We perform MD simulations of the cUUCGg tetraloop (\texttt{GGCACUUCGGUGCC}) at 345K, 400K, and 500K (14 trajectories total; full simulation protocol in Appendix~\ref{app:rna}). Per-residue distance features give $D{=}28$ named observed variables, which group into Loop (U6--G9), ClosingPair (C5, G10), and Stem residues for interpretation; these group labels are not used to supervise the latents. Regimes are defined by the closing-pair native-contact fraction $Q_{\text{cp}}$ on contacts touching C5 or G10, with intact ($Q_{\text{cp}}>0.8$) and disrupted ($Q_{\text{cp}}<0.65$) frames forming a balanced dataset of $156{,}170$ lag-2 windows ($N/D \approx 5{,}571$).

\textbf{MOSAIC identifies a Loop-localized correlate of the closing-pair shift.} The Cohen's-$d$ ranking selects a single regime-discriminative latent $z_3$ (Figure~\ref{fig:rna}B). Its influence column $A_{:,3}$ concentrates on Loop residues, with the top-4 entries 100\% Loop (Figure~\ref{fig:rna}C1). This is consistent with the cUUCGg folding pathway studied by \citet{chen2013rna}, in which loop fluctuations couple to closing-pair disruption. Across 5 seeds, Loop ranks first in 4 runs and is the dominant residue group in all 5 (the fifth seed surfaces stem-loop coupled residues that remain physically consistent with loop dynamics; details in Appendix~\ref{app:rna}). Regime accuracy is $0.912 \pm 0.027$.

\textbf{Comparison to baselines.} The same three baseline families from Section~\ref{sec:synthetic} reappear here. CRL methods (TDRL, iVAE, SlowVAE, $\beta$-VAE) yield a regime-associated latent but localize it to the wrong or diffuse residue groups; Figure~\ref{fig:rna}C2 shows TDRL producing a sparse-looking column whose mass sits on Stem rather than Loop ($X_Z$@selected-latent $\leq 0.625$ across CRL methods). This failure mode is informative: sparsity alone is insufficient without the additive main-effect projection that MOSAIC uses to align sparse mass with the correct latent. Table~\ref{tab:rna_main} summarizes the comparison; full per-seed details are in Appendix~\ref{app:rna}. The Loop localization is not circular: redefining regimes by outer-stem contacts shifts the dominant residue group from Loop to Stem (Appendix~\ref{app:rna}, Table~\ref{tab:rna_cross_regime}).

\vspace{-8pt}
\begin{table}[h]
  \centering
  \small
  \caption{RNA baseline comparison ($X_Z$@selected-latent, top-8 precision against Loop).}
  \label{tab:rna_main}
  \begin{tabular}{lcc}
    \toprule
    Method family & Regime Acc & $X_Z$@sel $\uparrow$ \\
    \midrule
    \textbf{MOSAIC (5s)} & $\mathbf{0.912 \pm 0.027}$ & $\mathbf{0.880 \pm 0.055}$ \\
    CRL best (TDRL, 4s) & $0.912 \pm 0.030$ & $0.625 \pm 0.102$ \\
    Linear best (SpPCA $\alpha{=}10$) & $0.898$ & $0.750$ \\
    SPIB (5s) & $0.645 \pm 0.065$ & $0.400 \pm 0.137$ \\
    \bottomrule
  \end{tabular}
  \vspace{-8pt}
\end{table}

\vspace{-8pt}
\begin{table}[t]
  \caption{Cross-domain localization (MOSAIC, 3 seeds per dataset). Ratio = winner-group mean influence / runner-up.}
  \label{tab:cross_domain}
  \centering
  \small
  \resizebox{\textwidth}{!}{%
  \begin{tabular}{llccccc}
    \toprule
    Dataset & Domain & $D$ & Regime Acc & Top-1 match & Winner group & Ratio \\
    \midrule
    OMNI       & Solar wind    & 21 & $0.940 \pm 0.054$ & 3/3 & 3/3 (Geomag/Coupling) & $1.00$--$1.83$ \\
    Disruption & Tokamak (synthetic) & 12 & $0.911 \pm 0.003$ & 3/3 & 3/3 (MHD/Density) & $1.38$--$2.13$ \\
    Climate    & ENSO          & 47 & $0.888 \pm 0.027$ & 3/3 & 3/3 (Ni\~no bands) & $1.16$--$2.00$ \\
    TEP        & Chemical proc. & 52 & $0.729 \pm 0.065$ & 3/3 & 3/3 (Stripper) & $1.07$ \\
    \bottomrule
  \end{tabular}%
  }
  \vspace{-4pt}
\end{table}

\subsection{Cross-Domain Generalization}
\label{sec:cross_domain}
We now test whether the same pipeline generalizes across domains with different physics. We apply MOSAIC to three empirical datasets (OMNI solar wind~\citep{king2005omni}, ENSO climate~\citep{huang2017extended}, Tennessee Eastman Process~\citep{downs1993plant}), and a physics-motivated synthetic tokamak benchmark.

\textbf{Setup.} Without ground-truth latent supports, we evaluate two coarser criteria. \emph{Top-1 match}: does the regime-discriminative latent's largest-influence variable lie in the domain-expected group? \emph{Winner group}: does the per-group mean influence rank the expected group first? We also report the \emph{ratio} of the winner group's mean influence to the runner-up's, measuring localization sharpness. Expected groups are determined by domain knowledge (e.g., Ni\~no bands for ENSO, MHD/density for tokamak) and used only for post hoc evaluation.

\textbf{MOSAIC matches the domain-expected group in all 12 seeds across the four datasets} (Table~\ref{tab:cross_domain}). Localization sharpness varies with how cleanly the regime label maps to a single physical mechanism: the disruption benchmark gives ratios up to $2.13$, while TEP's multi-fault regime (IDV-1, 2, 4, 5) converges onto the shared downstream Stripper signature with ratio $1.07$. As a reference point, SparsePCA and FastICA each achieve 3/3 winner-group recovery on three of the four datasets, confirming that the regime signal is detectable at group level by simple methods; however, they lack identifiable latents and concentrated supports. PCA fails on OMNI and Climate by selecting variance-dominant rather than regime-relevant variables. SPIB succeeds on 5/9 runs, failing all three OMNI seeds. MOSAIC is the only method achieving 12/12 across all four datasets. Full per-method tables are in Appendix~\ref{app:crossdomain}.

\textbf{Case Study: low-$N/D$ regime.} Climate ($N/D = 13$) is below the synthetic sweep's reliable recovery threshold (Appendix~\ref{app:synthetic:nd}). This is consistent rather than contradictory: the synthetic benchmark is a stress test (five nonlinear function families, shared supports violating A1$'$, $\alpha$ up to 2.0), while real data satisfy weaker conditions ($\hat\rho \in [0.003, 0.161]$, Appendix~\ref{app:rho_calibration}) and exhibit physical correlations that reduce effective dimensionality below nominal $D$. The top regime-associated latent concentrates on equatorial Pacific SST cells (Figure~\ref{fig:climate}), recovering a spatial region broader than the single grid cell defining the regime label, consistent with the underlying ENSO mode. Climate thus demonstrates that MOSAIC operates effectively when effective sample-complexity is lower than nominal $N/D$ suggests.

\textbf{Takeaway.} Each experiment stresses a different aspect of the pipeline: the synthetic benchmark probes both capabilities under controlled nonlinear mixing with deliberate support overlap; RNA tests whether the recovered support constitutes a substantive scientific claim on a real molecular system, where MOSAIC's advantage is largest (Tables~\ref{tab:synthetic},~\ref{tab:rna_main}); and the four cross-domain datasets are non-redundant (Climate: low $N/D$; TEP: multi-fault aggregation; Disruption: signal-strength separation; OMNI: geomagnetic regimes). MOSAIC is the only method that handles all six datasets with a single pipeline and shared hyperparameters.

\vspace{-8pt}
\begin{figure}[t]
\centering
\includegraphics[width=\textwidth]{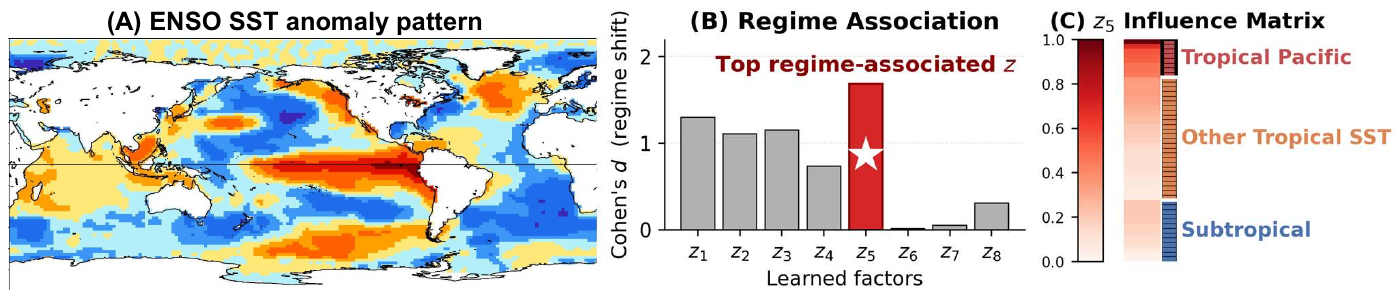}
\caption{\textbf{Climate (ENSO) cross-domain localization} on NOAA ERSSTv5 sea surface temperature data~\citep{huang2017extended,huang2017ersstv5data}. \textbf{(A)} The canonical ENSO SST anomaly pattern, with the warm equatorial Pacific band marking the physical mode underlying the regime label. \textbf{(B)} Cohen's $d$ across 8 learned latents identifies $z_5$ as the top regime-associated factor. \textbf{(C)} Influence of $z_5$ over the 47 SST grid cells, ordered by group: support concentrates on the Tropical Pacific group corresponding to the equatorial Pacific band in (A), extending beyond the single grid cell that defines the regime label.}
\label{fig:climate}
\vspace{-8pt}
\end{figure}

\section{Conclusion and Limitations}
\label{sec:conclusion}

MOSAIC closes a gap between temporal CRL, which identifies latents but cannot determine what they represent, and scientific representation learning, which operates on named variables but lacks identifiability. By combining both, MOSAIC turns anonymous identifiable latents into module-level scientific interpretations, validated across diverse domains. 

\textbf{Ablation and robustness.} Ablation on the synthetic benchmark (Appendix~\ref{app:ablation}) shows a clean decomposition: MCC depends only on the temporal prior, while $X_Z$ requires all three components together; removing any single component breaks the alignment between sparse supports and correctly identified latents. Additional robustness experiments confirm graceful degradation under varying latent dimensionality (Appendix~\ref{app:synthetic:zdim}), regime-label noise up to $30\%$ (Appendix~\ref{app:synthetic:regime_noise}), unsupervised regime discovery via $k$-means (Appendix~\ref{app:synthetic:auto_regime}), and alternative regime definitions (Appendix~\ref{app:synthetic:regime_def}).

\textbf{Scope and limitations.} MOSAIC interprets a binary or one-vs-reference regime contrast at a time; multi-regime joint analysis is left to future work (Appendix~\ref{app:synthetic:multi_regime}). Entropy regularization is used instead of the group-lasso variant of Theorem~\ref{thm:finite} because it directly enforces within-column concentration, at the cost of an open finite-sample theorem (Appendix~\ref{app:entropy_vs_grouplasso}). Regime association is distributional, not interventional, restricted to mean shifts under Cohen's $d$; channels with weak mixing coefficients or purely higher-order dependence may fall outside the recovered support. MOSAIC complements rather than replaces existing tools: SPIB can supply regime labels, and pairwise causal-discovery methods can operate within recovered modules. Nonlinear optimization introduces seed-to-seed variance (reported as standard deviations throughout). The goal is hypothesis generation, not autonomous causal decisions, with potential applications in space weather forecasting, fusion diagnostics, climate monitoring, and chemical-process safety.

\bibliographystyle{plainnat}
\bibliography{references}


\appendix

\section{Architecture Details}
\label{app:architecture}

\textbf{Encoder.} The encoder is a 4-layer MLP with LeakyReLU(0.2) activations and no normalization layers. The hidden dimension is domain-dependent (64 to 128); the final layer maps to $[\boldsymbol{\mu}, \log \boldsymbol{\sigma}^2] \in \real^{2n}$, from which we reparametrize $\vz = \boldsymbol{\mu} + \boldsymbol{\sigma} \odot \boldsymbol{\epsilon}$ with $\boldsymbol{\epsilon} \sim \mathcal{N}(\mathbf{0}, \mathbf{I})$.

\textbf{Stage~1 decoder (dense).} A standard MLP $g_\theta: \real^n \to \real^D$ used during Stage~1 to learn identifiable latent factors. This decoder is discarded after Stage~1.

\textbf{Stage~2 decoder (additive).} Each of the $n$ component functions $f_j$ is a 2-layer MLP: $z_j \in \real \to \text{Linear}(1, H) \to \text{LeakyReLU} \to \text{Linear}(H, D)$, where $H$ is the per-component hidden dimension (hidden\_per\_z). The total decoder output is $\hat{\vx} = \sum_{j=1}^{n} f_j(z_j) + \mathbf{b}$, with a learnable bias $\mathbf{b} \in \real^{D}$. The encoder and transition prior are frozen during Stage~2; only the additive decoder parameters are optimized, using \texttt{AdamW} restricted to parameters with \texttt{requires\_grad=True}.

\textbf{Per-dimension residual prior.} For dimension $j$, the Stage~1 prior assumes a Laplace density on the residual $r_{t,j} = h_j(\vz_{t-L:t-1}, z_{t,j}, \mathbf{e}_{c_t})$, where $h_j$ is a per-dimension transition MLP that receives the past $L$ lag vectors, the current factor $z_{t,j}$ (but not $z_{t,k}$ for $k \neq j$), and the regime embedding. The prior density is
\begin{equation}
    \log p_j(z_{t,j} \mid \vz_{t-L:t-1}, c_t) = -\frac{|r_{t,j}|}{b_j} - \log(2 b_j) + \log\left|\tfrac{\partial r_{t,j}}{\partial z_{t,j}}\right|,
    \label{eq:prior_components}
\end{equation}
where $b_j$ is a learnable scale. Because $h_j$ depends on $z_{t,j}$ through nonlinear layers, the Jacobian $\partial r_{t,j}/\partial z_{t,j}$ is not identically 1 and contributes a non-trivial log-determinant term, which is what motivates the batched autograd computation described below.

\textbf{Training objectives.} The Stage~1 loss combines reconstruction, a standard-Gaussian KL, and the temporal-prior KL,
\begin{equation}
    \cL_{\text{Stage1}} = \cL_{\text{recon}} + \beta \cdot \KL\bigl(q_\phi(\vz_t \mid \vx_t) \,\|\, \mathcal{N}(\mathbf{0}, \mathbf{I})\bigr) + \gamma \cdot \KL\bigl(q_\phi(\vz_t \mid \vx_t) \,\|\, p(\vz_t \mid \vz_{t-L:t-1}, c_t)\bigr),
    \label{eq:stage1_loss}
\end{equation}
with $\beta$ weighting the standard-Gaussian KL and $\gamma$ weighting the temporal-prior KL, evaluated for $t \geq L$. The two KL terms serve distinct roles: the standard-Gaussian term ($\beta = 2 \times 10^{-3}$) acts as an empirical regularizer that bounds latent magnitudes and prevents posterior collapse, while the temporal-prior term ($\gamma = 2 \times 10^{-2}$) supplies the regime-conditioned nonstationarity signal required for identifiability. This objective is not a single-prior ELBO; it follows the dual-regularization design used in prior temporal CRL implementations~\citep{yao2022temporally, yao2021learning}.

The Stage~2 loss is $\cL_{\text{Stage2}} = \cL_{\text{recon}} + \lambda(t) \cdot \cL_{\text{sparse}}$. Because the encoder and prior are frozen, the temporal-prior KL is constant with respect to the additive decoder parameters and contributes no gradients; only reconstruction and the sparsity penalty shape the decoder. $\lambda(t)$ follows a three-stage schedule: zero during a 5-epoch warmup, linear ramp to $\lambda_{\max} = 50$ over the next 20 epochs, and constant $\lambda_{\max}$ thereafter. The active-factor mask $\mathcal{J}_{\text{alive}} = \{j : \sum_i A_{ij} > 0.01 \cdot \max_k \sum_i A_{ik}\}$ excludes near-dead factors from the entropy penalty in Eq.~\eqref{eq:entropy_sparsity}.

\textbf{Transition prior.} The parallel MLP takes the lagged latent history $\vz_{t-L:t-1}$ together with the regime embedding $\mathbf{e}_{c_t}$ as input and outputs the per-dimension transition shift. For each target dimension $j$, the corresponding input width is $Ln + 1 + H_{\text{emb}}$, where $H_{\text{emb}}$ is the regime-embedding hidden dimension, and all $n$ dimensions are processed simultaneously via batched \texttt{einsum}: $\mathbf{h} = \sigma(\text{einsum}(\texttt{`bnf,nfh->bnh'}, \text{input}, \mathbf{W}_1) + \mathbf{b}_1)$, then $\mathbf{r} = \text{einsum}(\texttt{`bnh,nh->bn'}, \mathbf{h}, \mathbf{W}_2) + \mathbf{b}_2$.

The log-det-Jacobian required by the change-of-variables formula reduces to the diagonal $\partial r_t^{(j)} / \partial z_t^{(j)}$ for each $j$. Because $z_t^{(j)}$ enters only into $r_t^{(j)}$ in the additive prior, the scalar derivative equals the gradient of $\sum_{b,j} r_{b,j}$ with respect to the corresponding input coordinate, which we obtain in a single \texttt{torch.autograd.grad} call. This avoids instantiating the full $(BW, 1, BW, F)$-shaped Jacobian that TDRL's reference implementation builds per dimension.

\textbf{Asymptotic complexity.}
Let $BW = B \cdot (T{-}L)$ denote the total number of time steps per batch, and $F = H_{\text{emb}} + Ln + 1$ the input width per transition MLP, where $H_{\text{emb}}$ is the regime-embedding hidden dimension. Table~\ref{tab:transition_complexity} compares the two implementations along four axes.

\begin{table}[h]
  \caption{Asymptotic complexity of the transition prior. v1 = TDRL's reference implementation (sequential loop + \texttt{autograd.functional.jacobian}); v2 = ours (parallel MLP + scalar-sum autograd). The dominant savings are the elimination of the $BW$ factor in the Jacobian cost and the elimination of $O(n)$ sequential GPU kernel launches. Here $n$ denotes the latent dimension; $D$ refers exclusively to the observation dimension throughout. The forward FLOPs include the input-to-hidden ($F \to H$) and hidden-to-hidden ($H \to H$) layers of the 3-layer transition MLP.}
  \label{tab:transition_complexity}
  \centering
  \small
  \resizebox{\textwidth}{!}{%
  \begin{tabular}{lccl}
    \toprule
    Quantity & v1 (TDRL) & v2 (ours) & Ratio \\
    \midrule
    Forward FLOPs    & $n \cdot BW \cdot (FH + H^2)$ (sequential) & $n \cdot BW \cdot (FH + H^2)$ (batched)   & $1\times$ same work \\
    Jacobian FLOPs   & $n \cdot BW^2 \cdot F \cdot H$      & $n \cdot BW \cdot F \cdot H$       & $BW\times$ savings \\
    GPU kernel calls & $O(n \cdot L)$ sequential           & $O(L)$ batched                     & $n\times$ launch reduction \\
    Peak memory      & $O(n \cdot BW^2 \cdot F)$           & $n \cdot BW \cdot F$            & $BW\times$ savings \\
    \bottomrule
  \end{tabular}%
  }
\end{table}

Two mechanisms account for the speedup. First, \texttt{autograd.functional.jacobian} constructs the full $(BW, 1, BW, F)$-shaped Jacobian even though only the $BW$ diagonal entries are needed, inflating both FLOPs and memory by a factor of $BW$; the scalar-sum trick extracts the diagonal directly in $O(BW)$ work. Second, the Python \texttt{for} loop over $n$ latent dimensions serializes $n$ small kernel launches, each under-utilizing the GPU; \texttt{einsum} fuses them into a single launch. The asymptotic speedup is therefore ${\sim}n \cdot \min(BW, \text{GPU-utilization gap})$, predicting that the ratio grows with both $n$ and batch size, a prediction verified in the benchmark below.

\begin{table}[h]
  \caption{Transition prior wall-clock benchmark on NVIDIA A40 (30-iteration mean, 5 warmup iterations). Same symbols as Table~\ref{tab:transition_complexity}; $Z$ denotes the latent dimension ($n$ in the complexity notation).}
  \label{tab:transition_bench}
  \centering
  \small
  \resizebox{\textwidth}{!}{%
  \begin{tabular}{lccccrrrrrr}
    \toprule
    Configuration & $Z$ & $B$ & $T$ & $H$ & v1 time & v2 time & Speedup & v1 mem & v2 mem & Mem ratio \\
    \midrule
    Synthetic     &  8 & 256 & 3 & 128 & 1635\,ms & 2.9\,ms & $563\times$  & 909\,MB  & 38\,MB  & $24\times$ \\
    RNA           &  4 & 256 & 3 & 128 &  814\,ms & 2.9\,ms & $277\times$  & 511\,MB  & 27\,MB  & $19\times$ \\
    Cross-domain  &  8 & 256 & 3 & 128 & 1637\,ms & 2.9\,ms & $560\times$  & 909\,MB  & 38\,MB  & $24\times$ \\
    Stress test   & 64 & 256 & 3 &  64 & 7327\,ms & 3.0\,ms & $2445\times$ & 3303\,MB & 122\,MB & $27\times$ \\
    \bottomrule
  \end{tabular}%
  }
\end{table}

The empirical scaling matches the complexity analysis: v1 runtime grows linearly with $Z$ ($814 \to 1637 \to 7327$\,ms as $Z$ goes $4 \to 8 \to 64$, consistent with the $D\times$ kernel-launch overhead), while v2 runtime is essentially constant (${\sim}3$\,ms) because the einsum does not yet saturate the GPU at these sizes. The memory ratio of ${\sim}24\times$ across configurations matches the predicted $BW = 256 \times (T{-}L) \approx 256$ savings, modulated by the width factor $F$. Both optimizations are algorithmic (not approximations): v2 computes the exact same log-det-Jacobian as v1, verified numerically to $2.6 \cdot 10^{-4}$ max absolute difference. End-to-end, synthetic training at the speedup-benchmark setting ($Z{=}8$, 80 epochs, ${\sim}128\text{K}$ steps) drops from ${\sim}65$ hours to ${\sim}90$ minutes per seed. The main synthetic experiments use $\hat n = 12$ rather than the $Z=8$ benchmark setting, so their wall-clock is moderately larger: the main $\hat n=12$ synthetic setting takes on the order of 1--2 hours per seed, which is what makes the 5-seed cross-ablation experimental design feasible on a single A40.

\section{Hyperparameters}
\label{app:hyperparams}

\begin{table}[h]
  \caption{Hyperparameter settings. All experiments share $\beta = 2{\times}10^{-3}$ (past-KL weight), $\gamma = 2{\times}10^{-2}$ (future-KL weight), lr $= 5{\times}10^{-4}$, weight decay $= 10^{-4}$, AdamW optimizer. Stage~1 uses a dense decoder; Stage~2 freezes the encoder and uses an additive decoder with the entropy-based sparsity penalty of Eq.~\eqref{eq:entropy_sparsity}.}
  \label{tab:hyperparameters}
  \centering
  \small
  \resizebox{\textwidth}{!}{%
  \begin{tabular}{lcccccc}
    \toprule
    Hyperparameter & Synthetic & RNA & OMNI & Disruption & Climate & TEP \\
    \midrule
    Input dim $D$ & 30 & 28 & 21 & 12 & 47 & 52 \\
    Learned latent dim $\hat n$ ($z_{\dim}$) & 12 & 4 & 5 & 5 & 8 & 8 \\
    Encoder hidden & 128 & 128 & 64 & 64 & 128 & 128 \\
    Stage~2 $H$ (hidden/z) & 4 & 4 & 4 & 4 & 4 & 4 \\
    $\lambda_{\max}$ (sparsity) & 50 & 50 & 50 & 50 & 50 & 50 \\
    Stage~1 epochs & 80 & 80 & 80 & 80 & 80 & 80 \\
    Stage~2 epochs & 80 & 80 & 80 & 80 & 80 & 80 \\
    Batch size & 256 & 256 & 256 & 256 & 256 & 256 \\
    \bottomrule
  \end{tabular}%
  }
\end{table}

\subsection{Experimental Protocol Details}
\label{app:setup_details}

Baselines include TDRL~\citep{yao2022temporally}, CtrlNS~\citep{song2024causal}, $\beta$-VAE ($\beta=4,\gamma=0$), Sparse PCA ($\alpha=10$), ICA, and PCA. Regime-association ranking uses a shared Cohen's $d$ pipeline across all methods. For methods without additive decoders, localization is read from native structure: linear loadings for Sparse PCA, ICA, and PCA, and dense-decoder Jacobians for TDRL and $\beta$-VAE. For the latter, the per-sample Jacobian $\partial \hat{\vx}_t / \partial \hat{\vz}_t$ varies across the dataset; we aggregate by taking the mean of element-wise absolute values over all samples, $A_{ij} = \frac{1}{N}\sum_t |(\partial \hat{\vx}_t / \partial \hat{\vz}_t)_{ij}|$, to form the influence matrix. For SPIB, which lacks a generative decoder, we extract variable influence via a per-observation input-feature Lasso: for each learned latent $\hat{z}_k$, we fit $\hat{z}_k = \sum_i w_{ik} x_i + b_k$ with $\ell_1$ regularization and set $A_{ik} = |w_{ik}|$.

\paragraph{Metric definitions.}
Let $\mathcal{F} = \{f_1,\dots,f_M\}$ be the true latent factors ($M = 6$ on the synthetic benchmark: $z_0, z_1, z_2, z_3, u_1, u_2$), partitioned into regime-varying factors $\mathcal{D} \subseteq \mathcal{F}$ (those whose distribution shifts across regimes; here $\mathcal{D} = \{z_0, z_1, z_2\}$) and regime-invariant factors. Each true factor $f_j$ has a ground-truth observation support $\mathcal{S}_j = \mathcal{P}_j \cup \mathcal{H}_j \subseteq [D]$, where $\mathcal{P}_j$ is the primary support (observations generated only from $f_j$) and $\mathcal{H}_j$ is the shared support (observations generated from $f_j$ and at least one other factor). Let $\hat{Z} = \{\hat{z}_1,\dots,\hat{z}_{\hat n}\}$ be the learned latent factors ($\hat n \ge M$), and let
\begin{equation}
  d(\hat{z}_k) \;=\; \frac{\big|\,\mathbb{E}[\hat{z}_k \mid c{=}0] - \mathbb{E}[\hat{z}_k \mid c{=}1]\,\big|}{\sqrt{\tfrac{1}{2}\big(\mathrm{Var}[\hat{z}_k \mid c{=}0] + \mathrm{Var}[\hat{z}_k \mid c{=}1]\big)}}
\end{equation}
denote the Cohen's $d$ of $\hat{z}_k$ across the binary regime label $c$. We also form an influence matrix $A \in \real_{\ge 0}^{D \times \hat n}$ where $A_{ij}$ quantifies the contribution of $\hat{z}_j$ to observation $x_i$ (method-specific: decoder probe for MOSAIC, Jacobian for TDRL/$\beta$-VAE, or $|w_{ij}|$ from a per-observation Lasso for linear baselines).

\emph{Matching via Hungarian assignment \citep{kuhn1955hungarian}.} Given samples of $(\hat{\vz}, \vz_{\text{true}})$, we build the affinity matrix $C_{jk} = |\rho(f_j, \hat{z}_k)|$ using Pearson correlation and solve $\max_{\pi}\sum_j C_{j,\pi(j)}$ via the Hungarian algorithm. We then threshold the assignment: $\hat{z}_k$ is declared \emph{matched} to $f_j$ iff $\pi(j) = k$ \emph{and} $C_{jk} \ge \tau$ (we use $\tau = 0.5$); otherwise $\hat{z}_k$ is \textsc{unmatched}. Without $\tau$, the Hungarian solution would always return a full permutation, so low-quality correlations near zero would falsely count as recovered factors.

\emph{MCC.} The standard mean correlation coefficient over the thresholded Hungarian matching, averaged over matched pairs.

\emph{Z@top3 (regime-association recovery).} Let $\mathcal{T}_3$ be the indices of the three learned latents with the largest Cohen's $d$. Then
\begin{equation}
  \mathrm{Z@top3} \;=\; \tfrac{1}{3}\,\big|\{\hat{z}_k \in \mathcal{T}_3 : \hat{z}_k \text{ is matched to some } f_j \in \mathcal{D}\}\big|.
\end{equation}
We report Z@top3 as a fraction $a/3$, and call a run \emph{perfect} if $\mathrm{Z@top3} = 3/3$. This asks whether the three most regime-discriminative learned factors each correspond to a true regime-associated factor.

\emph{$X_Z$@top3 (support recovery for regime-associated factors).} For each $\hat{z}_k \in \mathcal{T}_3$, let $m(k) \in \mathcal{F} \cup \{\varnothing\}$ denote its Hungarian match. Define the top-3 mass ratio $\mu(k) = \sum_{i \in \mathrm{Top3}(A_{\cdot, k})} |A_{i,k}| \big/ \sum_{i=1}^{D} |A_{i,k}|$, measuring the fraction of the influence column concentrated in its top-3 entries. The per-latent precision is
\begin{equation}
  p(k) \;=\; \begin{cases}
    \dfrac{\big|\,\mathrm{Top3}(A_{\cdot, k}) \cap \mathcal{S}_{m(k)}\,\big|}{3}, & m(k) \in \mathcal{D} \text{ and } \mu(k) \ge 0.50,\\[1.2ex]
    0, & \text{otherwise,}
  \end{cases}
\end{equation}
The ``@top3'' suffix in the metric name refers to this fixed Top-3 selection. A hit on a shared observation in $\mathcal{S}_{m(k)}$ counts as a correct recovery regardless of which of the overlapping factors $m(k)$ is. Then
\begin{equation}
  X_{Z}\text{@top3} \;=\; \tfrac{1}{3}\sum_{\hat{z}_k \in \mathcal{T}_3} p(k).
\end{equation}
The concentration gate $\mu(k) \ge 0.50$ reflects the module-discovery objective: a regime-discriminative latent that spreads its influence uniformly across all observed variables, even when its top-3 argmax happens to overlap the true support, has not identified a module. The threshold $0.50$ is permissive --- MOSAIC satisfies it with mean top-3 mass $0.69 \pm 0.03$ on this benchmark (Table~\ref{tab:ablation}) --- and penalizes only decoders substantially more dispersed than MOSAIC's. Appendix~\ref{app:concentration_sensitivity} reports sensitivity to threshold choice. Unmatched or regime-invariant latents contribute 0.

\emph{A note on the Hungarian threshold.} The Hungarian threshold $\tau = 0.5$ is necessary because, without it, runs whose top-3 $\hat{z}$ by Cohen's $d$ happen to correlate weakly-but-better-than-random with regime-associated factors would look artificially good.

Regime accuracy is a logistic probe on $\hat{\vz}$. For $X_Z$@top3, when ground-truth support is not available, we recover per-latent support by the largest consecutive ratio gap in sorted $A_{ij}$ values (threshold 1.5), with fallback threshold $0.1\cdot\max_i A_{ij}$ when no gap is detected.

\subsection{Cohen's $d$ versus KL ranking}
\label{app:cohen_kl}

The theory defines the regime-discriminative coordinate using KL divergence because KL compares the full conditional distributions and is invariant under componentwise invertible reparametrizations. In experiments, we use Cohen's $d$ as a stable finite-sample proxy for two-regime mean-shift contrasts. This proxy is not part of the identifiability proof.

As a diagnostic, we compared Cohen's $d$ against a one-dimensional histogram estimator of forward KL on the evaluated Synthetic, RNA, OMNI, and TEP runs. The KL estimator uses a 64-bin shared-support histogram with $q_{0.005}/q_{0.995}$ trimming and Laplace smoothing, and we report forward KL $p_0 \| p_1$ as the primary ranking. Exact top-1 agreement is not guaranteed, but the top-set and module-level decisions are stable: the mean top-3 overlap is $86.1\%$, exact top-1 agreement holds in $8$ of $12$ evaluated runs, and the selected-module question agrees in $9$ of $12$ runs (Table~\ref{tab:cohen_kl}). Climate was omitted because its evaluation pipeline differs from the other datasets and was not compatible with the shared diagnostic. We therefore use Cohen's $d$ only as a finite-sample ranking proxy for the experimental protocol, not as a replacement for the KL-based identifiability statement.

A broader limitation applies to both Cohen's $d$ and the marginal KL criterion of Definition~\ref{def:driver}: they target marginal distributional shifts, which is sufficient but not necessary for regime-varying dynamics. A factor whose transition dynamics change across regimes while its stationary marginal remains invariant would not be detected by either criterion. In the scientific settings we evaluate, regime contrasts (folded vs.\ unfolded, storm vs.\ quiet, faulty vs.\ normal) produce clear marginal mean shifts, so this gap does not arise empirically. For regime changes that manifest primarily as variance or higher-order moment shifts, Cohen's $d$ would need to be replaced by a nonparametric two-sample statistic such as an MMD or energy distance; all six datasets in our evaluation are mean-shift dominated.

\begin{table}[h]
\centering
\small
\caption{Diagnostic comparison of Cohen's $d$ and one-dimensional histogram forward-KL ranking.}
\label{tab:cohen_kl}
\begin{tabular}{lccc}
\toprule
Scope & Top-1 agreement & Mean top-3 overlap & Module agreement \\
\midrule
12 evaluated runs & 8/12 & 86.1\% & 9/12 \\
\bottomrule
\end{tabular}
\end{table}

\section{Synthetic Benchmark Details}
\label{app:synthetic}

\textbf{Generative process.}
The synthetic benchmark generates $D = 30$ observations from $n_{\text{true}} = 6$ latent factors via monotonic invertible nonlinear mixing. Each observation channel $i \in [D]$ is assigned a primary parent $j \in [n_{\text{true}}]$ and a mixing function $g_{ij}$ drawn (cyclically across $i$) from
\begin{equation}
    \mathcal{F} = \bigl\{\, z \mapsto \tanh(5z),\; z \mapsto z^3/4,\; z \mapsto \mathrm{erf}(3z),\; z \mapsto \arctan(5z),\; z \mapsto \mathrm{sign}(z)\,|z|^{1/3} \bigr\}.
\end{equation}
The observation is $x_i = \sum_{j \in S_i} g_{ij}(z_j) + \epsilon_i$ with $\epsilon_i \sim \mathcal{N}(0, \sigma_\epsilon^2)$. The five functions span saturating ($\tanh$, $\arctan$), expanding ($z^3/4$, $\mathrm{sign}\cdot|z|^{1/3}$), and symmetric sigmoidal ($\mathrm{erf}$) shapes, defeating any single linear projection that simultaneously fits all channels generated by one factor.

\textbf{Linear-correlation calibration.}
The generator is calibrated so that each observation remains strongly associated with its assigned parent latent while limiting spurious cross-factor linear correlations; exact calibration statistics depend on the synthetic variant and are not used in the evaluation metrics. For the main synthetic benchmark used in Table~\ref{tab:synthetic}, the generator produces a near-linear parent association (mean parent $|r|\approx 0.99$) before nonlinear monotonic transformations, while support recovery is evaluated against the known parent/support map rather than against correlation values. After applying the nonlinear transforms in $\mathcal{F}$, the channel-parent linear correlation is substantially reduced, consistent with the MCC ceiling of $0.68$--$0.78$ achieved by linear baselines (Table~\ref{tab:synthetic}).

\textbf{Latent dynamics.}
The latent factors evolve under a fixed potential with regime-dependent coupling (full specification in the release code). Regime labels are assigned by the well occupancy of the reaction-coordinate factor, yielding 420\,K balanced samples (210\,K per regime).

\textbf{Ground-truth support structure.}
The 30 observed channels are partitioned into 6 ground-truth factors with 3 additional pure-noise channels. Table~\ref{tab:synthetic_support} lists the primary and shared assignments used to score module recovery. Critically, the benchmark includes shared-channel overlaps (channels 4, 8, 21) by design: the strict disjoint-support assumption A1$'$ used in Theorem~\ref{thm:module} and Theorem~\ref{thm:finite} is unrealistic for real scientific data, where physical observables routinely couple to multiple latent mechanisms. The shared channels deliberately violate A1$'$, placing the benchmark in the relaxed partial-overlap regime characterized by Proposition~\ref{prop:partial_overlap}. The MCC, $X_Z$@top3, and module-recovery results reported in Section~\ref{sec:synthetic} should therefore be read as evidence that MOSAIC operates beyond the strict A1$'$ setting and recovers structure under the more realistic partial-overlap conditions that Proposition~\ref{prop:partial_overlap} covers.

\begin{table}[h]
  \caption{Ground-truth support structure for the synthetic benchmark. The combined support $\mathcal{S}_{m(k)} = \mathcal{P}_{m(k)} \cup \mathcal{H}_{m(k)}$ (primary plus shared channels) is used to score Top-3 recovery in $X_Z$@top3; shared channels count as hits for either listed factor. Channels 27, 28, and 29 are pure noise.}
  \label{tab:synthetic_support}
  \centering
  \small
  \begin{tabular}{llll}
    \toprule
    Factor & Role & Primary channels & Shared channels \\
    \midrule
    $z_0$ & Regime-varying & $\{0,1,2,3\}$       & $\{4\}$ with $z_2$ \\
    $z_1$ & Regime-varying & $\{5,6,7\}$         & $\{8\}$ with $z_3$ \\
    $z_2$ & Regime-varying & $\{9,10,11,12\}$    & $\{4\}$ with $z_0$ \\
    $z_3$ & Invariant & $\{13,14,15,16\}$   & $\{8\}$ with $z_1$ \\
    $u_1$ & Invariant & $\{17,18,19,20\}$   & $\{21\}$ with $u_2$ \\
    $u_2$ & Invariant & $\{22,23,24,25,26\}$ & $\{21\}$ with $u_1$ \\
    \bottomrule
  \end{tabular}
\end{table}

\textbf{Module recovery thresholding.}
For each learned latent $z_j$, we sort $A_{ij}$ in descending order and cut at the largest consecutive ratio gap exceeding $1.5$; if no such gap exists in the top 15 entries, we fall back to the relative threshold $0.1 \cdot \max_i A_{ij}$. This procedure is parameter-free per column and adapts to different sparsity levels across factors.

\subsection{Interaction-strength sweep}
\label{app:synthetic:interactions}

We sweep the interaction strength $\alpha$ from 0 to 2.0, adding pairwise cross-factor interaction terms $\alpha \cdot \tanh(z_a \cdot z_b)$ while preserving the same main effects and regime labels (5 seeds per $\alpha$, z\_dim=12). This probes Theorem~\ref{thm:finite}'s effective-noise picture: larger interaction residuals should make support recovery more sample-demanding.

\begin{table}[h]
  \caption{Interaction strength sweep on the synthetic benchmark (5 seeds per $\alpha$, z\_dim=12). As $\alpha$ increases from 0 (additive, paper assumption) to 2 (strong non-additive coupling), support recovery $X_Z$ degrades monotonically from $0.978$ to $0.538$, while MCC remains comparatively stable. The $\alpha = 0$ row is the additive-mixing limit, trained and evaluated on a separate $\alpha$-sweep dataset using the same protocol as Table~\ref{tab:synthetic}. The MCC gap (0.934 vs Table~\ref{tab:synthetic}'s 0.912) reflects the two independent training trajectories; the per-seed $X_Z$ pattern (four perfect seeds plus one at $8/9 = 0.889$) coincides between the two runs under the discrete fixed-Top-3 metric, hence the matching aggregate $0.978 \pm 0.044$.}
  \label{tab:synthetic_interactions}
  \centering
  \small
  \begin{tabular}{cccc}
    \toprule
    $\alpha$ & MCC $\uparrow$ & Z@top3 perfect (out of 5) $\uparrow$ & $X_Z$@top3 (gate 0.50) $\uparrow$ \\
    \midrule
    0.0 & $0.934 \pm 0.006$ & 5/5 & $0.978 \pm 0.044$ \\
    0.1 & $0.936 \pm 0.005$ & 5/5 & $0.911 \pm 0.130$ \\
    0.3 & $0.920 \pm 0.021$ & 4/5 & $0.744 \pm 0.276$ \\
    0.5 & $0.941 \pm 0.011$ & 4/5 & $0.713 \pm 0.216$ \\
    0.7 & $0.945 \pm 0.014$ & 4/5 & $0.683 \pm 0.178$ \\
    1.0 & $0.894 \pm 0.045$ & 3/5 & $0.651 \pm 0.286$ \\
    1.5 & $0.907 \pm 0.037$ & 3/5 & $0.544 \pm 0.070$ \\
    2.0 & $0.889 \pm 0.041$ & 3/5 & $0.538 \pm 0.166$ \\
    \bottomrule
  \end{tabular}
\end{table}

The sweep is consistent with Theorem~\ref{thm:finite}'s effective-noise picture: as $\alpha$ increases, the non-additive variance contribution grows and the additive decoder's support recovery becomes harder, while the encoder's latent recovery (MCC) remains robust in a wider range. Support recovery leaves the high-reliability range between $\alpha=0.1$ and $\alpha=0.3$, then degrades more gradually, falling below $0.6$ only at the strongest interaction levels ($\alpha \geq 1.5$).

\subsection{Sample-complexity sweep (N/D)}
\label{app:synthetic:nd}

We subsample the synthetic benchmark by retaining every $k$-th frame (preserving temporal ordering) to vary $N/D$ from 5 to 5000, training MOSAIC with the same hyperparameters across all levels (3 seeds per level). The main benchmark uses the full $N=420$K samples; the subsampled levels below use up to $N=150$K.

\begin{table}[h]
  \caption{$N/D$ sweep on the synthetic benchmark (3 seeds per level). MCC and $X_Z$@top3 are seed means $\pm$ std.}
  \label{tab:synthetic_nd}
  \centering
  \small
  \begin{tabular}{rrccc}
    \toprule
    N/D & N & MCC $\uparrow$ & Regime Acc & $X_Z$@top3 $\uparrow$ \\
    \midrule
    5 & 150 & $0.558 \pm 0.051$ & $0.938 \pm 0.050$ & $0.022 \pm 0.039$ \\
    10 & 300 & $0.568 \pm 0.079$ & $0.956 \pm 0.017$ & $0.044 \pm 0.077$ \\
    25 & 750 & $0.561 \pm 0.066$ & $0.998 \pm 0.003$ & $0.044 \pm 0.077$ \\
    50 & 1500 & $0.663 \pm 0.038$ & $0.999 \pm 0.001$ & $0.072 \pm 0.075$ \\
    100 & 3000 & $0.657 \pm 0.061$ & $0.999 \pm 0.001$ & $0.219 \pm 0.038$ \\
    250 & 7500 & $0.688 \pm 0.043$ & $1.000$ & $0.089 \pm 0.102$ \\
    500 & 15000 & $0.727 \pm 0.044$ & $1.000$ & $0.503 \pm 0.067$ \\
    1000 & 30000 & $0.728 \pm 0.005$ & $1.000$ & $0.724 \pm 0.164$ \\
    5000 & 150000 & $0.814 \pm 0.048$ & $1.000$ & $0.830 \pm 0.058$ \\
    \bottomrule
  \end{tabular}
\end{table}

Two patterns emerge. First, MCC climbs smoothly from 0.56 to 0.81 with no phase transition, indicating that latent identifiability degrades gradually at low $N$. Second, $X_Z$@top3 exhibits a sharper transition: near zero for $N/D \leq 50$, rising through 0.50 around $N/D{=}500$, and reaching 0.83 at $N/D{=}5000$. Support recovery is more data-hungry than latent recovery, consistent with Theorem~\ref{thm:finite}'s prediction that the sample requirement scales with the signal-to-noise ratio $(\sigma_\epsilon^2 + \sigma_{r,\max}^2) / \sigma_{\min}^2$, which is a fixed population-level quantity that determines the $N$ threshold for reliable recovery.

Regime accuracy saturates at $N/D \geq 25$ (above 0.998), confirming that regime-label signal is not the bottleneck. Note that temporal subsampling reduces not only $N$ but also the density of temporal transitions, which may additionally degrade the temporal prior's nonstationarity signal at very low $N/D$.

\subsection{Case study: regime definition determines the scientific question}
\label{app:synthetic:regime_def}

The regime label encodes the scientific question that MOSAIC answers. We demonstrate this on the synthetic benchmark by applying two regime definitions to the same data, each targeting a different physical contrast, and showing that MOSAIC returns a different (and correct) regime-associated factor in each case.

\begin{figure}[h]
\centering
\includegraphics[width=\textwidth]{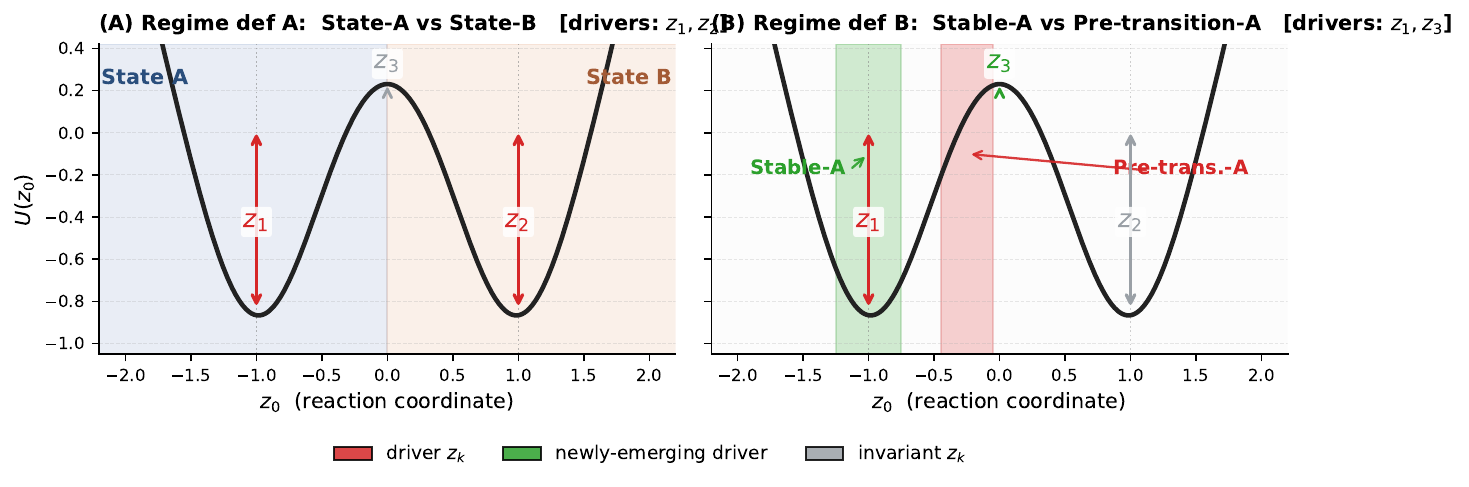}\\[2pt]
\includegraphics[width=\textwidth]{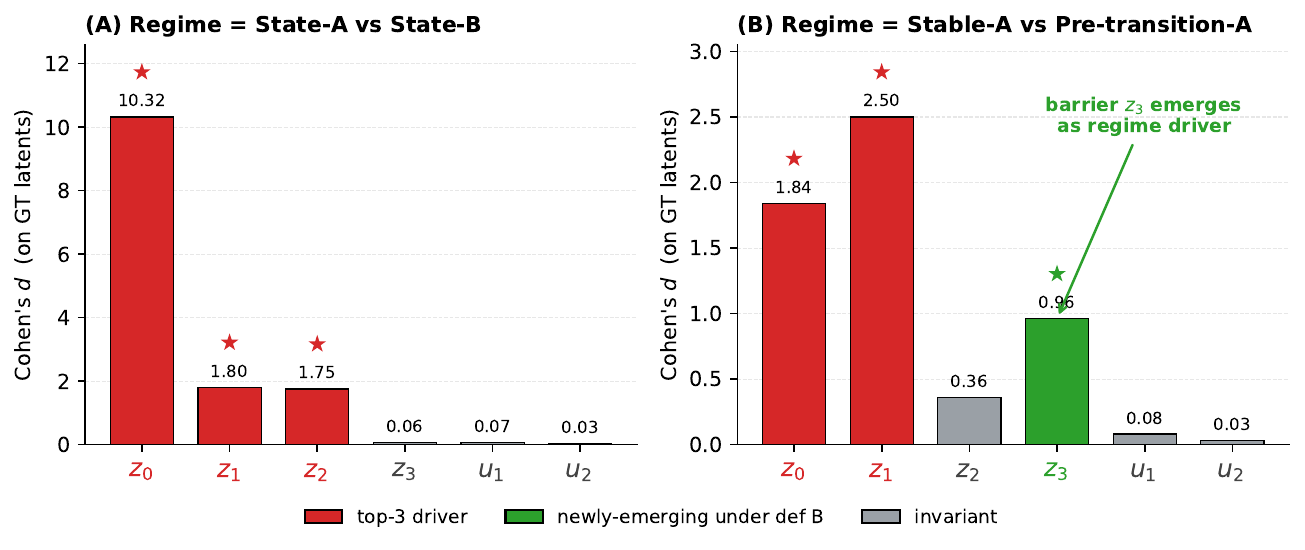}
\caption{Regime definition determines the scientific question. \textbf{Top row}: the two regime partitions overlaid on the same parametric-coupling double-well, with $z_1$ (left-well depth), $z_2$ (right-well depth), and $z_3$ (barrier height) annotated in each panel; colour marks each factor as regime-varying (red), newly regime-associated (green), or invariant (gray) under the regime shown. (A) Def A splits the reaction coordinate into the left well (State A) and right well (State B); both well-depth factors $z_1$ and $z_2$ are regime-varying. (B) Def B splits the left well temporally into Stable-A (near the well bottom) and Pre-transition-A (approaching the barrier); the barrier-height factor $z_3$ becomes most regime-associated while $z_2$ becomes invariant. \textbf{Bottom row}: Cohen's $d$ of each ground-truth latent confirms the shift. The same latent system yields a different regime-association set because the scientific question encoded in the regime label has changed.}
\label{fig:regime_def}
\end{figure}

\textbf{Definition A: which latent modules distinguish State A from State B?}
Regime labels are assigned by well occupancy (State A vs.\ State B). Under this definition, MOSAIC recovers the full module structure (MCC $\approx$ 0.85, $X_Z$@top3 $\approx$ 0.88) and correctly identifies the reaction-coordinate factors as regime-varying. This is the standard setting reported in Section~\ref{sec:synthetic}.

\textbf{Definition B: what signals precede a transition from A to B?}
We redefine regimes as stable-A (particle in well A with no transition within 50 frames) versus pre-transition-A (particle in well A but about to transition within 50 frames). This asks a fundamentally different question: not ``what differs between states'' but ``what predicts an upcoming transition.''

Under Definition B, MCC drops from 0.85 to $0.565 \pm 0.042$ and $X_Z$@top3 drops from $\sim$0.88 to $0.022 \pm 0.039$ (Table~\ref{tab:regime_def_b}). This degradation is physically expected: the pre-transition signal depends on future behavior, but MOSAIC's temporal prior conditions only on past observations. The temporal prior can detect distributional differences between regimes that are visible in the current and past frames; it cannot anticipate future transitions that have not yet manifested in observable dynamics.

\begin{table}[h]
  \caption{Regime Definition B (pre-transition labeling) on the synthetic benchmark, 3 seeds. Definition A (well-occupancy) achieves MCC $\approx$ 0.85, $X_Z$@top3 $\approx$ 0.88.}
  \label{tab:regime_def_b}
  \centering
  \small
  \begin{tabular}{rcccc}
    \toprule
    Seed & Regime Acc & MCC & Z@top3 & $X_Z$@top3 \\
    \midrule
    0 & 0.901 & 0.528 & 0/3 & 0.000 \\
    42 & 0.931 & 0.609 & 2/3 & 0.067 \\
    123 & 0.936 & 0.559 & 1/3 & 0.000 \\
    \midrule
    Mean & 0.923 & 0.565 & $\sim$1/3 & 0.022 \\
    \bottomrule
  \end{tabular}
\end{table}

\textbf{Takeaway.}
The contrast between Definitions A and B clarifies the scope of MOSAIC's regime-association concept: it identifies factors whose distributions differ most across the user-specified regimes, not factors that predict future regime transitions. This is analogous to the RNA cross-regime result (Section~\ref{sec:rna}), where switching the regime definition from closing-pair contacts to outer-stem contacts shifts the top regime-associated factor from Loop to Stem. In both cases, MOSAIC faithfully answers the question encoded in the regime label. Users should choose regime definitions that reflect the scientific contrast of interest.

\subsection{Unsupervised regime discovery via k-means}
\label{app:synthetic:auto_regime}

MOSAIC requires binary regime labels. We test whether unsupervised regime discovery can substitute for ground-truth labels by running k-means ($k{=}2$) on the raw observations to generate pseudo-regime labels, then training MOSAIC with these labels (3 seeds).

\begin{table}[h]
  \caption{Auto-regime on the synthetic benchmark: k-means pseudo-labels vs.\ ground-truth labels. NMI and agreement are between pseudo and GT labels.}
  \label{tab:auto_regime}
  \centering
  \small
  \begin{tabular}{rccccc}
    \toprule
    Seed & NMI & Agreement & MCC & Z@top3 & $X_Z$@top3 \\
    \midrule
    0 & 0.959 & 99.5\% & 0.877 & 3/3 & 0.717 \\
    42 & 0.959 & 99.5\% & 0.858 & 2/3 & 0.600 \\
    123 & 0.959 & 99.5\% & 0.823 & 2/3 & 0.600 \\
    \midrule
    Mean & & & $0.853 \pm 0.027$ & & $0.639 \pm 0.068$ \\
    \bottomrule
  \end{tabular}
\end{table}

On the synthetic benchmark, k-means nearly perfectly recovers the ground-truth regime (NMI\,=\,0.959, 99.5\% agreement), and MOSAIC trained on pseudo-labels achieves MCC\,=\,0.853, matching the supervised result ($\sim$0.85). $X_Z$@top3 is lower (0.64 vs.\ $\sim$0.88), possibly because minor label noise at regime boundaries degrades the sparsity penalty signal. This validates unsupervised regime discovery on well-separated synthetic data; real-data domains with less clear-cut regime structure may require iterative label refinement.

\section{Theory Proofs and Finite-Sample Rates}
\label{app:theory}

\subsection{Population Identifiability Proofs}
\label{app:proofs_population}

\textbf{Proof of Proposition~\ref{prop:no_fp} (Main-Effect Identifiability Without False Positives).}
The ANOVA decomposition $g(\vz) = g_0 + \sum_j g_j(z_j) + r(\vz)$ is the unique $L^2(\mu)$-orthogonal expansion satisfying $\E_{\mu_j}[g_j(z_j)] = 0$ and $r$ orthogonal to every univariate function of a single $z_j$~\citep{hoeffding1992class}. Hence the population MSE decomposes as
$\E_{\mu}\|\vx - \hat g(\vz)\|^2 = \E\|\boldsymbol{\epsilon}\|^2 + \sum_j \E_{\mu_j}\|g_j(z_j) - \hat f_j(z_j)\|^2 + \E_{\mu}\|r(\vz)\|^2$,
with no cross terms. The sum is minimized independently in each $\hat f_j$ by $\hat f_j = g_j$, and the residual $\E_{\mu}\|r\|^2$ is irreducible by any additive model. Therefore $\hat f_j^{(i)}$ is non-constant iff $g_j^{(i)}$ is non-constant, so $\hat{\mathcal{S}}_j = \mathcal{S}_j \subseteq \mathcal{S}_j^{\mathrm{full}}$, where $\mathcal{S}_j^{\mathrm{full}}$ denotes the full support of $g$ with respect to $z_j$. \hfill $\square$

\textbf{Proof of Theorem~\ref{thm:module} (Module Structure Identifiability).}
Under the theorem's hypothesis both models have $\rho = 0$, so $g(\vz) = g_0 + \sum_j g_j(z_j)$ and $\tilde g(\vz) = \tilde g_0 + \sum_k \tilde g_k(\tilde z_k)$ are exactly additive.

\textbf{Step 1 (Latent identifiability).} By Assumption~A2 and the identifiability result of \citet{yao2022temporally} (Theorem~1), the latent representations of the two models are related by $\tilde{z}_k = \phi_{\pi(k)}(z_{\pi(k)})$ for all $k \in [n]$, where $\pi$ is a permutation and each $\phi_j: \real \to \real$ is a smooth diffeomorphism.

\textbf{Step 2 (Additive structure constrains the mapping).} At the population optimum the two models have the same conditional mean, so $\sum_{j=1}^{n} g_j(z_j) + g_0 = \sum_{k=1}^{n} \tilde{g}_k(\tilde{z}_k) + \tilde{g}_0$. Substituting $\tilde{z}_k = \phi_{\pi(k)}(z_{\pi(k)})$ and re-indexing via $k = \pi^{-1}(j)$ gives two additive decompositions of the same function of $\vz$. After absorbing component means into the intercept, the centered Hoeffding decomposition under the product reference measure is unique. Therefore $g_j(z_j) = \tilde{g}_{\pi^{-1}(j)}(\phi_j(z_j)) + c_j$ for all $j \in [n]$, where $c_j \in \real^D$ are constants satisfying $\sum_j c_j = \tilde{g}_0 - g_0$.

\textbf{Step 3 (Support preservation).} Examining coordinate $i$: $g_j^{(i)}(z_j) = \tilde{g}_{\pi^{-1}(j)}^{(i)}(\phi_j(z_j)) + c_j^{(i)}$. If $i \in \mathcal{S}_j$ (i.e., $g_j^{(i)}$ is non-constant), then since $\phi_j$ is a diffeomorphism, the composition $\tilde{g}_{\pi^{-1}(j)}^{(i)} \circ \phi_j$ is also non-constant, implying $i \in \tilde{\mathcal{S}}_{\pi^{-1}(j)}$. The converse follows by applying the same argument with $\phi_j^{-1}$. Therefore $\mathcal{S}_j = \tilde{\mathcal{S}}_{\pi^{-1}(j)}$ for all $j$. \hfill $\square$

\textbf{Proof of Corollary~\ref{cor:driver} (Regime-Association Identifiability).}
By Step~1 of Theorem~\ref{thm:module}, $\tilde{z}_k = \phi_{\pi(k)}(z_{\pi(k)})$ where each $\phi_j$ is invertible. The KL divergence is invariant under invertible transformations: $\mathrm{KL}(p(\tilde{z}_k \mid c{=}0) \| p(\tilde{z}_k \mid c{=}1)) = \mathrm{KL}(p(z_{\pi(k)} \mid c{=}0) \| p(z_{\pi(k)} \mid c{=}1))$. Therefore the ranking, and hence the argmax set, is preserved under $\pi^{-1}$; if the maximizer is unique, $\tilde{j}^* = \pi^{-1}(j^*)$. \hfill $\square$

\subsection{Product Measure Versus Data Distribution}
\label{app:product_measure_gap}

Proposition~\ref{prop:no_fp} defines main-effect identifiability under the product reference measure $\mu = \prod_j \mu_j$, while the Stage~2 decoder is optimized under the empirical distribution induced by the frozen encoder. Two observations bridge this gap.

First, the ANOVA main effects $\{g_j\}$ are population-level objects uniquely determined by $g$ and $\mu$; they serve as \emph{targets} for the additive decoder rather than quantities that the decoder defines.

Second, by the time Stage~2 begins, Stage~1 identifiability (Assumption~A2) has already pushed the encoder toward a representation whose marginal factorizes across dimensions up to permutation and component-wise transformation. Any residual statistical dependence among the encoded coordinates is absorbed by the interaction term $r(\vz)$ in the ANOVA expansion, leaving the per-component main effects $g_j$ unchanged. The practical consequence is that the Stage~2 additive decoder targets the correct main effects to the extent that the encoder's posterior marginals approximate independent factors, a condition that improves as Stage~1 identifiability tightens.

We emphasize that this product-versus-empirical gap is not specific to MOSAIC: all temporal CRL methods that invoke nonlinear ICA identifiability face the same discrepancy between the theoretical reference measure and the learned encoder distribution. Proposition~\ref{prop:no_fp} is a statement about the population optimum under $\mu$; the finite-sample gap between $\mu$ and the empirical encoder distribution is part of the overall estimation error characterized empirically in Appendices~\ref{app:synthetic:nd} and~\ref{app:synthetic:interactions}.

\subsection{Finite-Sample Rates for Sparse Additive Estimators}
\label{app:finite_sample}

The population identifiability results above hold for any smooth estimator. The two results in this subsection, together with their proofs, instantiate finite-sample guarantees for one specific estimator class: the classical sparse additive model with B-spline main effects and group-lasso regularization~\citep{ravikumar2009sparse}. They serve as a reference point for what is achievable in the additive ANOVA framework, but do not directly govern the MLP-based decoder used by MOSAIC. The empirical finite-sample behavior of MOSAIC's implementation is characterized in Appendices~\ref{app:synthetic:nd},~\ref{app:synthetic:interactions}, and~\ref{app:rho_calibration}; the design rationale and empirical comparison between entropy and group-lasso regularization are given in Appendix~\ref{app:entropy_vs_grouplasso}.

\begin{proposition}[Finite-Sample Main-Effect Estimation]
\label{prop:finite_sample_main_effect}
Let $\hat f_j^{(N)}$ be the additive fit to $N$ i.i.d.\ samples using a per-component B-spline basis with $m_N=O(N^{1/5})$. Under standard smoothness and sub-Gaussian design conditions~\citep{ravikumar2009sparse},
\begin{equation}
\E_{\mu}\bigl\|\hat f_j^{(N),(i)} - g_j^{(i)}\bigr\|^2 = O(N^{-4/5}) \quad \forall j \in [n],\, i \in [D],
\end{equation}
with constants depending on smoothness and on $\sigma_\epsilon^2+\E_{\mu}\|r(\vz)\|^2$. This is the standard univariate spline rate for nonparametric regression. For time series, $N$ refers to independent windows, or to the effective sample size under mixing.
\end{proposition}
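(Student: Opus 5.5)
The plan is to reduce the statement to a standard nonparametric additive-regression rate. The reduction treats the true main effects $g_j$ as the regression targets and folds the interaction remainder $r(\vz)$ into the noise. Fix an output coordinate $i$ and write
\begin{equation}
x_i = g_0^{(i)} + \sum_j g_j^{(i)}(z_j) + \eta_i, \qquad \eta_i := r^{(i)}(\vz) + \epsilon_i .
\end{equation}
The proof of Proposition~\ref{prop:no_fp} (Hoeffding orthogonality under the product measure $\mu$) gives that $\eta_i$ is mean zero and $L^2(\mu)$-orthogonal to every centered univariate function of a single $z_j$. Its variance is $\sigma_\epsilon^2 + \E_\mu|r^{(i)}|^2 \le \sigma_\epsilon^2 + \E_\mu\|r\|^2$. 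So per channel we have an additive model with a misspecification-free projection target, namely $\{g_j^{(i)}\}$, and an effective noise of the stated size. The $D$ channels share the design, so the argument runs channel by channel, and the bound is uniform in $i$ because the constants depend only on smoothness and on this variance.

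The first step is approximation. Assume each $g_j^{(i)}$ lies in a Hölder/Sobolev class of order $2$. A centered B-spline space $\mathcal{B}_{m}$ of dimension $m_N$ then approximates it with $\inf_{h\in\mathcal{B}_m}\E_{\mu_j}|g_j^{(i)}-h|^2 = O(m_N^{-4})$. The second step is estimation. The fit is least squares over the additive span $\bigoplus_j \mathcal{B}_m$, which has dimension $n m_N$. Under the sub-Gaussian design, the empirical Gram matrix concentrates around its population counterpart. Under the product measure the population Gram matrix is block diagonal across $j$ after centering, so its minimum eigenvalue is bounded below by $c/m_N$ (standard B-spline conditioning). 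Hence the empirical and population norms are equivalent on the span with high probability once $m_N \log m_N / N \to 0$. On that event, standard projection arguments bound the variance term by $O(\mathrm{Var}(\eta_i)\, n m_N / N)$. This bound relies only on $\eta_i$ being uncorrelated with the span, not on independence from $\vz$, and it uses the sub-Gaussian tails of $r$ and $\epsilon$. Together,
\begin{equation}
\E_\mu\bigl\|\hat f_j^{(N),(i)} - g_j^{(i)}\bigr\|^2 \lesssim m_N^{-4} + \frac{(\sigma_\epsilon^2 + \E_\mu\|r\|^2)\, m_N}{N},
\end{equation}
and choosing $m_N \asymp N^{1/5}$ balances the two terms at $O(N^{-4/5})$. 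The additive-norm bound transfers to each component separately because the block-diagonal structure makes the components' norms equivalent to the sum (a stable-decomposition or compatibility constant). For time series, the same bound holds with $N$ replaced by the number of independent windows, or by the effective sample size via a blocking/$\beta$-mixing argument. The sparse group-lasso version of \citet{ravikumar2009sparse} inherits the same rate on the true support, so the result can simply be invoked there.

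The main obstacle is the variance step with a non-independent noise. The remainder $r(\vz)$ is a deterministic function of the design, so it is orthogonal to the spline span only in population, not conditionally on the sample. The empirical cross term $\frac1N\sum_t r(\vz_t)h(\vz_t)$ must therefore be controlled uniformly over the finite-dimensional span. I would handle this first by treating $r$ as a bounded, mean-zero-after-projection random variable and applying a matrix Bernstein bound to $\frac1N\sum_t r(\vz_t)\,\mathbf{B}(\vz_t)$. Its squared norm has expectation $O(\E\|r\|^2 m_N/N)$, which gives exactly the claimed dependence of the constant on $\E_\mu\|r\|^2$.
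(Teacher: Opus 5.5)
Your proposal is correct, but it reaches the result by a genuinely different route from the paper. Both arguments share the per-channel reduction with $\eta_i = r^{(i)}(\vz)+\epsilon_i$ folded into the noise. The paper then simply invokes Theorem~3 of \citet{ravikumar2009sparse} channel by channel. It asserts that $r$ enters only through $\sigma_\epsilon^2+\E_\mu\|r\|^2$ ``by the same orthogonality argument'' as Theorem~\ref{thm:finite}, whose own sketch admits that $\eta_i$ depends on $\vz$ and defers to a fixed-design variant or an extra assumption.

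You instead give a self-contained sieve least-squares argument, the classical Stone/Huang-style analysis of additive spline regression: bias $O(m_N^{-4})$, variance $O(nm_N/N)$ on the norm-equivalence event, balanced at $m_N\asymp N^{1/5}$. This buys three things. It analyzes the unpenalized additive spline fit that the statement actually describes. It makes the transfer from joint error to per-component error exact, since under $\mu$ the centered components are orthogonal and the constant is one. And your cross-term step genuinely derives the $\E_\mu\|r\|^2$ dependence that the paper only asserts. For the expectation bound, the second-moment identity $\E\|N^{-1}\sum_t r^{(i)}(\vz_t)\tilde{\mathbf B}(\vz_t)\|^2 = N^{-1}\E[(r^{(i)})^2\|\tilde{\mathbf B}\|^2]$ already suffices; it follows from i.i.d.\ sampling and $\E[r^{(i)}\tilde{\mathbf B}]=0$. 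Matrix Bernstein is needed only for a high-probability version.

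The price is regularity that ``sub-Gaussian design'' does not supply. The $c/m_N$ eigenvalue bound and the spline approximation rate both need each $\mu_j$ compactly supported with density bounded away from zero, or a truncation argument. So does the leverage bound $\sup_{\vz}\|\tilde{\mathbf B}(\vz)\|^2\lesssim nm_N$, which is what converts $\E[\eta_i^2\|\tilde{\mathbf B}\|^2]$ into $nm_N\E[\eta_i^2]$ when $\eta_i$ depends on $\vz$. Your bound also holds on a high-probability event, so stating it in expectation needs a further truncation step. The paper's route is shorter and shares its citation with Theorem~\ref{thm:finite}, but it leaves the design dependence of $\eta_i$ unresolved; your argument handles it explicitly.
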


\textbf{Proof of Proposition~\ref{prop:finite_sample_main_effect}.}
Apply Theorem~3 of~\citet{ravikumar2009sparse} to each output channel $i \in [D]$ separately. Under sub-Gaussian design and the standard smoothness assumption that each $g_j^{(i)}$ lies in a second-order Sobolev class, spline smoothing with $m_N = O(N^{1/5})$ basis functions attains the univariate minimax rate $\E_{\mu}\|\hat f_j^{(N),(i)} - g_j^{(i)}\|^2 = O(N^{-4/5})$. The $L^2(\mu)$-orthogonality of the ANOVA decomposition makes the population MSE separable across channels. Each channel independently achieves the $O(N^{-4/5})$ rate, so the per-channel convergence exponent is unaffected by $D$; the total squared error summed over all $D$ channels scales as $O(D \cdot N^{-4/5})$, but the rate exponent remains $-4/5$ per channel. The interaction residual $r(\vz)$ contributes to the effective noise variance $\sigma_\epsilon^2 + \E_{\mu}\|r\|^2$ by the same orthogonality argument used in the proof of Theorem~\ref{thm:finite}. \hfill $\square$

\begin{theorem}[Finite-Sample Support Recovery, Group-Lasso Variant]
\label{thm:finite}
Let $\sigma_{\min}^2 = \min_{j, i \in \mathcal{S}_j} \mathrm{Var}_{\mu_j}[g_j^{(i)}(z_j)]$ denote the minimum main-effect signal, let $\sigma_{r,\max}^2 = \max_{i \in [D]} \mathrm{Var}_{\mu}[r^{(i)}(\vz)]$ denote the maximum per-channel interaction variance, and let $N$ denote the number of independent samples (or effective sample size under mixing). Under Assumption~A1$'$, sub-Gaussian design, and the sparse-additive incoherence and restricted eigenvalue conditions of \citet{ravikumar2009sparse} applied per output channel with $m_N = O(N^{1/5})$ B-spline basis functions, the group-lasso-regularized additive decoder recovers $\mathcal{S}_j$ for all $j \in [n]$ with high probability provided
\begin{equation}
N \;\gtrsim\; C(s_{\max}, n, m_N) \;\cdot\; \frac{\sigma_\epsilon^2 + \sigma_{r,\max}^2}{\sigma_{\min}^2},
\end{equation}
where $s_{\max} = \max_{i \in [D]} |\{j : i \in \mathcal{S}_j\}|$ is the maximum number of active factors per output channel and $C(s_{\max}, n, m_N)$ collects the sparsity- and basis-dependent constants from \citet{ravikumar2009sparse}. Under A1$'$, $s_{\max} = 1$ for all primary-support channels.
\end{theorem}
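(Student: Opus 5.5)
The argument works one output channel at a time and reduces each channel to a standard sparse additive regression, to which the support-recovery theorem of \citet{ravikumar2009sparse} applies directly. Fix a channel $i \in [D]$. By the Hoeffding decomposition \eqref{eq:anova} we write
\[
x_i = g_0^{(i)} + \sum_{j} g_j^{(i)}(z_j) + \eta_i, \qquad \eta_i := r^{(i)}(\vz) + \epsilon_i .
\]
Here $g_j^{(i)}$ is nonzero exactly when $i \in \mathcal{S}_j$, so the true active set for channel $i$ is $\mathcal{J}_i$, with $|\mathcal{J}_i| \le s_{\max}$. Since the group-lasso penalty is applied per channel over the B-spline coefficient blocks $\beta_{j}^{(i)} \in \real^{m_N}$, the estimator decouples across $i$. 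The statement ``$\mathcal{S}_j$ is recovered for all $j$'' is then equivalent to ``$\mathcal{J}_i$ is recovered for every $i$,'' because both are the same bipartite incidence read row-wise or column-wise.

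The key step is to show that the effective noise $\eta_i$ behaves like an admissible noise term in the Ravikumar et al.\ analysis. First, by the $L^2(\mu)$-orthogonality established in the proof of Proposition~\ref{prop:no_fp}, $r^{(i)}$ is orthogonal to every univariate function of any $z_j$. In particular it is orthogonal to the spline span of each block, so its population projection onto the design vanishes. As a result, the population group-lasso target remains $\{g_j^{(i)}\}$ and no bias is introduced. Second, the only remaining effect of $r^{(i)}$ is through its empirical covariance with the basis, $\frac1N \sum_t \Psi_j(z_{t,j})^\top r^{(i)}(\vz_t)$. This quantity has mean zero, and it concentrates at rate $\sqrt{\mathrm{Var}_\mu[r^{(i)}]\, m_N \log(n m_N)/N}$ under sub-Gaussian design. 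I would invoke sub-Gaussianity of $r^{(i)}$, which follows if $g$ is bounded on the design support, or else state it as part of the design conditions. Adding the independent Gaussian $\epsilon_i$ then yields an effective noise level $\sigma_\epsilon^2 + \sigma_{r,\max}^2$ uniformly over $i$.

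With this in hand, I would plug into the primal--dual witness argument of \citet{ravikumar2009sparse}, using the assumed incoherence and restricted eigenvalue conditions on each channel. The argument has two parts, each requiring a choice of the regularization parameter:
\begin{itemize}
\item[(a)] Strict dual feasibility on inactive blocks needs $\lambda_N \gtrsim \sqrt{(\sigma_\epsilon^2+\sigma_{r,\max}^2)\, m_N \log(n m_N)/N}$, together with the spline approximation bias of order $m_N^{-2}$.
\item[(b)] No active block may be zeroed, which is a beta-min condition: $\min_{j\in\mathcal{J}_i}\|g_j^{(i)}\|_{L^2(\mu)} \ge \sigma_{\min}$ must exceed a constant times $\sqrt{s_{\max}}\,\lambda_N$ plus the approximation error.
\end{itemize}
Solving (a) and (b) for $N$ gives $N \gtrsim C(s_{\max},n,m_N)\,(\sigma_\epsilon^2+\sigma_{r,\max}^2)/\sigma_{\min}^2$, with the logarithmic and basis factors absorbed into $C$. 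Under A1$'$, $|\mathcal{J}_i|\le 1$, so $s_{\max}=1$ on the primary-support channels.

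Finally, a union bound over the $D$ channels is needed. Because all channels share the same $N$ design points $\{\vz_t\}$, the design-side events (restricted eigenvalue and incoherence holding empirically) are common to all channels and need to be controlled only once. The noise-side events cost only an additive $\log D$ inside the high-probability statement. This is what keeps the sample-complexity threshold free of $D$ up to that log factor, which is absorbed into ``with high probability.''

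The main obstacle is the treatment of $r^{(i)}$. It is not independent of the design; it is a deterministic function of $\vz$. The classical theorem assumes independent noise, so its proof must be re-entered at the concentration step. Population orthogonality must be converted into an empirical bound on the cross term, and this step needs boundedness or sub-Gaussian tails of $r^{(i)}$. For time series, it also requires replacing $N$ by an effective sample size via a mixing-type Bernstein inequality.
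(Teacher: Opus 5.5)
Your proposal follows essentially the same route as the paper's proof sketch. Both reduce each channel to a sparse additive regression with effective noise $\eta_i = r^{(i)}(\vz) + \epsilon_i$, use Hoeffding orthogonality to make $r^{(i)}$ uncorrelated with the univariate regressors, and use boundedness/sub-Gaussianity to obtain the effective noise level $\sigma_\epsilon^2 + \sigma_{r,\max}^2$, then apply the Ravikumar et al.\ per-channel bound with the worst channel setting the requirement. Your explicit primal--dual witness steps, your pinpointing of the cross term $\frac1N\sum_t \Psi_j(z_{t,j})^\top r^{(i)}(\vz_t)$ as where independence must be replaced, and your $\log D$ union-bound remark are faithful refinements of points the paper leaves implicit (it appeals to a fixed-design variant and simply takes the worst-case channel).
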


The $D$ output channels share the same $N$ design points (the latent samples $\hat{\vz}_t$), so the sample complexity does not scale with $D$; it is governed by the per-channel signal-to-noise ratio $\sigma_{\min}^2 / (\sigma_\epsilon^2 + \sigma_{r,\max}^2)$ and the sparsity level $s_{\max}$. Higher-order interactions inflate $\sigma_{r,\max}^2$ and therefore the sample requirement, while stronger main-effect signals $\sigma_{\min}^2$ relax it.

\textbf{Proof sketch of Theorem~\ref{thm:finite}.}
We apply the support recovery analysis of \citet{ravikumar2009sparse} per output channel. For each channel $i \in [D]$, write the additive regression as $x_i = \sum_j g_j^{(i)}(z_j) + \eta_i$, where $\eta_i = r^{(i)}(\vz) + \epsilon_i$ is the effective residual. Two properties make this reduction valid. First, $L^2(\mu)$-orthogonality of the ANOVA decomposition ensures $\E_\mu[\eta_i \cdot h(z_j)] = \E_\mu[\epsilon_i] \cdot \E_\mu[h(z_j)] = 0$ for any univariate $h$ with $\E_{\mu_j}[h] = 0$, so $\eta_i$ is uncorrelated with the additive regressors under the product measure. Second, because $r^{(i)}$ is a smooth bounded function of $\vz$ and $\epsilon_i$ is Gaussian, the sum $\eta_i$ is sub-Gaussian with parameter bounded by $\sigma_\epsilon^2 + \mathrm{Var}_\mu[r^{(i)}(\vz)]$, provided the latent distribution $\mu$ has sub-Gaussian marginals (a condition satisfied by the Laplace transition prior). We note that $\eta_i$ is not independent of $\vz$, since $r^{(i)}$ is a deterministic function of $\vz$; the application of \citet{ravikumar2009sparse} therefore requires their fixed-design variant or an additional assumption that the empirical covariance of the B-spline basis concentrates despite the dependence, which holds under the sub-Gaussian design condition stated in the theorem. Applying the per-channel support recovery bound, the worst-case channel determines the overall sample requirement, giving the stated bound with $\sigma_{r,\max}^2 = \max_i \mathrm{Var}_\mu[r^{(i)}(\vz)]$. At $\rho = 0$, $r \equiv 0$ and the bound reduces to the standard sparse additive model guarantee with Gaussian noise alone. Assumption~A1$'$ ensures $s_{\max} = 1$ for all primary-support channels, simplifying the incoherence requirement. Proposition~\ref{prop:partial_overlap} below records the hard-assignment behavior when supports overlap. \hfill $\square$

\subsection{Shared-Support Assignment and Entropy-Variant Lemmas}
\label{app:proofs_entropy}

\textbf{Proof of Proposition~\ref{prop:partial_overlap}.}
(i) By $L^2(\mu)$-orthogonality of the ANOVA decomposition applied per channel, the population MSE for channel $i$ decomposes as $\sum_{j': i \in \mathcal{S}_{j'}} \E_{\mu_{j'}}\|g_{j'}^{(i)} - \hat f_{j'}^{(i)}\|^2 + \text{const}$, minimized independently in each summand by $\hat f_{j'}^{(i)} = g_{j'}^{(i)}$. The argument is identical to Proposition~\ref{prop:no_fp}, applied channel-wise and in parallel to all factors whose support contains $i$.

(ii) For any factor $\ell \notin \mathcal{J}_i$, $g_\ell^{(i)} \equiv 0$, so any valid influence score has $A_{i\ell}=0$. If the in-support maximum is strictly positive and tie-breaking does not prefer zero-score factors, the argmax cannot select $\ell\notin\mathcal{J}_i$. Among overlapping factors, the rule selects the largest observed influence score by definition. The finite contrast $|f_m(+1)_i-f_m(-1)_i|$ used in MOSAIC is one such score when the contrast is non-degenerate; integrated or quantile contrasts can replace it for symmetric responses. \hfill $\square$

\textbf{Scores for symmetric responses.}
The finite contrast $|f_j(+1)_i - f_j(-1)_i|$ vanishes for even main effects such as $g_j^{(i)}(z_j) = a z_j^2$. Two alternative scores handle this case. The \emph{variance score} $A_{ij}^{\mathrm{var}} = \mathrm{Var}_{z_j \sim \hat\mu_j}[f_j(z_j)_i]$, estimated from the empirical encoder marginal, is nonzero whenever $f_j^{(i)}$ is non-constant regardless of symmetry. The \emph{range score} $A_{ij}^{\mathrm{range}} = \max_{z_j \in [\hat q_{0.05}, \hat q_{0.95}]} f_j(z_j)_i - \min_{z_j \in [\hat q_{0.05}, \hat q_{0.95}]} f_j(z_j)_i$ provides a robust finite-sample alternative. In our experiments, the monotonic nonlinear mixing functions (Section~\ref{sec:synthetic}) and the physical distance features (Section~\ref{sec:rna}) produce non-symmetric main effects, so the default finite contrast suffices; the variance and range scores are provided for generality.

\textbf{Implementation-relevant statements for the entropy penalty.}
The following lemma and proposition describe properties of the entropy-regularized implementation. They do not supply a finite-sample support-recovery theorem (that is an open problem; see Appendix~\ref{app:entropy_vs_grouplasso}), but they record the weaker facts that entropy biases the decoder toward concentrated influence columns and that exact top-$k$ support recovery follows under an influence-gap condition on the estimated column.

\begin{lemma}[Entropy Penalizes Diffuse Influence Columns]
\label{lem:entropy_concentration}
Let $a \in \mathbb{R}^D_{\ge 0}$ be a nonzero influence column and define $p_i = a_i / \sum_{\ell=1}^D a_\ell$. Then
\[
0 \le H(p) \le \log D,
\]
where the lower bound is attained by one-hot columns and the upper bound is attained by the uniform column. Thus, for fixed column magnitude, minimizing $H(p)$ biases the decoder toward concentrated rather than diffuse influence columns.
\end{lemma}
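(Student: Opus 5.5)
The plan is to treat $p$ as a probability vector on $[D]$ and derive both bounds from elementary properties of the Shannon entropy $H(p) = -\sum_i p_i \log p_i$, with the convention $0\log 0 = 0$. Since $a$ is nonzero and nonnegative, $S=\sum_\ell a_\ell>0$, so $p_i = a_i/S$ is well defined, satisfies $p_i \ge 0$, and sums to one. This places us in the standard entropy setting, and the lemma reduces to the two classical entropy bounds together with their equality cases.

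For the lower bound, I would argue termwise. For $p_i\in[0,1]$ we have $\log p_i \le 0$, so each term satisfies $-p_i\log p_i \ge 0$, and hence $H(p)\ge 0$. Equality requires every term to vanish, which happens only when each $p_i\in\{0,1\}$. Because the $p_i$ sum to one, this forces $p$ to be one-hot, that is, $a$ has exactly one nonzero entry. Conversely, a one-hot column gives $H=0$.

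For the upper bound, I would apply Jensen's inequality to the concave function $\log$ over the support $T=\{i:p_i>0\}$:
\[
H(p)=\sum_{i\in T}p_i\log\frac{1}{p_i}\le \log\sum_{i\in T}p_i\cdot\frac{1}{p_i}=\log|T|\le\log D.
\]
Equivalently, one can write $\log D - H(p) = \mathrm{KL}(p\,\|\,u)\ge 0$, where $u$ is the uniform distribution on $[D]$. Equality in the first inequality requires $1/p_i$ to be constant on $T$, and equality in the second requires $|T|=D$. Together these give $p=u$, that is, all entries $a_i$ are equal.

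The interpretive sentence follows because $p$ is invariant to rescaling $a$. The entropy therefore depends only on the shape of the column and not on its magnitude, so for fixed $\sum_i a_i$, minimizing $H$ moves the column from the uniform extreme toward the one-hot extreme. No step here is a real obstacle. The only points needing care are the $0\log 0$ convention and handling zero entries correctly in the equality cases, which is why Jensen is applied over the support $T$ rather than over all of $[D]$.
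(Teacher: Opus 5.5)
Your proof is correct and takes essentially the same approach as the paper: the paper simply invokes the standard entropy bounds for a distribution on $D$ points (point masses give $0$, the uniform distribution gives $\log D$), and you derive exactly those facts via termwise nonnegativity and Jensen's inequality on the support. Your equality-case analysis, showing that one-hot and uniform columns are the \emph{only} extremizers, is a small and correct strengthening of what the lemma states.
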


\begin{proof}
This is the standard entropy bound on a discrete distribution over finite $D$ elements. The minimum entropy is attained by any point mass, and the maximum entropy is attained by the uniform distribution.
\end{proof}

\begin{proposition}[Top-$k$ Support Consistency from Influence Separation]
\label{prop:topk_support_consistency}
For a latent factor $j$, let $A^*_{:,j}$ be the population influence column induced by the additive main effect $g_j$, and let $\mathcal{S}_j$ be its support with $k_j = |\mathcal{S}_j|$. Assume an influence gap
\[
\Delta_j = \min_{i \in \mathcal{S}_j} A^*_{ij} - \max_{i \notin \mathcal{S}_j} A^*_{ij} > 0.
\]
If an estimated additive decoder satisfies
\[
\|\hat A_{:,j} - A^*_{:,j}\|_\infty < \Delta_j / 2,
\]
then the top-$k_j$ entries of $\hat A_{:,j}$ recover $\mathcal{S}_j$ exactly.
\end{proposition}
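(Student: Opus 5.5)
The plan is a direct perturbation argument on the entries of a single column. Fix the factor $j$ and write $\delta = \|\hat A_{:,j} - A^*_{:,j}\|_\infty < \Delta_j/2$. The aim is to show that every in-support entry of $\hat A_{:,j}$ is strictly larger than every out-of-support entry. Once this holds, the top-$k_j$ set is forced to be $\mathcal{S}_j$, with no ambiguity from ties.

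\textbf{Step 1 (entrywise bounds).} For $i \in \mathcal{S}_j$ we have $\hat A_{ij} \ge A^*_{ij} - \delta \ge \min_{i' \in \mathcal{S}_j} A^*_{i'j} - \delta$. For $\ell \notin \mathcal{S}_j$ we have $\hat A_{\ell j} \le A^*_{\ell j} + \delta \le \max_{\ell' \notin \mathcal{S}_j} A^*_{\ell' j} + \delta$. Subtracting the second bound from the first gives $\hat A_{ij} - \hat A_{\ell j} \ge \Delta_j - 2\delta > 0$, using the strict inequality $\delta < \Delta_j/2$.

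\textbf{Step 2 (order statistics).} By Step 1, the $k_j$ indices of $\mathcal{S}_j$ each strictly dominate all $D - k_j$ remaining indices. Hence, in any descending sort of $\hat A_{:,j}$, the first $k_j$ positions are occupied exactly by $\mathcal{S}_j$. This holds however ties are broken within $\mathcal{S}_j$ or within its complement, since no tie can occur across the two sets. Therefore $\mathrm{Top}k_j(\hat A_{:,j}) = \mathcal{S}_j$.

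\textbf{Degenerate cases and remarks.} If $\mathcal{S}_j = [D]$, the maximum over the complement is empty. In that case I will read it as $-\infty$ (or just note that the top-$D$ set is trivially $[D]$). Under (A1), $\mathcal{S}_j$ is nonempty, so the minimum over it is well defined. I will also remark, without needing it, that $A^*_{\ell j} = 0$ off-support when $A^*$ is the finite-contrast or variance score of the population main effect $g_j = \hat f_j$ from Proposition~\ref{prop:no_fp}, so $\Delta_j = \min_{i\in\mathcal{S}_j} A^*_{ij}$. Positivity of this quantity is exactly the non-degeneracy of the score; for instance, the variance score should be used for symmetric responses.

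There is no real obstacle. The only care needed is to use the strict inequality so that no cross-set ties occur, and to handle the empty complement.
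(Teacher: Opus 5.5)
Your proposal is correct and follows essentially the same route as the paper, which also shows that $A^*_{ij} \ge A^*_{\ell j} + \Delta_j$ for every $i \in \mathcal{S}_j$ and $\ell \notin \mathcal{S}_j$, then uses the $\ell_\infty$ bound to get $\hat A_{ij} > A^*_{\ell j} + \Delta_j/2 > \hat A_{\ell j}$; this is your Step~1 written as a chain rather than a subtraction. Your handling of cross-set ties and of the empty complement is a harmless refinement that the paper leaves implicit.
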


\begin{proof}
For any $i \in \mathcal{S}_j$ and $\ell \notin \mathcal{S}_j$, the influence gap definition gives $A^*_{ij} \ge \min_{i' \in \mathcal{S}_j} A^*_{i'j} = \max_{\ell' \notin \mathcal{S}_j} A^*_{\ell' j} + \Delta_j \ge A^*_{\ell j} + \Delta_j$. Combined with $\|\hat A_{:,j} - A^*_{:,j}\|_\infty < \Delta_j/2$:
\[
\hat A_{ij} > A^*_{ij} - \Delta_j/2 \ge A^*_{\ell j} + \Delta_j - \Delta_j/2 = A^*_{\ell j} + \Delta_j/2 > \hat A_{\ell j}.
\]
Thus every true-support coordinate has strictly larger estimated influence than every false-support coordinate, so selecting the top $k_j$ entries recovers $\mathcal{S}_j$.
\end{proof}

\section{Entropy Sparsity Penalty: Mechanism and Design}
\label{app:entropy_vs_grouplasso}

The main text introduces the entropy penalty (Eq.~\ref{eq:entropy_sparsity}) as MOSAIC's mechanism for recovering module structure, but defers the question of why entropy works and why alternative sparsity penalties do not. This appendix gives a mechanistic account in four parts: (i) what entropy regularizes once the influence column is normalized, (ii) why this design is compatible with reconstruction in a way that magnitude-based penalties are not, (iii) how entropy interacts with the alive-mask to handle over-capacity, and (iv) the role of entropy relative to the population identifiability results of Section~\ref{sec:identifiability}. An important terminological distinction: Theorem~\ref{thm:finite} analyzes the \emph{basis-function} group-lasso of \citet{ravikumar2009sparse}, which groups B-spline coefficients for each (latent, channel) pair to perform per-channel variable selection. The column-level penalty compared empirically below, $\sum_j \|A_{:,j}\|_2$, groups across output channels for a given latent and targets a different object: within-column concentration of the influence matrix. Theorem~\ref{thm:finite} serves as a reference point for what the basis-function group-lasso achieves in the ANOVA framework; the empirical comparison below evaluates whether the column-level penalty or entropy is more effective at recovering module structure from MOSAIC's additive decoder.

\textbf{What entropy regularizes.}
Fix a factor index $j$ with influence column $A_{:,j} \in \mathbb{R}^D_{\geq 0}$ and let $p_{:,j} = A_{:,j} / \|A_{:,j}\|_1$ denote its normalization onto the probability simplex. The penalty
\begin{equation}
R_{\text{ent}}(A) = \frac{1}{|\mathcal{J}_{\text{alive}}|} \sum_{j \in \mathcal{J}_{\text{alive}}} H(p_{:,j})
\end{equation}
is a function of the \emph{shape} of $A_{:,j}$, not its total magnitude. Concretely, $R_{\text{ent}}$ is invariant under positive rescaling of any single column: $A_{:,j} \mapsto c \cdot A_{:,j}$ leaves $p_{:,j}$ and hence $H(p_{:,j})$ unchanged for any $c > 0$. Lemma~\ref{lem:entropy_concentration} bounds $H(p_{:,j}) \in [0, \log D]$, with the lower bound attained at one-hot columns and the upper bound at uniform columns. Minimizing $R_{\text{ent}}$ therefore biases each column toward concentration on a small subset of observed variables, the support set $\mathcal{S}_j$ of Definition~\ref{def:support}.

\textbf{Decoupling from reconstruction.}
The shape-magnitude split is what makes entropy compatible with reconstruction. The Stage~2 objective $\cL_{\text{Stage2}} = \cL_{\text{ELBO}} + \lambda(t) \cdot \cL_{\text{sparse}}$ asks each $f_j$ to carry enough signal that $\hat{\vx} = \sum_j f_j(z_j) + \mathbf{b}$ reconstructs $\vx$. This pins down the magnitude $\|A_{:,j}\|_1$, but says nothing about how that magnitude is distributed across the $D$ observation channels. Entropy operates exactly on this remaining degree of freedom: it rearranges mass within a column without contesting the column's overall scale. Reconstruction and entropy therefore act on orthogonal degrees of freedom, and an additive decoder can satisfy both: $\|A_{:,j}\|_1$ is set by the strength of the main effect $g_j$, and the shape of $A_{:,j}$ collapses onto $\mathcal{S}_j$.

This decoupling is the essential property a sparsity penalty needs in order not to fight reconstruction, and it is precisely what magnitude-based penalties lack. Element-wise $\ell_1$ ($\sum_{i,j} |A_{ij}|$) and group-lasso ($\sum_j \|A_{:,j}\|_2$) both penalize magnitude. They reduce in the same way regardless of whether the model (a) drops zero-support entries while preserving in-support magnitude, which is what we want, or (b) shrinks the entire column toward zero, which kills the latent. Under (a) reconstruction stays good but the penalty does not fall as fast; under (b) the penalty falls but reconstruction degrades. The optimization typically lands at a compromise, alive-but-diffuse columns with reduced magnitude, which violates module structure. Consider two columns with identical $\ell_1$ norm $\alpha$: $A_{:,j}^{(\text{conc})} = (\alpha, 0, \ldots, 0)$ and $A_{:,j}^{(\text{diff})} = (\alpha/D, \ldots, \alpha/D)$. Both carry the same total influence magnitude, so reconstruction treats them as equivalent alternatives. However, the column-level group-lasso penalty $\|A_{:,j}\|_2$ is $\alpha$ for the concentrated column and $\alpha/\sqrt{D}$ for the diffuse column, so it \emph{prefers} the diffuse solution. Entropy gives $H = 0$ for the concentrated column and $H = \log D$ for the diffuse column, correctly preferring concentration. This illustrates the fundamental misalignment: magnitude-based penalties reward spreading mass across channels, while entropy rewards concentrating it.

\textbf{Empirical comparison on the synthetic benchmark.}
We compare entropy and group-lasso head-to-head on the synthetic energy-landscape benchmark with $z_{\dim}=12$, sweeping the group-lasso coefficient across $\lambda \in \{10, 50, 100, 1000\}$ to identify its best operating point. Entropy uses its main-text setting $\lambda=50$. Table~\ref{tab:entropy_vs_gl} reports MCC for latent identifiability, $X_Z$@top3 for support recovery, and top-3 mass for column concentration.

\begin{table}[h]
\centering
\small
\caption{Entropy vs group-lasso on the synthetic benchmark ($z_{\dim}=12$, 5 seeds). Group-lasso is swept across four coefficients to identify its best operating point. MCC is unchanged across all variants because Stage~1 identifiability is independent of the Stage~2 sparsity choice.}
\label{tab:entropy_vs_gl}
\begin{tabular}{l r c c c}
\toprule
Variant & $\lambda$ & MCC & $X_Z$@top3 & Top-3 mass \\
\midrule
\textbf{Entropy} & \textbf{50} & $\mathbf{0.912 \pm 0.009}$ & $\mathbf{0.978 \pm 0.044}$ & $\mathbf{0.685 \pm 0.028}$ \\
\midrule
Group-lasso      & 10   & $0.912 \pm 0.009$ & $0.822 \pm 0.113$ & $0.632 \pm 0.026$ \\
Group-lasso      & 50   & $0.912 \pm 0.009$ & $0.867 \pm 0.130$ & $0.643 \pm 0.020$ \\
Group-lasso      & 100  & $0.912 \pm 0.009$ & $0.911 \pm 0.083$ & $0.642 \pm 0.015$ \\
Group-lasso      & 1000 & $0.912 \pm 0.009$ & $0.556 \pm 0.199$ & $0.683 \pm 0.036$ \\
\bottomrule
\end{tabular}
\end{table}

\begin{figure}[h]
\centering
\includegraphics[width=0.95\linewidth]{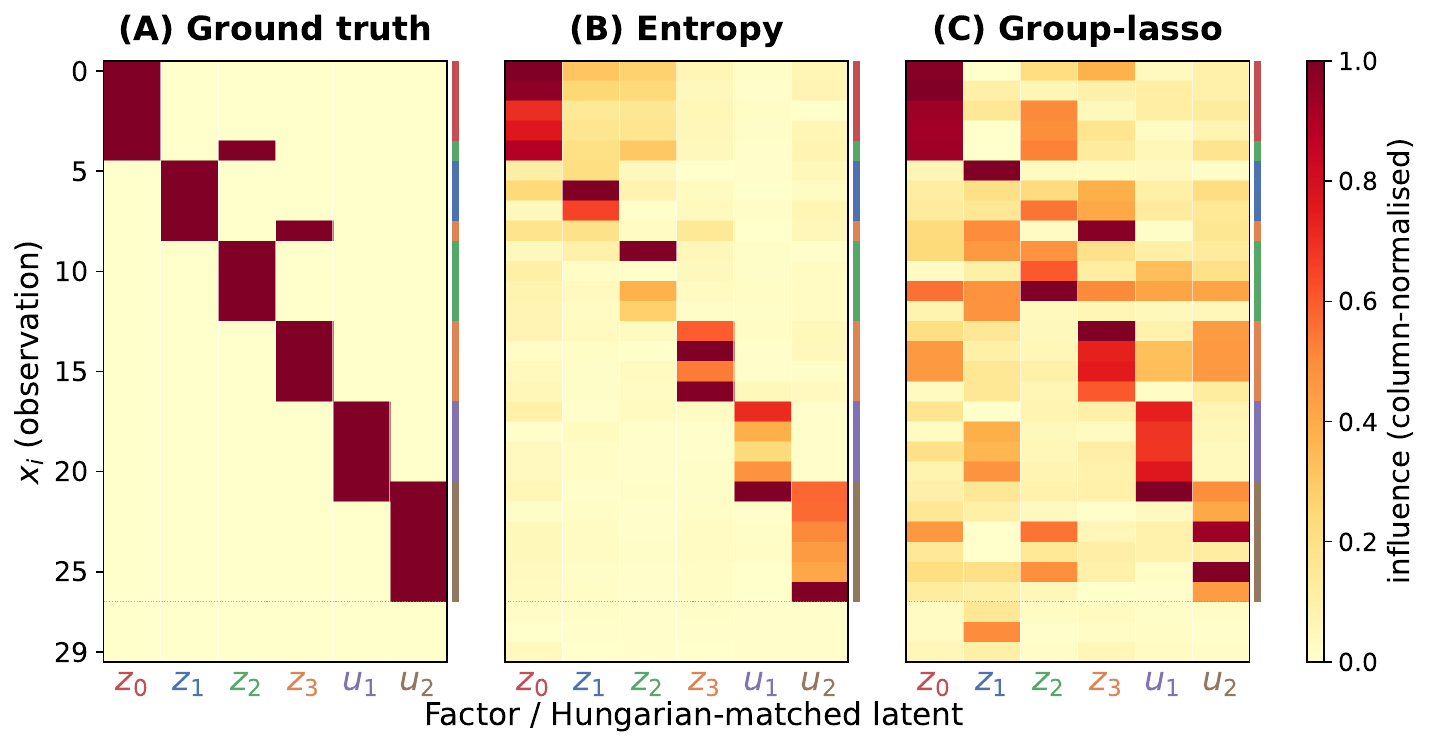}
\caption{Influence matrices on the synthetic benchmark, column-normalized. \textbf{(A)} Ground-truth support structure: each true factor maps to a disjoint block of observed variables, with channels $27$--$29$ as pure noise. \textbf{(B)} Entropy variant ($\lambda=50$): the diagonal block structure is recovered with concentrated mass on the true support and minimal off-support leakage. \textbf{(C)} Group-lasso variant at its best operating point ($\lambda=100$): the diagonal is recovered but every column carries non-trivial off-support mass, illustrating the magnitude-not-shape character of the penalty. Both panels use the same Hungarian alignment of learned latents to ground truth.}
\label{fig:entropy_vs_gl}
\end{figure}

Three patterns emerge. First, MCC is identical at $0.912 \pm 0.009$ across all variants. The choice of sparsity penalty does not affect Stage~1 latent identifiability; that signal comes entirely from the temporal prior, and the penalty only shapes the Stage~2 decoder. Second, at its best setting $\lambda=100$, group-lasso reaches $X_Z$@top3 of $0.911$, still below entropy's $0.978$ and with substantially higher seed variance ($\pm 0.083$ vs $\pm 0.044$). Figure~\ref{fig:entropy_vs_gl}(C) shows why: at this $\lambda$ the group-lasso decoder concentrates the diagonal entries correctly but leaves visible off-support mass in nearly every column, particularly on $z_2$, $z_3$, and $u_1$. This residual mass drives $X_Z$ variance through unstable top-$k$ selection. Entropy at $\lambda=50$ produces the visibly cleaner diagonal of Figure~\ref{fig:entropy_vs_gl}(B). Third, pushing group-lasso to $\lambda=1000$ collapses $X_Z$@top3 to $0.556 \pm 0.199$ even though MCC is unchanged. This is the magnitude-shrinkage failure mode anticipated earlier: with strong enough magnitude pressure, group-lasso drives column norms toward zero rather than concentrating mass within columns, and the resulting low-magnitude columns no longer carry a clean support signal. Notably, at $\lambda=1000$ the top-3 mass ($0.683$) matches entropy's ($0.685$), yet $X_Z$@top3 collapses: the column does concentrate on three channels, but those channels are no longer the true support. Once magnitude shrinks below the noise floor, top-$k$ selection picks up spurious entries. Entropy is invariant to this collapse by construction, since it only acts on normalized column shapes.

\textbf{Why the alive mask is not optional.}
Scale invariance has a flip side: entropy alone cannot kill a latent. Multiplying a column by $c \to 0^+$ leaves $H(p_{:,j})$ unchanged, so a near-dead latent looks the same to $R_{\text{ent}}$ as an active one. When $z_{\dim} > n_{\text{true}}$, this would force the entropy penalty to concentrate excess latents onto some channel, producing spurious modules anchored to whichever channel the optimizer happened to pick first.

The alive mask $\mathcal{J}_{\text{alive}} = \{j : \sum_i A_{ij} > 0.01 \cdot \max_k \sum_i A_{ik}\}$ supplies the missing on/off mechanism. Columns whose total magnitude has collapsed are excluded from the entropy sum, so the optimizer is free to leave them at zero rather than forced to spike them. The two mechanisms partition the work cleanly: the alive mask handles the binary alive/dead decision (driven by reconstruction, since a latent stays alive only if it carries reconstruction signal), and entropy handles the shape decision conditional on being alive. Appendix~\ref{app:synthetic:zdim} quantifies the consequences when this partition is stressed: at $z_{\dim} = 20$ the alive-mask threshold is near its operating limit, support recovery becomes seed-dependent, and $X_Z$@top3 drops from $0.978$ at $z_{\dim} = 12$ to $0.733$.

\textbf{Relation to population identifiability.}
Proposition~\ref{prop:no_fp} is unconditional on the sparsity penalty: at the population optimum the additive fit recovers each main effect $g_j$ exactly, so $\hat{\mathcal{S}}_j = \mathcal{S}_j$ holds without any sparsity regularization. Entropy is therefore not part of the identifiability machinery. Its role is finite-sample readability. With finite $N$, the estimated $\hat{f}_j$ carries small but non-zero residual mass on channels outside $\mathcal{S}_j$, due to a combination of optimization noise and finite-sample bias. Without an explicit concentration mechanism, the influence matrix $A$ contains many small entries everywhere, and any thresholding rule (top-$k$, ratio-gap, $\arg\max$) becomes seed-dependent. Entropy compresses this spurious mass back toward zero, making $\mathcal{S}_j$ readable from a finite-sample $A$. Proposition~\ref{prop:topk_support_consistency} formalizes this readout step: an estimated column within $\Delta_j / 2$ of the population column $A^*_{:,j}$ in $\ell_\infty$ exactly recovers $\mathcal{S}_j$ via top-$k_j$ selection. The entropy penalty is the mechanism that drives the estimated column close enough to its population counterpart for the influence-gap condition to bite.

\textbf{Theoretical status.}
A tight finite-sample support recovery guarantee for entropy-regularized sparse additive models, analogous to Theorem~\ref{thm:finite} for group-lasso, is not currently available. The obstruction is geometric: $H$ is concave in $p$ on the simplex, but the composition $A \mapsto p(A) \mapsto H(p(A))$ is neither convex nor concave in the unnormalized parameter $A$, so the convex-analysis tools that drive the group-lasso proof do not transfer. Lemma~\ref{lem:entropy_concentration} and Proposition~\ref{prop:topk_support_consistency} together give a partial conditional statement: if column estimation is accurate enough to satisfy the influence gap, support recovery follows. A fully unconditional finite-sample statement remains open. We treat MOSAIC's synthetic evaluations (Appendices~\ref{app:synthetic:nd} and~\ref{app:synthetic:interactions}) as empirical validation of the same qualitative trends predicted by Theorem~\ref{thm:finite}: higher interaction residuals and lower sample-to-dimension ratios both make support recovery harder.

\textbf{Summary.}
Entropy is not a surrogate for group-lasso. The two penalties target different objects: group-lasso regulates column magnitude and is indifferent to internal shape, while entropy regulates column shape and is indifferent to magnitude. The shape-magnitude decoupling is what allows entropy to coexist with reconstruction; the alive mask supplies the on/off decision that scale-invariance prevents entropy from making; and entropy's role in the overall pipeline is finite-sample readout of the population-identifiable supports that Proposition~\ref{prop:no_fp} guarantees.

\subsection{Relation to Direct Jacobian Sparsity}
\label{app:jacobian_sparsity}

An alternative to MOSAIC's additive decoder is to keep a dense decoder $g_\theta(\vz)$ and directly regularize its Jacobian $\partial g_\theta / \partial \vz$ toward sparsity, e.g., via $\ell_1$ on the averaged absolute Jacobian entries $\frac{1}{N}\sum_t |(\partial \hat{\vx}_t / \partial \hat{\vz}_t)_{ij}|$. This approach has two limitations that the additive architecture avoids. First, the Jacobian of a dense MLP is a local quantity: $\partial g_\theta(\vz) / \partial z_j$ depends on the evaluation point $\vz$, so a single factor $z_j$ can produce a non-zero Jacobian entry at some $\vz$ values and a near-zero entry at others. Aggregating over the dataset (e.g., by averaging) loses this local structure and may mask or fabricate support entries. The additive decoder's influence $A_{ij} = |f_j(+1)_i - f_j(-1)_i|$ is a global functional property of $f_j$, independent of where in the latent space it is evaluated. Second, Jacobian sparsity on a dense decoder does not structurally prevent cross-factor entanglement in the decoder: even if $(\partial g / \partial z_j)_i \approx 0$ at most points, the decoder can still route information from $z_j$ to $x_i$ through higher-order paths that the first-order Jacobian does not capture. The additive architecture $\hat{\vx} = \sum_j f_j(z_j) + \mathbf{b}$ eliminates cross-factor interaction by construction, so sparsity in $A$ directly corresponds to the ANOVA main-effect support of Proposition~\ref{prop:no_fp}. The trade-off is representational capacity: the additive decoder cannot fit interaction terms in the mixing function, which is why MOSAIC targets main effects rather than full supports (Section~\ref{sec:problem}).

\section{Robustness and Sensitivity Studies}

\subsection{Ablation and Robustness Studies}
\label{app:ablation}

We ablate the three core components on the synthetic benchmark (5 seeds, $z_{\dim}=12$). Stage~2 ablations share the Stage~1 encoder, so their MCC is identical by construction.

\begin{table}[h]
  \caption{Ablation (5 seeds, $z_{\dim}=12$). $X_Z$@top3 uses the $0.50$ concentration gate.}
  \label{tab:ablation}
  \centering
  \small
  \begin{tabular}{lccc}
    \toprule
    Config & MCC $\uparrow$ & $X_Z$@top3 (gate 0.50) $\uparrow$ & Top-3 mass $\uparrow$ \\
    \midrule
    \textbf{Full MOSAIC}                    & $\mathbf{0.912 \pm 0.009}$ & $\mathbf{0.978 \pm 0.044}$ & $0.685 \pm 0.028$ \\
    w/o sparsity ($\lambda=0$)              & $0.912 \pm 0.009$          & $0.378 \pm 0.259$          & $0.472 \pm 0.031$ \\
    w/o temporal ($\gamma=0$)               & $0.765 \pm 0.020$          & $0.400 \pm 0.181$          & $0.771 \pm 0.062$ \\
    w/o additive (dense)                    & $0.912 \pm 0.009$          & $0.000 \pm 0.000$          & $0.391 \pm 0.008$ \\
    \bottomrule
  \end{tabular}
\end{table}

The two capabilities decompose cleanly: latent identifiability (MCC) depends only on the temporal prior, while support recovery ($X_Z$) requires all three components together. The w/o temporal row is informative: its top-3 mass ($0.771$) exceeds full MOSAIC's ($0.685$), yet $X_Z$ is only $0.400$. Sparse columns alone are insufficient; they must align with correctly identified latents, the same failure mode TDRL exhibits on RNA (Section~\ref{sec:rna}).

\textbf{Robustness checks.} We further verify MOSAIC's behavior under varying conditions: latent dimensionality $z_{\dim}$ sweep (Appendix~\ref{app:synthetic:zdim}), regime-label noise up to $30\%$ flip rate (Appendix~\ref{app:synthetic:regime_noise}), unsupervised regime discovery via k-means (Appendix~\ref{app:synthetic:auto_regime}), and a case study showing how the regime definition determines the scientific question MOSAIC answers (Appendix~\ref{app:synthetic:regime_def}). MOSAIC degrades gracefully in all four settings.

\subsection{z\_dim Sensitivity}
\label{app:synthetic:zdim}

The synthetic benchmark uses $n_{\text{true}} = 6$ true latent factors. MOSAIC's main-text results use $z\_dim = 12$, chosen to be large enough to host all true factors with margin for dead latents. Here we sweep $z\_dim$ across under-, matched-, and over-capacity regimes to characterize MOSAIC's sensitivity to this hyperparameter.

\begin{table}[h]
\centering
\small
\caption{MOSAIC $z_{\dim}$ sensitivity on the synthetic benchmark ($n_{\text{true}}=6$, 5 seeds). We sweep representative under-, matched-, and over-capacity settings under the canonical Stage~2 configuration ($\lambda=50$, $H=4$, 80 epochs, $\gamma=0.02$). ``Pass'' denotes seeds with MCC $\ge 0.85$ and $X_Z$@top3 (gate 0.5) $\ge 0.85$.}
\label{tab:zdim_sensitivity}
\begin{tabular}{r c c c c}
\toprule
$z_{\dim}$ & MCC & $X_Z$@top3 (gate 0.5) & Mean top-3 mass & Seeds passing \\
\midrule
4  & $0.769 \pm 0.054$ & $0.133 \pm 0.130$ & $0.624 \pm 0.028$ & 0 / 5 \\
6  & $0.900 \pm 0.017$ & $0.889 \pm 0.099$ & $0.692 \pm 0.034$ & 4 / 5 \\
8  & $0.912 \pm 0.010$ & $0.844 \pm 0.151$ & $0.698 \pm 0.043$ & 5 / 5 \\
12 & $\mathbf{0.912 \pm 0.009}$ & $\mathbf{0.978 \pm 0.044}$ & $0.685 \pm 0.028$ & $\mathbf{5 / 5}$ \\
16 & $0.906 \pm 0.010$ & $0.733 \pm 0.133$ & $0.751 \pm 0.031$ & 3 / 5 \\
20 & $0.885 \pm 0.017$ & $0.733 \pm 0.249$ & $0.606 \pm 0.085$ & 3 / 5 \\
\bottomrule
\end{tabular}
\end{table}

Under-capacity ($z_{\dim}=4$) breaks both latent recovery and module localization: MCC drops to $0.769$ and no seed clears the joint threshold, consistent with the fact that four latent dimensions cannot represent the six true factors. At matched-to-moderately-over capacity ($z_{\dim}\in\{6,8,12\}$), MCC is stable at $0.90$--$0.91$ and the joint pass rate is high (4/5, 5/5, and 5/5 respectively). Both $z_{\dim}=8$ and $z_{\dim}=12$ achieve full 5/5 seed passing, but $z_{\dim}=12$ reaches the highest $X_Z$@top3 ($0.978$ vs.\ $0.844$) with lower seed-to-seed variance, which is why we adopt it as the main synthetic default.

For stronger over-capacity ($z_{\dim}\in\{16,20\}$), MCC degrades only mildly, indicating that the temporal representation remains largely stable, but $X_Z$@top3 drops to $0.733$ and the joint pass rate falls to 3/5. We observe a recurring failure mode behind this drop. With excess latent capacity, a subset of the extra factors carry a strong regime-association signal (high Cohen's $d$) but receive an influence column that is nearly uniform across the $D$ observed variables, with per-row entries close to $1/D$ and no readable top-$k$ support. These factors are precisely the kind that the alive mask of Appendix~\ref{app:entropy_vs_grouplasso} is designed to neutralize: their column magnitude has not collapsed, so they are not dropped from the entropy sum, yet they have nothing localized to say. Under the canonical $\lambda_{\max}=50$ schedule the entropy term is not strong enough to either concentrate them onto a coherent support or to push their magnitudes below the alive-mask threshold, so they coexist alongside the genuinely localized factors and inflate seed variance.

In practice this failure mode is benign. The diffuse high-$d$ factors are easy to identify by the same diagnostic that drives MOSAIC's interpretation protocol: their influence column has near-uniform mass and no readable top-$k$ support. We therefore recommend, when operating at $z_{\dim} \gg n_{\text{true}}$, ranking factors jointly by regime-association score \emph{and} top-3 mass, and skipping any factor whose top-3 mass is close to the uniform baseline $3/D$. The first regime-discriminative factor with a concentrated influence column carries the scientific signal; the diffuse high-$d$ factors do not introduce false support claims, they simply cannot be interpreted. The 3/5 pass rate at $z_{\dim}\in\{16,20\}$ reflects strict joint-threshold accounting under fixed $\lambda$ rather than an inability to recover the localized factor. Practitioners who do not know $n_{\text{true}}$ in advance should therefore use a moderately overcomplete latent dimension as the structural default, and apply the concentration filter at interpretation time as a safeguard against the over-capacity regime.

\subsection{Multi-Regime Extension}
\label{app:synthetic:multi_regime}

MOSAIC's interpretation protocol is contrastive. Given a scientifically specified regime contrast, it identifies the latent factor most associated with that contrast and interprets the factor through its sparse observation support. When data contain more than two regimes, the intended use is therefore one-vs-reference or pairwise analysis, depending on the scientific question. For example, a user may ask which module distinguishes a particular fault state from normal operation, or which structural module differs between two molecular regimes.

A joint $K$-way analysis is a distinct problem: it requires specifying what object should be recovered across all regimes simultaneously, such as a single latent per regime, a shared latent across multiple contrasts, or groups of latents with overlapping supports. Without this additional target definition, simultaneously explaining all $\binom{K}{2}$ pairwise differences is not the same task as the binary regime-association problem studied in this paper. Richer regime labels can provide stronger auxiliary variation for latent identifiability, but support localization additionally requires resolving assignment, sparsity, and possible shared-mechanism conflicts across contrasts.

Preliminary $K=8$ separable-regime experiments support this distinction: some one-vs-reference contrasts localize well, while others remain diffuse or are assigned to the wrong module under a single global sparsity coefficient. This suggests that robust joint multi-regime localization requires additional machinery, such as contrast-specific assignment, per-regime adaptive sparsity, latent-group selection, or interaction-aware decoders. We therefore leave joint $K$-way support localization to future work and use MOSAIC in this paper as a contrastive interpretation method for binary or user-specified one-vs-reference scientific questions.

\subsection{Regime Label Noise Robustness}
\label{app:synthetic:regime_noise}

Regime labels in real applications come from heuristics (SPIB assignments, thresholding, expert annotation) and may be noisy. We test MOSAIC's degradation by stratified random flips of the canonical binary synthetic benchmark's regime labels at rates $p \in \{0.05, 0.10, 0.20, 0.30\}$, holding the underlying data (dynamics, observations, ground-truth support) fixed. Training uses the noisy labels; evaluation uses the clean ground-truth regime-association identity and support.

\begin{table}[h]
  \caption{Regime label noise robustness (3 seeds per noise rate, z\_dim=12). Labels are perturbed by stratified random flipping at rate $p$ within each regime class. acc\_noisy is the logistic-probe accuracy measured against the noisy labels (the model saw); acc\_clean is the same probe measured against ground-truth clean labels.}
  \label{tab:regime_noise}
  \centering
  \small
  \begin{tabular}{ccccc}
    \toprule
    Noise rate $p$ & MCC & $X_Z$@top3 (gate 0.50) & acc\_noisy & acc\_clean \\
    \midrule
    0.00 & $0.912 \pm 0.010$ & $0.978 \pm 0.044$ & 0.995 & 0.995 \\
    0.05 & $0.891 \pm 0.012$ & $0.960 \pm 0.040$ & 0.942 & 0.992 \\
    0.10 & $0.870 \pm 0.031$ & $0.952 \pm 0.044$ & 0.896 & 0.992 \\
    0.20 & $0.856 \pm 0.045$ & $0.863 \pm 0.082$ & 0.812 & 0.942 \\
    0.30 & $0.808 \pm 0.062$ & $0.827 \pm 0.113$ & 0.732 & 0.892 \\
    \bottomrule
  \end{tabular}
\end{table}

MOSAIC degrades gracefully under label noise. At moderate noise ($p = 0.10$), MCC drops from $0.912$ to $0.870$ and $X_Z$ from $0.978$ to $0.952$. At heavy noise ($p = 0.30$), MCC drops to $0.808$ (11\% relative) and $X_Z$ to $0.827$ (15\% relative). For the practitioner, the 5--10\% noise regime typical of imperfect real-world regime labeling has minimal impact on MOSAIC's outputs.

\section{RNA Molecular Dynamics: Extended Analysis}
\label{app:rna}

\textbf{System description.}
The cUUCGg tetraloop (14 nucleotides: \texttt{GGCACUUCGGUGCC}) is a canonical RNA hairpin. We perform MD simulations of this hairpin at three temperatures: 345K (near melting, partial unfolding), 400K (above melting, frequent transitions), and 500K (rapid unfolding); the full simulation protocol is given below in the ``MD protocol'' paragraph. The trajectory inventory consists of 14 input trajectories: 13 curated replicas plus one auxiliary 500K unfolding trajectory. The 500K data does not dominate the disrupted class --- the disrupted frames come predominantly from the long 400K trajectories containing loop/stem disruption events --- but we do not claim temperature-invariant localization in this manuscript.

\textbf{MD protocol.}
All trajectories were generated in OpenMM under the AMBER14 force field with OL3 corrections for RNA, in TIP3P explicit water with neutralizing counterions in a periodic cubic box with at least 10 \AA{} of solvent padding. The initial structure was the cUUCGg tetraloop hairpin (PDB 2KOC), prepared once via energy minimization followed by NPT equilibration at 300\,K and 1\,bar; the post-equilibration checkpoint, together with the serialized system and topology, was reused as the launch state for every replica so that all inter-replica divergence reflects the integrator's stochastic stream rather than initialization noise. Production runs used the Langevin middle integrator ($\gamma = 1$\,ps$^{-1}$) with hydrogen-mass repartitioning ($\mathrm{hydrogenMass} = 3$\,amu) enabling a 4\,fs timestep, a Monte Carlo barostat at 1\,bar with update frequency 25 steps, particle-mesh Ewald electrostatics, and a 10 \AA{} nonbonded cutoff with HBond constraints. Each replica was assigned an independent integrator random seed and velocities re-drawn from the Maxwell--Boltzmann distribution at the target temperature. Trajectories spanned 100\,ns to 1\,$\mu$s in length and were generated at 345\,K, 400\,K, and 500\,K (14 trajectories total). Two trajectory streams were recorded per replica: an RNA-only DCD written every 10\,ps used for downstream feature extraction, and a full-system DCD written every 100\,ps retained as an archival reference. Compute resources were the Bridges2 cluster (PSC) and Google Colab Pro (A100 GPU).

\textbf{Feature extraction.}
For each of the 14 residues, we compute the mean and standard deviation of its minimum heavy-atom distance to every other residue at each frame. This yields $D = 28$ per-residue distance features (14 mean + 14 std). Distance features directly capture base-pairing module structure: paired residues (e.g., G1--C14) co-vary in distance space, while unpaired loop residues (U6--G9) fluctuate independently.

\textbf{Regime definition.}
Regimes are defined by the closing-pair native-contact fraction $Q_{\text{cp}}$. Candidate residue pairs are sequence-separated ($|i-j| \ge 3$) closest-heavy contacts that touch C5 or G10. Native contacts are selected from frame 0 of each trajectory, not from an external 2KOC reference structure. A pair is included as native if $d_{ab}(0) < 0.45$\,nm ($4.5$\,\AA{}), and it is present at frame $t$ iff $d_{ab}(t) < 1.2\,d_{ab}(0)$. Thus $Q_{\text{cp}}(t)$ is the mean of binary contact indicators over the native C5/G10-touching contact set; contacts are binary, not distance-weighted.

A 51-frame rolling mean is applied before thresholding to suppress thermal noise. Frames are labelled intact when $Q_{\text{cp}}>0.8$ (regime~0) and disrupted when $Q_{\text{cp}}<0.65$ (regime~1); intermediate frames and a $\pm 50$-frame buffer around label transitions are discarded. From stored labels, $24.1\%$ of frames are discarded after smoothing and transition-buffer removal; this combines intermediate-band frames and transition-buffer frames because $Q_{\text{cp}}$ itself was not stored in the diagnostics file. After lag-2 windowing (sequence length 3) and class balancing, the dataset contains $156{,}170$ windows ($78{,}085$ per regime, $N/D \approx 5{,}571$). For cross-regime validation, we also train with an outer-stem contact criterion $Q_{\text{stem}}$ using torsion-angle features ($D = 56$).

\textbf{Scope of RNA experiments.}
We have not run threshold-sensitivity, three-regime, or temperature-specific MOSAIC experiments in this manuscript. The reported RNA result is a pooled-temperature analysis; we do not claim temperature-invariant localization.

\textbf{Ground-truth support structure.}
Based on established RNA structural biology, we define three modules corresponding to the physical regions of the hairpin:

\begin{table}[h]
  \caption{Ground-truth support structure for the RNA tetraloop. Each module corresponds to a structural region of the hairpin.}
  \centering
  \small
  \resizebox{\textwidth}{!}{%
  \begin{tabular}{llcl}
    \toprule
    Module & Residues & Variables & Physical role \\
    \midrule
    Stem & G1, G2, C3, A4, U11, G12, C13, C14 & 16 & Base-paired stem (last to unfold) \\
    ClosingPair & C5, G10 & 4 & Stem-loop interface (commitment gate) \\
    Loop & U6, U7, C8, G9 & 8 & UUCG tetraloop (first to fluctuate) \\
    \bottomrule
  \end{tabular}%
  }
\end{table}

\textbf{Extended results.}
With $n = 4$ latent factors, MOSAIC achieves regime accuracy $0.912 \pm 0.027$ (5 seeds) and mean concentration $0.617 \pm 0.052$. The top regime-associated latent (highest Cohen's $d$, mean $d = 1.48 \pm 0.24$) localizes to Loop in 4 of 5 seeds and Stem in the remaining seed. Across all 5 seeds, the rank ordering of module influence is stable: Loop\,$>$\,Stem\,$>$\,ClosingPair, consistent with the known thermodynamic coupling of stem unpairing and loop melting in RNA hairpin unfolding~\citep{chen2013rna}.

\begin{table}[h]
  \caption{RNA regime-association summary for MOSAIC (5 seeds), moved from the main text for compactness.}
  \label{tab:rna_driver_app}
  \centering
  \small
  \begin{tabular}{lcc}
    \toprule
    Setting & Regime Acc $\uparrow$ & Top regime-associated module \\
    \midrule
    \textbf{MOSAIC (cp\_Q, 5 seeds)} & $\mathbf{0.912 \pm 0.027}$ & Loop (\#1 in 4/5) \\
    \bottomrule
  \end{tabular}
\end{table}

\paragraph{Seed-level diagnostic for the non-Loop-first run.}
The only non-Loop-first MOSAIC seed is not a failure to capture the regime signal: the selected regime-associated latent still has strong regime association (Cohen's $d=1.395$). Its support, however, is routed through stem and stem-junction residues rather than purely through Loop residues. In this seed, five of the top-8 non-Loop entries come from stem positions, consistent with stem--loop co-variation during closing-pair disruption. This suggests a boundary case of the discrete Loop/ClosingPair/Stem annotation rather than an unrelated localization: the transition signal remains physically coupled to the loop-opening process, but the sparse decoder assigns part of the support to correlated stem residues. Under the enhanced gap-adaptive support diagnostic, this seed retains Loop recall $0.75$, indicating that most Loop residues remain present in the learned support even though they do not dominate the top-8 precision metric.

\textbf{Baseline comparison.}
Table~\ref{tab:rna_baseline_app} compares MOSAIC to CRL, linear, and information-bottleneck baselines under the unified $X_Z$@selected-latent metric (precision of the top regime-associated factor's top-8 influence entries against Loop's 8 ground-truth variables). MOSAIC achieves $0.880 \pm 0.055$, exceeding every baseline by at least 0.13. All baselines except PCA still rank Loop as their top regime-associated module by Cohen's $d$, confirming that the Loop signal is strong, but only MOSAIC concentrates its influence cleanly on the ground-truth Loop support. PCA is a useful failure case: it ranks Stem first rather than Loop, consistent with variance-dominant behavior in unregularized linear decomposition, where high-variance structural coordinates dominate principal components even when they are not the main regime-shift factor. SlowVAE achieves the highest regime accuracy ($0.955$) but the lowest $X_Z$@selected-latent among CRL methods ($0.562$), illustrating the distinction between regime separability and module localization.

\begin{table}[h]
  \caption{RNA baseline comparison using the canonical $X_Z$@selected-latent metric (precision of the top regime-associated factor's top-8 influence entries against Loop's 8 variables). MOSAIC: 5-seed mean $\pm$ std; CRL baselines: 4 seeds (42/123/456/789); linear baselines and SPIB: single seed or 5-seed mean as noted.}
  \label{tab:rna_baseline_app}
  \centering
  \small
  \begin{tabular}{lcc}
    \toprule
    Method & Regime Acc $\uparrow$ & $X_Z$@selected (k=8) $\uparrow$ \\
    \midrule
    \textbf{MOSAIC (ours, 5s)} & $\mathbf{0.912 \pm 0.027}$ & $\mathbf{0.880 \pm 0.055}$ \\
    \midrule
    TDRL (4s) & $0.912 \pm 0.030$ & $0.625 \pm 0.102$ \\
    iVAE (4s) & $0.930 \pm 0.003$ & $0.594 \pm 0.157$ \\
    SlowVAE (4s) & $0.955 \pm 0.001$ & $0.562 \pm 0.217$ \\
    CtrlNS (4s) & $0.929 \pm 0.011$ & $0.562 \pm 0.161$ \\
    $\beta$-VAE (4s) & $0.858 \pm 0.028$ & $0.469 \pm 0.120$ \\
    \midrule
    Sparse PCA ($\alpha{=}10$) & 0.898 & 0.750 \\
    Sparse PCA ($\alpha{=}1$) & 0.902 & 0.625 \\
    FastICA & 0.901 & 0.625 \\
    PCA & 0.901 & 0.625 \\
    \midrule
    SPIB (5s) & $0.645 \pm 0.065$ & $0.400 \pm 0.137$ \\
    \bottomrule
  \end{tabular}
\end{table}

For cross-regime validation, retraining with $Q_{\text{stem}}$ labels shifts the dominant module from Loop to Stem (Table~\ref{tab:rna_cross_regime}). This supports a non-circular interpretation: the selected regime-associated latent changes its localized support with the physical regime definition, rather than always pointing to a fixed residue group.

\textbf{Regime design and scientific target.}
The cross-regime result also clarifies a practical use pattern: the regime definition determines the scientific question that the regime-association ranking answers. If the goal is to compare two steady states (State A vs.\ State B), regimes should be defined directly by those states. If the goal is to study transition dynamics (how State A moves toward State B), regimes can be defined as stable State A versus pre-transition State A. Under either choice, MOSAIC returns the factor with the strongest cross-regime shift under that task-specific definition.

\begin{table}[h]
  \caption{Cross-regime validation on RNA: the top regime-associated module shifts with the physical regime definition, tracking the structural layer adjacent to the contact set used for regime labeling.}
  \label{tab:rna_cross_regime}
  \centering
  \small
  \begin{tabular}{lll}
    \toprule
    Regime definition & Feature space & Localized module of selected latent \\
    \midrule
    Closing-pair contacts ($Q_{\text{cp}}$) & Per-residue distances ($D{=}28$) & Loop \\
    Outer-stem contacts ($Q_{\text{stem}}$) & Torsion angles ($D{=}56$) & Stem \\
    \bottomrule
  \end{tabular}
\end{table}

\section{Cross-Domain Details}
\label{app:crossdomain}

Each cross-domain dataset is preprocessed per Appendix~\ref{app:hyperparams}. Regime definitions are as follows. OMNI~\citep{king2005omni} uses a geomagnetic-activity threshold on the $Kp$ index. The tokamak disruption benchmark is synthetic, generated by an AR(1) process ($\rho = 0.95$, seed=42) over four latent sources (MHD instability, Density, Energy confinement, Plasma shape), mapped to 12 observables through a sparse mixing matrix; the pre-disruptive regime (regime 1) activates the MHD and Density sources with a ramp in the second half of each shot. Construction details are in Appendix~\ref{app:crossdomain:disruption}, and variable selection follows tokamak physics~\citep{devries2011survey}. Climate uses ENSO phase derived from NOAA ERSSTv5 sea surface temperature data~\citep{huang2017extended,huang2017ersstv5data}. TEP~\citep{downs1993plant} uses a binary fault-vs-normal label where fault instances are drawn from IDV-1, IDV-2, IDV-4, and IDV-5 (feed-ratio, B-composition, reactor-cooling, and condenser-cooling faults respectively). This multi-fault regime design tests whether MOSAIC can recover a shared downstream signature when multiple distinct upstream disturbances are labeled jointly.

Throughout these diagnostics, the selected latent is the one with largest distributional discrepancy across the specified regimes. This association-based selection should not be interpreted as an interventional causal effect of the latent on the transition.

\begin{table}[h]
  \caption{Additional numerical details for the cross-domain benchmarks (MOSAIC, 3 seeds per dataset). ``Top $d$'' reports the observed range of the leading Cohen's $d$ across seeds, and ``Expected regime-associated variables'' summarizes the variables on which the dominant factor loads.}
  \label{tab:crossdomain_details}
  \centering
  \small
  \resizebox{\textwidth}{!}{%
  \begin{tabular}{llccc}
    \toprule
    Dataset & Expected regime-associated variables & Regime Acc & Top $d$ & Consistent \\
    \midrule
    OMNI & Coupling-related solar-wind parameters & 0.87--0.98 & 1.54--1.98 & 3/3 \\
    Disruption & MHD/Density group ($q_{95}$, greenwald\_frac) & 0.91 & 1.96--2.08 & 3/3 \\
    Climate & Ni\~no 3.4 and tropical Pacific indices & 0.85--0.92 & 1.44--1.83 & 3/3 \\
    TEP & Stripper group (product composition XMEAS\_37--41) & $0.729 \pm 0.065$ & $0.743$ (range $0.554$--$0.862$) & 3/3 \\
    \bottomrule
  \end{tabular}%
  }
\end{table}

\begin{table}[h]
  \caption{Signal-strength separation on the synthetic tokamak benchmark. Both $q_{95}$ and locked-mode amplitude are generated from the same MHD latent source, so both receive the pre-disruptive ramp scaled by their respective mixing coefficients. However, locked-mode's smaller mixing coefficient produces a much weaker ramp signal (Cohen's $d = 0.24$ vs.\ $2.08$), causing MOSAIC to assign it to a separate, weakly regime-associated factor rather than grouping it with $q_{95}$. This is a limitation: MOSAIC's support recovery is signal-strength dependent, and weak-signal channels from the same source may be split into separate factors rather than recovered as a single module.}
  \label{tab:disruption_locked_mode_app}
  \centering
  \small
  \begin{tabular}{lccl}
    \toprule
    Latent factor & Cohen's $d$ & Top loaded variable & Role \\
    \midrule
    $z_3$ & 2.08 & $q_{95}$ & Regime-varying (strong signal) \\
    $z_0$ & 0.24 & Locked-mode amp. & Weakly regime-associated (weak signal) \\
    \bottomrule
  \end{tabular}
\end{table}

Across the four cross-domain benchmarks where the additive-sparse assumption is met (OMNI, Disruption, Climate, TEP), the dominant factor loads on the domain-expected variables in all three seeds per dataset, for 12 of 12 seeds in aggregate.

\textbf{Linear baseline comparison.}
Table~\ref{tab:crossdomain_linear} reports the available deterministic linear baselines on three cross-domain benchmarks (OMNI, Disruption, Climate). SparsePCA ($\alpha{=}1$) achieves domain-consistent regime-associated localization on all three reported datasets, matching MOSAIC on this coarse domain-knowledge criterion for the same datasets. This means the cross-domain result should be read as transfer evidence for MOSAIC's interpretation protocol rather than as a universal dominance claim; MOSAIC's added value is identifiable nonlinear temporal latents and cleaner separation of regime-associated from variance-dominant factors. PCA fails on OMNI and Climate, each identifying variance-dominant rather than regime-relevant variables.

On Disruption, all linear methods include locked-mode amplitude in their top-3 regime-associated variables alongside $q_{95}$. MOSAIC instead assigns locked-mode to a separate, weakly regime-associated factor (Table~\ref{tab:disruption_locked_mode_app}). Since both observables are generated from the same MHD source, including locked-mode alongside $q_{95}$ (as linear methods do) is in fact consistent with the ground-truth module structure. MOSAIC's separation reflects signal-strength thresholding rather than true module discovery in this case: the weak ramp signal on locked-mode falls below the concentration threshold, splitting a single ground-truth module across two learned factors.

\begin{table}[h]
  \caption{Available linear baselines on three cross-domain benchmarks (single seed, deterministic). ``Consistent'' uses the same domain-knowledge criteria as Table~\ref{tab:cross_domain}.}
  \label{tab:crossdomain_linear}
  \centering
  \small
  \begin{tabular}{llcc}
    \toprule
    Dataset & Method & Regime Acc & Consistent? \\
    \midrule
    OMNI & SparsePCA $\alpha{=}1$ & 0.992 & YES \\
    OMNI & SparsePCA $\alpha{=}10$ & 0.992 & YES \\
    OMNI & FastICA & 0.992 & YES \\
    OMNI & PCA & 0.992 & NO \\
    \midrule
    Disruption & SparsePCA $\alpha{=}1$ & 0.924 & YES \\
    Disruption & SparsePCA $\alpha{=}10$ & 0.924 & YES \\
    Disruption & FastICA & 0.925 & YES \\
    Disruption & PCA & 0.924 & YES \\
    \midrule
    Climate & SparsePCA $\alpha{=}1$ & 0.956 & YES \\
    Climate & SparsePCA $\alpha{=}10$ & 0.975 & YES \\
    Climate & FastICA & 0.936 & YES \\
    Climate & PCA & 0.934 & NO \\
    \bottomrule
  \end{tabular}
\end{table}

\textbf{SPIB on cross-domain.}
SPIB achieves domain-consistent regime-associated localization in 5 of 9 reported runs (Table~\ref{tab:crossdomain_spib}), compared to 9/9 for MOSAIC on the same three datasets (OMNI, Disruption, Climate). On OMNI, SPIB fails all three seeds; its early-stopping convergence criterion triggers within 2 to 4 epochs.

\begin{table}[h]
  \caption{SPIB on cross-domain benchmarks (3 seeds each).}
  \label{tab:crossdomain_spib}
  \centering
  \small
  \begin{tabular}{lcc}
    \toprule
    Dataset & Regime Acc & Consistent? \\
    \midrule
    OMNI & $0.696 \pm 0.105$ & 0/3 \\
    Disruption & $0.763 \pm 0.053$ & 3/3 \\
    Climate & $0.750 \pm 0.066$ & 2/3 \\
    \bottomrule
  \end{tabular}
\end{table}

\subsection{Synthetic Tokamak Disruption Benchmark: Construction Details}
\label{app:crossdomain:disruption}

The tokamak disruption benchmark is a physics-motivated synthetic dataset. We use synthetic construction because accessing per-shot time-series diagnostic data from operational tokamaks (JET, DIII-D, ASDEX Upgrade) requires facility-specific data access agreements not within the scope of this work; the ITPA disruption database contains aggregate shot-level summary statistics rather than the per-shot time series required for temporal representation learning. We instead construct a benchmark whose variable names and module structure follow established tokamak physics to preserve the scientific interpretation of group-mean winners.

\textbf{Generation.} Four latent sources (MHD instability, Density, Energy confinement, Plasma shape) evolve as AR(1) processes with autocorrelation $\rho = 0.95$ and innovation standard deviation $0.3$, seed $= 42$. A sparse block-structured mixing matrix $M \in \real^{4 \times 12}$ maps sources to observables: MHD source maps to $q_{95}$, $l_i$, locked\_mode; Density source maps to $\bar{n}$, greenwald\_frac; Energy source maps to $w_{\mathrm{MHD}}$, $\beta_N$, $p_{\mathrm{rad}}/p_{\mathrm{in}}$; Shape source maps to $I_p$, elongation, triangularity, $\delta$. Mixing coefficients are drawn uniformly from $[0.6, 1.5]$. Gaussian observation noise ($\sigma = 0.15$) is added.

\textbf{Regime labels.} In regime 0 (healthy), sources evolve under the AR(1) dynamics alone. In regime 1 (pre-disruptive), the MHD and Density sources receive an additive ramp in the second half of each shot: $s_t^{(\mathrm{MHD})} \mathrel{+}= 1.5 \cdot \mathrm{progress}_t$ and $s_t^{(\mathrm{Density})} \mathrel{+}= 0.8 \cdot \mathrm{progress}_t$, where $\mathrm{progress}_t = (t - T/2) / T$ for $t > T/2$ and zero otherwise. This reflects the physics-literature pattern in which MHD mode growth and density-limit approach precede disruption onset~\citep{devries2011survey}. MOSAIC's recovery of MHD or Density as the regime-associated group is consistent with this ground-truth construction.

\textbf{Sample size.} The processed synthetic tokamak disruption benchmark contains $N=4{,}600$ lagged windows with sequence length 3 and $D=12$ variables, balanced across healthy and pre-disruptive regimes ($2{,}300$ windows each). It was generated with $100$ trajectories per class, shot length $50$, and lag $2$; the reported $N/D=383$ is the sample-to-dimension ratio, not the number of samples per regime.

\textbf{Scope.} The variable names, module grouping, and pre-disruptive ramp dynamics are drawn from published tokamak physics. The sparse mixing structure matches MOSAIC's additive-sparse assumption more cleanly than raw tokamak diagnostics might; this benchmark therefore provides a controlled evaluation of MOSAIC on tokamak-relevant physical structure, not a substitute for deployment on real disruption prediction.

\subsection{TEP: Multi-Fault Regime}
\label{app:crossdomain:tep}

The Tennessee Eastman Process (TEP) is a well-studied chemical-process benchmark with 52 observed variables (41 measurements \texttt{XMEAS\_1}--\texttt{XMEAS\_41} and 11 manipulated variables \texttt{XMV\_1}--\texttt{XMV\_11}) grouped into Reactor, Condenser, Separator, Stripper, Compressor, and Feed/Global physical clusters. The normal-vs.-faulty regime split uses $N=4{,}176$ post-windowing samples ($D=52$, $N/D \approx 80$). We label regime 0 as normal operation (IDV-0) and regime 1 as jointly drawn from IDV-1 (stream-A feed ratio fault), IDV-2 (stream-B composition fault), IDV-4 (reactor cooling fault), and IDV-5 (condenser cooling fault). Under this multi-fault labeling, MOSAIC's top-1 influence variable on the most regime-associated latent is \texttt{XMEAS\_1} (stream-A feed flow, the IDV-1 primary fault variable) across all 3 seeds (seeds 0, 42, 123; selected latents $z_4$, $z_5$, $z_3$ respectively), and the per-group mean winner is the Stripper group (containing Stream-11 product composition variables \texttt{XMEAS\_37}--\texttt{XMEAS\_41}, which are the downstream response to all four fault types). Regime accuracy is $0.729 \pm 0.065$ and the top Cohen's $d$ ranges from $0.554$ to $0.862$ (mean $0.743$). The winner-vs-runner-up group-mean ratio is $1.07$, substantially smaller than Disruption's $1.38$--$2.13$: this reflects the multi-fault regime structure, in which no single fault's primary variable group dominates because the regime-associated latent must encode four distinct fault types jointly, and the Stream-11 product-composition cluster is the common downstream observable of all four faults. Under a single-fault regime (e.g., IDV-1 only), we would expect a sharper group-mean separation; the multi-fault benchmark represents a harder setting.

\section{Interaction Ratio Calibration on Real Data}
\label{app:rho_calibration}

Theorem~\ref{thm:finite} predicts that support recovery becomes more sample-demanding as the interaction ratio $\rho$ grows. To locate RNA and the four cross-domain benchmarks on this empirical scale, we estimate $\hat\rho$ using only quantities already computed during MOSAIC training.

\textbf{Estimator.}
Let $\hat{\mathbf{z}}_t$ denote the Stage~1 encoder posterior mean and $\hat g^{\mathrm{add}}(\hat{\mathbf{z}}_t) = \sum_j \hat f_j(\hat z_{t,j}) + \hat{\mathbf{b}}$ the Stage~2 additive-decoder reconstruction. Write $\hat{\sigma}_{\epsilon,\mathrm{ub}}^2 = \frac{1}{N} \sum_t \|\vx_t - \hat g^{\mathrm{dense}}(\hat{\mathbf{z}}_t)\|_2^2$ for the Stage~1 dense-decoder reconstruction MSE, which serves as an upper-bound proxy for the true observation-noise variance $\sigma_\epsilon^2$. The empirical interaction-residual variance is
\begin{equation}
\hat{\sigma}_r^2 \;=\; \frac{1}{N} \sum_{t=1}^{N} \bigl\| \vx_t - \hat g^{\mathrm{add}}(\hat{\mathbf{z}}_t) \bigr\|_2^2 \;-\; \hat{\sigma}_{\epsilon,\mathrm{ub}}^2,
\end{equation}
obtained by subtracting the noise-floor proxy from the additive-decoder residual. Writing $\hat{\sigma}_g^2 = \frac{1}{N}\sum_t \|\vx_t - \bar{\vx}\|_2^2 - \hat{\sigma}_{\epsilon,\mathrm{ub}}^2$ for the total signal variance (with $\bar{\vx} = \frac{1}{N}\sum_t \vx_t$), the interaction ratio is estimated by $\hat\rho = \hat{\sigma}_r^2 / \hat{\sigma}_g^2$. Because $\hat{\sigma}_{\epsilon,\mathrm{ub}}^2$ is an upper bound on true noise, subtracting it reduces both numerator and denominator. Since the numerator (interaction variance) is smaller than the denominator (total signal variance), the fractional reduction is larger in the numerator, so $\hat\rho$ is a lower bound on the true $\rho$. This makes the estimator optimistic about additivity: if anything, the true interaction contribution is larger than the reported $\hat\rho$, which means MOSAIC's empirical support recovery on these datasets is achieved under interaction levels at least as strong as those reported in Table~\ref{tab:rho_calibration}.

\textbf{Results.}
Table~\ref{tab:rho_calibration} reports $\hat\rho$ on the five evaluated domains, together with the input dimensionality $D$, sample-to-dimension ratio $N/D$, and the primary recovery metric for each dataset. For RNA, the recovery metric is $X_Z$@selected-latent from Appendix~\ref{app:rna}; for the four cross-domain benchmarks, ground-truth support is not available, so we report regime accuracy from Table~\ref{tab:cross_domain} where available.

\begin{table}[h]
  \caption{Estimated interaction ratio $\hat\rho$ on the five evaluated domains. Smaller $\hat\rho$ means the mixing is closer to exactly additive, and main-effect recovery is expected to be more accurate. TEP is a 3-seed estimate (mean $\pm$ std); other estimable domains are single-ckpt estimates. $^\dagger$Climate ($N/D = 13$) is marked ``---'' because its $N/D$ ratio is too low for the nonparametric additive-projection estimator to yield a reliable $\hat\rho$; MOSAIC's regime accuracy on this dataset ($0.89$) confirms it is not in a catastrophic-interaction regime.}
  \label{tab:rho_calibration}
  \centering
  \small
  \begin{tabular}{lrrcc}
    \toprule
    Dataset & $D$ & $N/D$ & $\hat\rho$ & Recovery metric \\
    \midrule
    RNA        & 28 & 5{,}571 & $0.161$ & $0.880 \pm 0.055$ ($X_Z$@selected-latent) \\
    OMNI       & 21 & 1{,}277 & $0.045$ & $0.940 \pm 0.054$ (Regime Acc) \\
    TEP        & 52 & 80      & $\mathbf{0.018 \pm 0.006}$ & $0.729 \pm 0.065$ (Regime Acc) \\
    Disruption & 12 & 383     & $0.003$ & $0.911 \pm 0.003$ (Regime Acc) \\
    Climate    & 47 & 13      & ---$^\dagger$ & $0.888 \pm 0.027$ (Regime Acc) \\
    \bottomrule
  \end{tabular}
\end{table}

\textbf{Calibration anchor.} On the synthetic benchmark, where the ground-truth mixing is additive ($\rho=0$ by construction), the same estimator returns $\hat\rho_{\text{synth}} = 0.024$ (median over knot/CV grid), establishing a finite-sample calibration floor. All real-domain estimates above sit at or modestly above this floor, consistent with the additive-mixing assumption being empirically defensible.

\textbf{Comparison with the synthetic sweep.}
On the controlled interaction sweep in Appendix~\ref{app:synthetic:interactions}, three of the four estimable evaluated domains (OMNI at $0.045$, TEP at $0.018$, Disruption at $0.003$) fall inside the calibrated range with $\hat\rho \leq 0.05$. RNA's $\hat\rho = 0.161$ lies beyond it, reflecting allosteric coupling richer than the synthetic benchmark's pairwise $\tanh$ construction can produce; MOSAIC still localizes the selected latent to Loop in 4 of 5 seeds, showing that the empirical method can remain useful outside the lowest-interaction calibrated range.

Figure~\ref{fig:rho_calibration} summarizes the joint behavior of $\hat\rho$, synthetic interaction strength $\alpha$, and MOSAIC's support-recovery and regime-association metrics, with the four estimable evaluated domains marked on the empirical degradation curve.

\begin{figure}[h]
\centering
\includegraphics[width=\textwidth]{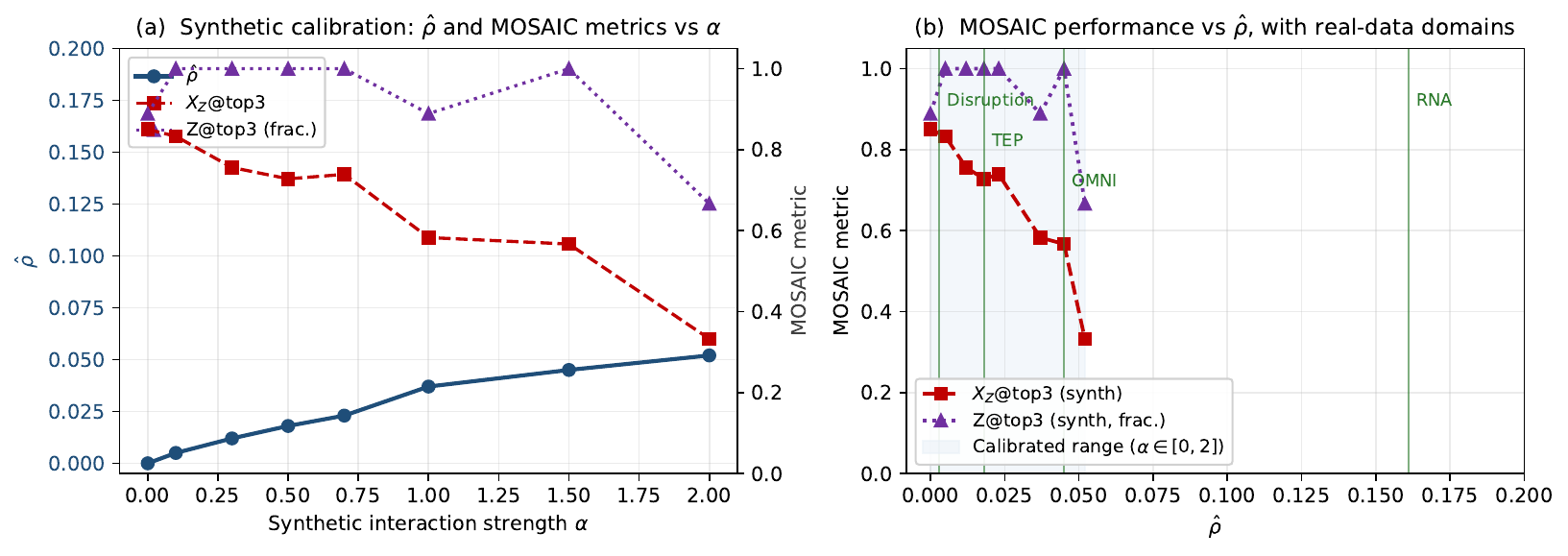}
\caption{Interaction-ratio calibration. (a) On the synthetic interaction sweep, $\hat{\rho}$ (left axis) grows with the interaction strength $\alpha$, while MOSAIC metrics (right axis) show that the stricter module-localization score $X_Z$@top3 degrades earlier than the latent-level regime-association score $Z$@top3. (b) Plotting MOSAIC metrics against $\hat{\rho}$ places the estimable evaluated domains on the empirical degradation curve: OMNI, TEP, and Disruption fall inside the synthetic calibrated range (shaded), while RNA ($\hat{\rho}=0.161$) lies beyond it but still yields the expected Loop-associated localization. Climate is omitted because its $N/D$ ratio is insufficient for the $\hat{\rho}$ estimator used here (Table~\ref{tab:rho_calibration}).}
\label{fig:rho_calibration}
\end{figure}

\section{Concentration Metric Details}
\label{app:concentration}

The concentration metric $\text{conc}(i) = \max_j A_{ij} / \sum_j A_{ij}$ measures how strongly each variable is dominated by a single latent factor. At the uniform baseline ($\nicefrac{1}{n}$), every factor contributes equally and no module structure is present. At concentration 1.0, the variable is perfectly assigned to one factor. On RNA, the mean concentration is $0.617 \pm 0.052$ (5 seeds), indicating that most variables are predominantly governed by a single latent factor. Consistent with Theorem~\ref{thm:finite}, concentration correlates with $N/D$ across domains: datasets with higher $N/D$ ratios achieve sharper support assignments.

\section{Concentration Gate Sensitivity}
\label{app:concentration_sensitivity}

$X_Z$@top3 in Table~\ref{tab:synthetic} uses a concentration gate at top-3 mass threshold $0.50$: a regime-discriminative latent contributes to the score only if its top-3 influence mass accounts for at least half of the total column mass. Ungated $X_Z$@top3 measures top-rank overlap with the ground-truth support, whereas the concentration gate tests whether a factor yields a localized module-level explanation; a dense decoder with partial top-rank overlap can still fail the gate because its influence is spread across many observed variables. TDRL illustrates this pattern: its ungated score is $0.519$, so there is nonzero overlap with the true support, but its top-3 mass averages only $\approx 25\%$ and the gate correctly reports diffuse mass rather than a zero-overlap failure. Table~\ref{tab:concentration_sensitivity} reports $X_Z$@top3 across a range of thresholds. The gate at $0.50$ is the permissive choice: MOSAIC's mean top-3 mass is $0.70$, so it satisfies the gate on all seeds. Gates at $0.60$ and $0.70$ penalize MOSAIC's own moderate seeds as well, reducing discriminative power. Without any gate, iVAE's dense decoder scores $0.863$ purely from argmax alignment, masking the absence of actual support concentration.

\begin{table}[h]
  \caption{$X_Z$@top3 sensitivity to the concentration-gate threshold $\mu_0$ on the synthetic benchmark. Gate at $0.50$ (used in Table~\ref{tab:synthetic}) cleanly separates sparse-decoder methods (MOSAIC, iVAE) from dense-decoder methods (TDRL, SlowVAE, $\beta$-VAE). MOSAIC is unaffected at $0.50$ ($X_Z = 0.978$), moderately penalized at $0.60$ ($X_Z = 0.844$), and more heavily penalized at $0.70$ ($X_Z = 0.511$) --- indicating its top-3 captures approximately $65\%$--$70\%$ of column mass on average. iVAE's decoder, despite argmax correctness under no gate, places only $\approx 50\%$ of mass in the top-3, causing sharp degradation.}
  \label{tab:concentration_sensitivity}
  \centering
  \small
  \begin{tabular}{lcccc}
    \toprule
    Method & gate $0.50$ & gate $0.60$ & gate $0.70$ & no gate \\
    \midrule
    MOSAIC   & $0.978$ & $0.844$ & $0.511$ & $0.978$ \\
    iVAE     & $0.556$ & $0.222$ & $0.000$ & $0.863$ \\
    TDRL     & $0.000$ & $0.000$ & $0.000$ & $0.519$ \\
    SlowVAE  & $0.000$ & $0.000$ & $0.000$ & $0.407$ \\
    $\beta$-VAE & $0.000$ & $0.000$ & $0.000$ & $0.407$ \\
    \bottomrule
  \end{tabular}
\end{table}

\section{SPIB Hyperparameter Sweep on the Synthetic Benchmark}
\label{app:spib_sweep}

SPIB underperforms MOSAIC and most other baselines on the synthetic benchmark (Table~\ref{tab:synthetic}), with regime accuracy $0.304 \pm 0.019$ and zero gated $X_Z$@top3. We swept SPIB hyperparameters to verify that this is a structural mismatch with the support-recovery objective rather than a poorly tuned baseline.

\textbf{Sweep design.} We swept $\beta \in \{0.001, 0.01, 0.1, 1.0\}$ and $K \in \{2, 4, 8\}$, the two hyperparameters that govern SPIB's information-bottleneck strength and the number of metastable states. For $K > 2$, we enabled SPIB's iterative label refinement (\texttt{refinements=5}) so that extra output slots could be populated from the binary ground-truth regime labels. All 12 configurations were trained on the synthetic benchmark at seed 42.

\begin{table}[h]
\centering
\small
\caption{SPIB hyperparameter sweep on the synthetic benchmark, seed 42. Regime accuracy is the logistic-probe accuracy on $\hat{\mathbf{z}}$; MCC, Z@top3, and $X_Z$@top3 are defined identically to Table~\ref{tab:synthetic}. ``Driver $\hat{z}^*$'' is the learned latent index with largest Cohen's $d$ under each configuration; these are learned latent indices, not ground-truth factor labels.}
\label{tab:spib_sweep}
\begin{tabular}{r r c c c c l c}
\toprule
$\beta$ & $K$ & Regime acc & MCC & Z@top3 & $X_Z$@top3 & Driver $z^*$ & Cohen's $d$ \\
\midrule
0.001 & 2 & 0.705 & 0.274 & 0/3 & 0.000 & $z_0$ & 0.538 \\
0.001 & 4 & \textbf{0.814} & \textbf{0.464} & 0/3 & 0.000 & $z_2$ & 1.361 \\
0.001 & 8 & 0.791 & 0.267 & 0/3 & 0.000 & $z_5$ & 1.030 \\
0.01  & 2 & 0.724 & 0.320 & 0/3 & 0.000 & $z_3$ & 0.788 \\
0.01  & 4 & 0.676 & 0.191 & 0/3 & 0.000 & $z_6$ & 0.647 \\
0.01  & 8 & 0.667 & 0.208 & 0/3 & 0.000 & $z_1$ & 0.485 \\
0.1   & 2 & 0.698 & 0.287 & 0/3 & 0.000 & $z_3$ & 0.602 \\
0.1   & 4 & 0.717 & 0.303 & 0/3 & 0.000 & $z_0$ & 0.776 \\
0.1   & 8 & 0.630 & 0.222 & 0/3 & 0.000 & $z_5$ & 0.477 \\
1.0   & 2 & 0.711 & 0.306 & 0/3 & 0.000 & $z_3$ & 0.683 \\
1.0   & 4 & 0.701 & 0.271 & 0/3 & 0.000 & $z_3$ & 0.644 \\
1.0   & 8 & 0.699 & 0.233 & 0/3 & 0.000 & $z_7$ & 0.619 \\
\bottomrule
\end{tabular}
\end{table}

\textbf{Two patterns emerge.} First, regime accuracy is moderately tunable: 7 of 12 configurations exceed $0.70$, and the best ($\beta=0.001$, $K=4$) reaches $0.814$. The Table~\ref{tab:synthetic} number of $0.304$ used SPIB's default hyperparameters, which fall outside this tunable range. Second, and more importantly, identification metrics are uniformly zero: every one of the 12 configurations achieves Z@top3 $=0/3$ and $X_Z$@top3 $=0.000$. Even the strongest-tuned configuration in regime accuracy, with Cohen's $d$ on its top latent reaching $1.361$, fails to place any of its top-3 latents on the ground-truth regime-varying factors $\{z_0, z_1, z_2\}$.

\textbf{Seed stability of the best configuration.} We re-ran the best configuration ($\beta=0.001$, $K=4$) at two additional seeds. Regime accuracy is reproducible at $0.843 \pm 0.028$ (seeds 0/42/123), confirming that SPIB is genuinely tunable on regime prediction. MCC is more variable ($0.315 \pm 0.113$), but Z@top3 and $X_Z$@top3 remain identically zero across all three seeds, with different ``winner'' latents ($z_7$, $z_2$, $z_0$) selected as most regime-discriminative in each run.

\begin{table}[h]
\centering
\small
\caption{Seed stability of the best SPIB configuration ($\beta=0.001$, $K=4$).}
\label{tab:spib_seeds}
\begin{tabular}{r c c c c l c}
\toprule
Seed & Regime acc & MCC & Z@top3 & $X_Z$@top3 & Driver $z^*$ & Cohen's $d$ \\
\midrule
0   & 0.881 & 0.199 & 0/3 & 0.000 & $z_7$ & 0.568 \\
42  & 0.814 & 0.464 & 0/3 & 0.000 & $z_2$ & 1.361 \\
123 & 0.834 & 0.281 & 0/3 & 0.000 & $z_0$ & 0.958 \\
\midrule
Mean $\pm$ std & $0.843 \pm 0.028$ & $0.315 \pm 0.113$ & 0/3 (all) & 0.000 (all) & --- & --- \\
\bottomrule
\end{tabular}
\end{table}

\textbf{Interpretation.} The dissociation between regime accuracy (tunable, reaches $\approx 0.84$) and identification metrics (uniformly zero across 36 runs total) is consistent with SPIB's design objective. SPIB's information bottleneck optimizes the latent representation for predicting future state under regime conditioning, which yields embeddings that distinguish regimes from spurious correlates without requiring the latents to align with the underlying causal factors. This is a different goal from MOSAIC's: MOSAIC requires the latents themselves to be identifiable causal coordinates, and the additive sparse decoder then ties each one to a localized observed-variable support. Regime accuracy alone does not imply driver recovery, and Table~\ref{tab:spib_sweep} shows the gap quantitatively: even the best-tuned SPIB run is $\approx 13$ points below MOSAIC on regime accuracy and entirely absent on Z@top3 and $X_Z$@top3.

This complementarity is consistent with how Section~\ref{sec:conclusion} positions SPIB: methods that automatically discover regime structure can supply the regime label that MOSAIC then interprets, but they do not themselves recover module-level causal supports.

\end{document}